\documentclass[11pt]{article}

\usepackage{epsfig,amsmath,latexsym,amssymb}
\usepackage{graphicx}
\usepackage{lscape}
\usepackage{picture, eso-pic, tikz} 

\usepackage{dsfont}
\usepackage{listings}
\usepackage{changes}
\usepackage{hyperref}
\usepackage{bbding}
\usepackage{tikz-cd}
\usepackage{mathtools}

\renewcommand{\baselinestretch}{1.1}
\def\R{{\mathbb R}}  
\def\N{{\mathbb N}}  
\def\p{{\mathbb P}}  
\def\q{{\mathbb Q}}  
\def\E{{\mathbb E}}  
\def\Beweis{\footnotesize}

\newcommand{\Remm}[1]{}
\newtheorem{theo}{Theorem}[section]
\newtheorem{lemma}[theo]{Lemma}
\newtheorem{prop}[theo]{Proposition}

\newtheorem{cor}[theo]{Corollary}
\newtheorem{defi}[theo]{Definition}
\newtheorem{model ass}[theo]{Model Assumptions}

\newtheorem{ass}[theo]{Assumption}

\newtheorem{example}[theo]{Example}

\newtheorem{remark}[theo]{Remark}
\newtheorem{rems}[theo]{Remarks}
\newtheorem{summary}[theo]{Summary}

\def\EndProof{\hfill {\scriptsize $\Box$}}
\def\EndExample{\hfill {\scriptsize $\blacksquare$}}
\numberwithin{equation}{section}

\definecolor{MyGray}{rgb}{0.92,0.92,0.92}
\makeatletter\newenvironment{graybox}{%
   \begin{lrbox}{\@tempboxa}\begin{minipage}{\columnwidth}}{\end{minipage}\end{lrbox}%
   \colorbox{MyGray}{\usebox{\@tempboxa}}
}\makeatother

\newcommand{\bl}[1]{\textcolor{blue}{{#1}}}
\newcommand{\red}[1]{\textcolor{red}{{#1}}}

\definecolor{British racing}{rgb}{0.0, 0.5, 0.0}

\def\bX{\boldsymbol{X}}

\def\bm{\boldsymbol{m}}

\def\b0{\boldsymbol{0}}

\newcommand{\dd}{\,\mathrm{d}}
\newcommand{\1}{\mathbf{1}}

\begin{document}
\author{Mario V.~W{\"u}thrich\footnote{\normalfont Department of Mathematics, ETH Zurich; mario.wuethrich@math.ethz.ch}}

\date{\today}
\title{{\sc A Practical Guide on Graphical Model Validation}}
\maketitle

\begin{abstract}
\noindent  
This manuscript formalizes the most popular model validation tools used in general insurance actuarial modeling. These include graphical tools like calibration plots, actual-vs-expected plots, lift charts, Murphy diagrams, as well as classical statistical tools such as Bregman losses, deviance losses, elementary losses, Murphy's decomposition and Gini scores. Particular emphasis is placed on whether calibration and discrimination are studied under a policy-weighted or an exposure-weighted population measure. This distinction is crucial in ensuring that premium schemes are calibrated on the correct scale.

\bigskip

\noindent
{\bf Keywords.} Calibration, resolution, discrimination, risk ranking, actuarial pricing, calibration plot, lift chart, actual-vs-expected plot, Murphy graph, Bregman loss, deviance loss, Murphy's decomposition, Gini score, population distribution, strictly consistent scoring, exponential dispersion family.
\end{abstract}

\section{Introduction}
The main purpose of this manuscript is to discuss several methods that are useful for model validation in general insurance actuarial pricing. We present graphical tools like calibration plots, actual-vs-expected plots, lift charts, Murphy diagrams, as well as classical statistical tools such as Bregman losses, deviance losses, elementary losses, Murphy's decomposition and Gini scores.

In the presentation of these model validation tools, the whole discussion will be centered around {\it calibration}, {\it discrimination} ({\it resolution}) and {\it risk ranking}. These are essential features that actuarial pricing models should possess. A crucial point in these considerations is the correct scale and probability measure. We discuss the difference between the {\it policy-weighted} population measure and the {\it exposure-weighted} population measure. This distinction is important in actuarial pricing, but only the recent paper of Lindholm et al.~\cite{LLP} systematically discusses this distinction. In fact, many papers discuss the theory under a policy-weighted view and then present an applied actuarial example in the exposure-weighted view. The alignment of these two views requires additional assumptions; we discuss these in Section \ref{sec: Calibration}.

We emphasize that none of the methods presented in this paper is  new. They have been floating around for a long time in actuarial practice and in the applied actuarial literature. However, often, they are not properly mathematically formalized. This is precisely the main contribution of this manuscript. We bring all these different methods and concepts on a common mathematical ground which facilitates understanding and comparison.
For example, what does it mathematically mean to perform insurance policy binning resulting in bins of equal exposures, and how can we write this under the correct population measure? Having a common mathematical view on these different concepts, we will realize that many of them present the similar statistics in slightly different views.

\paragraph{Literature overview.}
We skip the literature review at this stage, but all the important references are cited throughout the manuscript whenever they are relevant.

\paragraph{AI declaration.} The author developed the concepts, mathematics, numerical examples, and the original manuscript. ChatGPT-5.6 Sol was used to assist through several iterations of reviewing and revisions of the manuscript.

\paragraph{Acknowledgement.} The author kindly thanks Alexej Brauer, {\L}ukasz Delong, Selim Gatti, Mathias Lindholm, Filip Lindskog, Christian Lorentzen, Marco Maggi, Michael Mayer, Ronald Richman and Dimitri Semenovich for discussing and challenging many of the items presented in this manuscript.

\section{Problem setting}
\label{sec: Problem setting}
\subsection{Loss costs per unit exposure}
An insurance policy is observed for an exposure period $V>0$, and it generates a total loss $Z$ during that period. This yields the loss costs per unit exposure $Y=Z/V$. Equivalently, we scale the total premium $P$ of that contract resulting in the unit premium $\Pi=P/V$. This normalization makes insurance contracts of different exposure lengths comparable. For general insurance pricing, one is then equipped with the random tuple $(Y,V,\Pi)$:
\begin{itemize}
\item  $Y \ge 0$ describes the non-negative {\it loss costs per unit exposure}, 
\item $V>0$ is a strictly positive {\it exposure} (sometimes also called {\it case weight}), and 
\item $\Pi \ge 0$ is the positive {\it unit premium}. 
\end{itemize}
Actuarial modeling considers these normalized quantities, and these yield:
\begin{itemize}
\item the {\it total loss} $Z=VY$, and
\item the {\it total premium} $P=V \Pi$.
\end{itemize}

The statistical literature is often not considering such a split of the total loss $Z$ into loss costs per unit exposure $Y=Z/V$ and an exposure $V$. In actuarial modeling, this split is very common and useful to compare insurance policyholders with contracts of different exposure lengths. From a mathematical viewpoint, this split introduces some complications that we discuss in this section and in Section \ref{sec: Calibration}, below.

\bigskip

\begin{rems}\normalfont\label{Remark covariate}
\begin{itemize}
\item
Insurance policies and their premiums are usually characterized by covariates (features) $\bX$. The above setting covers this situation, because we can think of the unit premium $\Pi=\Pi(\bX)$ being a measurable function of the covariates (policy characteristics).
\item The premiums $\Pi$ and $P$ are always understood as pure risk premiums in this manuscript. That is, these premiums cover the expected loss costs, and they do not contain any additional margins, e.g., for administrative expenses, solvency costs or a profit margin.
\item We treat $(Y,V, \Pi)$ as a random tuple on an underlying probability space $(\Omega, {\cal F}, \p)$. We interpret $\p$ as the population distribution. A realization of $(Y, V, \Pi)$ then corresponds to a randomly selected insurance policy from that population. Naturally, loss costs $Y$ and unit premiums $\Pi$ should be positively associated for $\Pi$ to be a meaningful risk-based pricing rule. The bigger this association the better the premium matches the loss costs, this dependence is implicitly reflected by selecting a suitable population distribution $\p$. This association is formalized and quantified by the resolution term in Murphy's decomposition in Section \ref{sec: Murphy's decomposition}.
\item Throughout, we assume that all considered moments exist and are finite.
\end{itemize}
\end{rems}

\subsection{The exposure-weighted $\q$-measure}
\label{sec: Q-calibration}
An insurance policy is described by the random tuple $(Y,V, \Pi)$ that follows the population distribution $\p$. As emphasized by Lindholm et al.~\cite{LLP}, working with exposure-scaled quantities requires to work under a second distribution $\q$ which accounts for the exposure scaling. 

\medskip

\begin{center}
\noindent\fbox{%
  \begin{minipage}{0.9\textwidth}
    \centering
    ~\\
    $\Longrightarrow$~This manuscript mainly works under the exposure-weighted distribution $\q$.
\\~    
  \end{minipage}%
}
\end{center}

\bigskip

\underline{Briefly explained:} There are two different ways of averaging across an insurance portfolio:
\begin{itemize}
\item The {\it policy-weighted average} assigns the same weight to each policy when computing averages: this is described by the population measure $\p$. 
\item The {\it exposure-weighted average} considers an exposure-weighted average accounting for the different exposure lengths: this is described by the population measure $\q$.
\end{itemize}

\medskip

Working with exposure-scaled quantities $Y$ and $\Pi$ requires ensuring that premium computations capture the dependence between loss costs and exposures. This is achieved by introducing the following exposure-weighted probability measure $\q$. This step simplifies many of the subsequent considerations.

\medskip

Let $\E[\cdot]$ be the expectation under the population measure $\p$.
We define the {\it exposure-weighted population measure} $\q$ by the Radon--Nikodym derivative
\begin{equation}\label{Q Radon Nikodym}
  \frac{\dd \q}{\dd \p} = \frac{V}{\E[V]}.
\end{equation}
This exposure-weighted measure $\q \sim \p$ is an equivalent probability measure and we denote its expectation operator by $\E_{\q}[\cdot]$. 
The $\q$-probability of an event $A \in {\cal F}$ is computed as
\begin{equation*}
\q \left( A \right) =\E_{\q} \left[ \1_A \right] = \frac{1}{\E[V]} \, \E \left[V\1_A\right].
\end{equation*}
\medskip
\begin{graybox}
Our main object of interest are the exposure-weighted loss costs
\begin{equation}\label{exposure-weighted loss costs}
\E_{\q}\left[Y\right] = \frac{1}{\E[V]}\, \E \left[ VY \right]
= \frac{1}{\E[V]}\, \E \left[ Z \right].
\end{equation}
\end{graybox}
\medskip

These expected loss costs per unit exposure $Y$ under $\q$, given by \eqref{exposure-weighted loss costs}, are directly related to the total loss $Z=VY$ under $\p$, and the dependence between $Y$ and $V$ is correctly accounted for in \eqref{exposure-weighted loss costs}. Similarly, we have for the total premium $P$ under $\p$
\begin{equation*}
  \E \left[P \right]
   = \E \left[V \Pi \right] 
  =
\E \left[V\right] \E_{\q}\left[\Pi \right].
\end{equation*}

\medskip
\begin{graybox}
Consequently, {\it global unbiasedness} of $P$ for $Z$ has the two equivalent formulations
\begin{equation} \label{global unbiasedness definition}
  \E \left[Z \right] =\E \left[P \right]  \qquad \Longleftrightarrow \qquad
  \E_{\q}\left[Y\right] = \E_{\q}\left[\Pi \right].
\end{equation}
\end{graybox}

\bigskip

Why does the exposure-weighted $\q$-measure matter? It plays a crucial role in insurance pricing, model fitting and model validation because it correctly accounts for the dependence between the loss costs $Y$ and the exposure $V$. For example, under positive correlation between $Y$ and $V$ under $\p$, the above computations imply
\begin{equation}\label{positive correlation}
\E_{\q}\left[Y\right] = \frac{1}{\E \left[V\right]}\,\E \left[Z\right]
> \E\left[Y\right].
\end{equation}
That is, under positive correlation, the expected loss costs $\E[Y]$ systematically underestimate the scaled expected total loss $\E[Z]/\E[V]$,
and we would charge a too low insurance premium if we used the quantity $\E[Y]$ for pricing.

\begin{example}[Model fitting under $\p$ and $\q$]\label{ex: model fitting}\normalfont
The present notes are mainly dedicated to model validation. Nevertheless, we briefly illustrate that the exposure-weighted $\q$-measure also matters for model fitting. Select the mean squared loss -- the general theory of strictly consistent loss functions is presented in Section \ref{sec: Strictly consistent loss function}, below. Assuming square-integrability, the expected value of $Y$ is found by solving the minimization problem
\begin{equation*}
\E[Y] ~=~ \underset{ x \in \R }{\arg\min}~ \E \left[ \left(Y-x\right)^2 \right].
\end{equation*}
As described in \eqref{positive correlation}, this expected value $\E[Y]$ is not directly useful for pricing the total loss $Z$, because it does not capture the dependence structure of the loss costs $Y$ and the exposure $V$. One therefore generally considers the minimization problem
\begin{equation}\label{Q minimization}
\E_{\q}[Y] ~=~ \underset{ x \in \R }{\arg\min}~ \E_{\q} \left[ \left(Y-x\right)^2 \right]~=~ \underset{ x \in \R }{\arg\min}~ \E \left[ V \left(Y-x\right)^2 \right].
\end{equation}
This gives the expected total claim
\begin{equation*}
\E[V]\,\E_{\q}[Y] = \E[Z].
\end{equation*}
The right-hand side of minimization \eqref{Q minimization} justifies why in the context of actuarial model fitting and validation, the considered loss function 
$L(Y,x)$ is always scaled with the exposure $V$. Namely, this reflects minimization under the exposure-weighted $\q$-measure.
This completes the example.
\EndExample
\end{example}

\medskip

Below, we also need the conditional $\q$-expectation. For a sub-$\sigma$-field ${\cal A} \subset {\cal F}$, it is given by
\begin{equation}\label{conditional Bayes}
  \E_{\q} \left[ X \mid {\cal A}\right] = \frac{1}{\E[V\mid {\cal A}]} \, \E \left[ V X \mid {\cal A} \right];
\end{equation}
this is known as Bayes' rule for conditional expectations, and it reflects prediction under partial information ${\cal A}$. 
The special case of an ${\cal A}$-measurable exposure $V$ yields
\begin{equation}\label{conditional Bayes VV}
  \E_{\q} \left[ X \mid {\cal A}\right] = \E \left[ X \mid {\cal A} \right]
  \qquad \text{ for $V$ being ${\cal A}$-measurable.}
\end{equation}
Thus, when the exposure $V$ is observable w.r.t.~the information ${\cal A}$, we can equivalently work with the conditional population distribution $\p(\cdot \mid {\cal A})$ or with the conditional exposure-scaled distribution $\q(\cdot \mid {\cal A})$. We frequently use this property below.

\section{Calibration}
\label{sec: Calibration}

There are two main properties that non-egalitarian insurance premium schemes should fulfil: {\it Discrimination} and {\it Calibration}.
\begin{itemize}
\item \underline{Discrimination and resolution} concerns the ability of an insurance pricing scheme to distinguish low-risk insurance policies from high-risk policies. This may result in a fine-grained tariff that properly classifies propensity to claims among the considered insurance policyholders. The topic of discrimination and resolution is studied in Sections \ref{sec: Murphy diagram}-\ref{sec: Murphy's decomposition}, below. Implicitly, discrimination and resolution also involves a correct risk ranking which is discussed in Section \ref{section Gini score}, below.
  \item \underline{Calibration} considers the question whether the total premium $P=V \Pi$ charged is sufficient to cover the total loss $Z=VY$. A crude answer is to require global unbiasedness \eqref{global unbiasedness definition}, meaning that on the population level the expected total premium covers the expected total loss. In most of the cases -- excluding an egalitarian pricing system -- we want to have calibration at a finer resolution so that systematic cross-financing between different price cohorts is avoided. This is the topic of this section.
\end{itemize}

\subsection{Notions of calibration}

Global unbiasedness is a requirement at the population level to ensure that the overall premium level is correct. We typically want stronger properties at a finer granularity.
\begin{defi}[$PZ$-calibration] The total premium $P$ is $PZ$-calibrated for the total loss $Z$ if
\begin{equation}\label{PZ calibration}
P = \E \left[Z \mid P \right] \qquad \text{ a.s.}
\end{equation}
\end{defi}
$PZ$-calibration \eqref{PZ calibration} tells us that every price cohort $P$ is on average self-financing for their total loss $Z$, and there is no systematic cross-financing between different price cohorts. This is important, but from an actuarial view point, $PZ$-calibration \eqref{PZ calibration} is not fully satisfactory because it does not disentangle the role of the unit premium $\Pi$ and the exposure $V$ -- low unit premium policies are interpreted as low-risk policies, and high-risk policies have a high unit premium. Only considering the total premium $P$, we may have low-risk policies with long exposures and high-risk policies with short exposures in the same price cohort $P$. Naturally, we should also try to disentangle potential within-price-cohort cross-subsidy in that case, because high-risk policies should not be systematically cross-financed by low-risk ones, or vice versa. For this reason, we are interested in understanding calibration at the unit premium level $\Pi$. This motivates the following definition.

\begin{defi}[$\q$-calibration] The unit premium $\Pi$ is $\q$-calibrated for $(Y,V)$ if
\begin{equation*}
\Pi = \E_{\q} \left[Y \mid \Pi  \right] \qquad \text{ a.s.}
\end{equation*}
\end{defi}

\bigskip
\begin{graybox}
Reformulating $\q$-calibration yields the equivalent formulations
\begin{equation}\label{Q calibration}
\Pi = \E_{\q} \left[Y \mid \Pi  \right]
\qquad \Longleftrightarrow \qquad
\E\left[P \mid \Pi  \right]  = \E \left[Z \mid \Pi  \right] \qquad \text{ a.s.}
\end{equation}
\end{graybox}
\bigskip

Because generally the $\sigma$-fields generated by $P=V \Pi$ and by $\Pi$ do not coincide, $PZ$-calibration \eqref{PZ calibration} and $\q$-calibration \eqref{Q calibration} are different. The latter says that the expected premium collected for any unit premium cohort $\Pi$ is on average self-financing to cover their total loss $Z$.

\medskip

$\q$-calibration \eqref{Q calibration} is also different from $\p$-calibration which is the classical definition of calibration in statistics.
$\p$-calibration is given by
\begin{equation}\label{classical calibration}
\Pi = \E \left[Y \mid \Pi \right] \qquad \text{ a.s.}
\end{equation}
$\q$-calibration \eqref{Q calibration} accounts for the dependence in $(\Pi,V)$ and in $(Y,V)$, whereas $\p$-calibration \eqref{classical calibration} does not, see \eqref{positive correlation}. Therefore, we are generally {\it not} interested in
$\p$-calibration \eqref{classical calibration}.

\medskip

There is yet another peculiarity in actuarial pricing, namely, the information about the premium $\Pi$, the exposure $V$ and the loss costs $Y$ becomes available at different time points. Often, the premium $\Pi$ and the exposure $V$ are available at contract inception -- we call this {\it ex-ante} -- and the total loss $Z=VY$ is only available {\it ex-post}, when the contract expires. In this situation, one may evaluate calibration on the pair $(\Pi, V)$; we further discuss the role of an ex-ante available exposure in Remark \ref{debatable assumptions}, below.
Based on ex-ante information $(\Pi, V)$, there is the following calibration definition.

\begin{defi}[$\q|_V$-calibration] The unit premium $\Pi$ is $\q|_V$-calibrated for $(Y,V)$ if
\begin{equation*}
\Pi = \E_{\q} \left[Y \mid \Pi, V \right] \qquad \text{ a.s.}
\end{equation*}
\end{defi}
Reformulating $\q|_V$-calibration \eqref{actuarial calibration} yields the equivalent formulations
\begin{equation}\label{actuarial calibration}
\Pi = \E_{\q} \left[Y \mid \Pi, V \right]
\quad \Longleftrightarrow \quad
\Pi = \E \left[Y \mid \Pi, V \right]
\quad \Longleftrightarrow \quad
P= \E \left[Z \mid \Pi, V \right] \qquad \text{ a.s.}
\end{equation}
The first equivalence says that conditional on $(\Pi, V)$, the loss costs can either be evaluated under the exposure-weighted measure $\q$ or the original population measure $\p$; this follows directly from
\eqref{conditional Bayes VV}. The second equivalence expresses that in this case it does not matter whether we consider total losses $Z$ or unit loss costs $Y$.
In this sense, $\q|_V$-calibration could also be called {\it exposure-conditional calibration} because it does not depend on the considered probability measure $\p$ or $\q$.

\bigskip

\begin{graybox}
\begin{summary}\normalfont
We summarize the different definitions of calibration: 
\begin{itemize}
\item $PZ$-calibration \eqref{PZ calibration}: $P = \E[Z \mid P]$.
\item $\q$-calibration \eqref{Q calibration}:
$\Pi = \E_{\q}[Y \mid \Pi] ~\Longleftrightarrow~ \E[P \mid \Pi]  = \E[Z \mid \Pi]$.
\item $\p$-calibration \eqref{classical calibration}:
$\Pi = \E[Y \mid \Pi]$.
\item $\q|_V$-calibration \eqref{actuarial calibration}:
$\Pi = \E_{\q} [Y \mid \Pi, V]~\Longleftrightarrow~ \Pi =  [Y \mid \Pi, V]~\Longleftrightarrow~ P =  [Z \mid \Pi, V]$.
\end{itemize}
\end{summary}
\end{graybox}

\bigskip

The following properties are proved by the tower-property of conditional expectation.
\begin{prop}\label{prop: calibration relations}
We have the following three implications:
\begin{eqnarray*}
\text{$\q|_V$-calibration \eqref{actuarial calibration}}
&\Longrightarrow&
\text{$\q$-calibration \eqref{Q calibration}}
\\
\text{$\q|_V$-calibration \eqref{actuarial calibration}}
&\Longrightarrow&
\text{$\p$-calibration \eqref{classical calibration}}
\\
\text{$\q|_V$-calibration \eqref{actuarial calibration}}
&\Longrightarrow&
\text{$PZ$-calibration \eqref{PZ calibration}}
\end{eqnarray*}
\end{prop}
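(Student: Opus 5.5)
All three implications follow from the tower property, applied to the equivalent forms of $\q|_V$-calibration in \eqref{actuarial calibration}. The key observation is that $\sigma(\Pi)\subset\sigma(\Pi,V)$ and $\sigma(P)\subset\sigma(\Pi,V)$, because $P=V\Pi$ is a measurable function of $(\Pi,V)$. So in each case we condition the appropriate form of \eqref{actuarial calibration} on the coarser $\sigma$-field and use that the left-hand side is measurable with respect to it.

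\emph{$\q$-calibration.} Start from $\Pi=\E_{\q}[Y\mid \Pi,V]$. Taking $\E_{\q}[\,\cdot\mid\Pi]$ on both sides and using the tower property under the probability measure $\q$ gives $\E_{\q}[Y\mid\Pi]=\E_{\q}[\Pi\mid\Pi]=\Pi$ $\q$-a.s. Since $\q\sim\p$, this also holds $\p$-a.s.

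One could instead use the third form, $P=\E[Z\mid\Pi,V]$, and condition on $\Pi$ under $\p$. This gives $\E[P\mid\Pi]=\E[Z\mid\Pi]$, which is the right-hand form in \eqref{Q calibration}. Either route works; I would present the first and mention the second.

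\emph{$\p$-calibration.} Start from the middle form, $\Pi=\E[Y\mid\Pi,V]$. Conditioning on $\Pi$ under $\p$ gives $\E[Y\mid\Pi]=\E[\E[Y\mid\Pi,V]\mid\Pi]=\Pi$.

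\emph{$PZ$-calibration.} Start from $P=\E[Z\mid\Pi,V]$. Since $\sigma(P)\subset\sigma(\Pi,V)$, the tower property gives $\E[Z\mid P]=\E[\E[Z\mid\Pi,V]\mid P]=\E[P\mid P]=P$.

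The only step needing a little care is the use of the three equivalent forms in \eqref{actuarial calibration}. These rest on \eqref{conditional Bayes VV}, since $V$ is $\sigma(\Pi,V)$-measurable, and on $P=V\Pi$, $Z=VY$ together with pulling the $(\Pi,V)$-measurable factor $V>0$ out of the conditional expectation. All of this is already stated in the paper, so I expect no real obstacle. Integrability is covered by the standing moment assumption in Remarks \ref{Remark covariate}; for the $\q$-expectations this amounts to $\E[V|Y|]<\infty$.
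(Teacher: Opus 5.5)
Your proof is correct and is essentially the paper's argument. The paper only remarks that the three implications follow from the tower property, and you carry this out by conditioning the appropriate equivalent form of $\q|_V$-calibration in \eqref{actuarial calibration} on the coarser $\sigma$-fields $\sigma(\Pi)\subset\sigma(\Pi,V)$ and $\sigma(P)\subset\sigma(\Pi,V)$.
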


Opposite implications or further implications require additional assumptions. Note that currently we have not made any model assumptions, except that all considered quantities have finite means and that the exposure is strictly positive, a.s. Thus, Proposition \ref{prop: calibration relations} holds in full generality.
We next make a conditional mean independence assumption -- this is our first model assumption -- it is further discussed in Remark \ref{debatable assumptions}, below, and we will abandon it again because it is not generally satisfied in practice.

\begin{ass}[Conditional mean independence] \label{independence of exposure 1}
Assume that 
\begin{equation}\label{eq: independence of exposure 1}
\E \left[Y \mid \Pi, V \right] = \E \left[Y \mid \Pi \right] \qquad \text{ a.s.}
\end{equation}
\end{ass}
\medskip

Under this additional assumption \eqref{eq: independence of exposure 1} we have the following equivalences.
\begin{prop}  \label{prop: strong equivalences}
We have the following equivalences:
  \begin{eqnarray*}
\text{$\q|_V$-calibration \eqref{actuarial calibration}}
& \Longleftrightarrow &
    \text{$\q$-calibration \eqref{Q calibration}}
    \,+\, \text{conditional mean independence \eqref{eq: independence of exposure 1}}
    \\
    & \Longleftrightarrow &
    \text{$\p$-calibration \eqref{classical calibration}}
    \,+\, \text{conditional mean independence \eqref{eq: independence of exposure 1}}.
  \end{eqnarray*}
\end{prop}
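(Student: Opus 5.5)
The plan is to go around a cycle, using Proposition \ref{prop: calibration relations} for the easy implications and the tower property for the rest. The key observation is that, under conditional mean independence \eqref{eq: independence of exposure 1}, $\q$-calibration and $\p$-calibration coincide. Once this is shown, both backward implications reduce to one short argument.

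\emph{Forward directions.} Assume $\q|_V$-calibration, i.e.\ $\Pi=\E[Y\mid \Pi,V]$ by \eqref{actuarial calibration}. By Proposition \ref{prop: calibration relations} this gives both $\q$-calibration and $\p$-calibration. It remains to show that conditional mean independence holds automatically. Since $\Pi$ is $\sigma(\Pi)$-measurable, the tower property gives
\begin{equation*}
\E[Y\mid\Pi]=\E\big[\E[Y\mid \Pi,V]\mid \Pi\big]=\E[\Pi\mid\Pi]=\Pi=\E[Y\mid\Pi,V] \qquad \text{a.s.},
\end{equation*}
which is exactly \eqref{eq: independence of exposure 1}. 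Hence the left-hand side implies both right-hand sides.

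\emph{Backward from $\p$-calibration.} Assume \eqref{classical calibration} and \eqref{eq: independence of exposure 1}. Then
\begin{equation*}
\E[Y\mid\Pi,V]=\E[Y\mid\Pi]=\Pi,
\end{equation*}
which is $\q|_V$-calibration in the $\p$-form of \eqref{actuarial calibration}.

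\emph{Backward from $\q$-calibration.} Assume $\q$-calibration together with \eqref{eq: independence of exposure 1}. I will show that $\E_\q[Y\mid\Pi]=\E[Y\mid\Pi]$, so that $\p$-calibration follows and the previous case applies. Using Bayes' rule \eqref{conditional Bayes} with ${\cal A}=\sigma(\Pi)$, then the tower property over $\sigma(\Pi,V)\supseteq\sigma(\Pi)$, and pulling out the $\sigma(\Pi,V)$-measurable factor $V$, we get
\begin{equation*}
\E_\q[Y\mid\Pi]=\frac{\E\big[V\,\E[Y\mid\Pi,V]\mid\Pi\big]}{\E[V\mid\Pi]}=\frac{\E\big[V\,\E[Y\mid\Pi]\mid\Pi\big]}{\E[V\mid\Pi]}=\E[Y\mid\Pi].
\end{equation*}
The second equality uses conditional mean independence. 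In the last step, $\E[Y\mid\Pi]$ is $\sigma(\Pi)$-measurable, so it is pulled out and the factor $\E[V\mid\Pi]$ cancels. Thus $\Pi=\E_\q[Y\mid\Pi]=\E[Y\mid\Pi]$.

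The only delicate point is this last computation. One needs $\E[V\mid\Pi]>0$ a.s., which holds since $V>0$ a.s. One also needs integrability of $VY$ and of $V\,\E[Y\mid\Pi]$ so that the conditional expectations are well defined and the measurable factors can be pulled out; both are covered by the standing moment assumption in Remarks \ref{Remark covariate}. Everything else is direct use of the tower property.
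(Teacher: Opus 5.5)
Your proof is correct and is essentially the tower-property argument the paper has in mind; the paper gives no separate written proof. The forward direction uses Proposition \ref{prop: calibration relations} together with the tower property. The backward direction from $\q$-calibration uses the fact that conditional mean independence forces $m_{\q}(\Pi)=m_{\p}(\Pi)$: the paper reads this off from the covariance identity \eqref{correlation is important}, while you derive it directly from Bayes' rule \eqref{conditional Bayes}.
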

Consequently, under conditional mean independence, the three definitions of $\q|_V$-calibration, $\q$-calibration and $\p$-calibration coincide, but this requires Assumption \ref{independence of exposure 1}, otherwise only the implications of Proposition \ref{prop: calibration relations} hold.

\medskip

\begin{remark}[Ex-ante exposures and conditional mean independence]\label{debatable assumptions}\normalfont
We discuss the two items: (1) ex-ante vs.~ex-post exposures -- ex-ante exposures are implicitly used in $\q|_V$-calibration \eqref{actuarial calibration} to make it a practically meaningful object --  and (2) the conditional mean independence assumption
\eqref{eq: independence of exposure 1}. The question whether items (1) and (2) are realistic assumptions in practice is closely related.

\begin{itemize}
\item[(1)] \underline{Ex-ante availability is an information-timing condition.}
$\q|_V$-calibration \eqref{actuarial calibration} is a meaningful consideration if the exposure $V$ is known ex-ante, meaning at contract inception. This is a general assumption made in many actuarial modeling approaches, and it looks reasonable because the insurance contract specifies the insurance term. However, many insurance products contain a lapse option, e.g., a car insurance policy can be terminated in case the policyholder changes their vehicle. Consequently, the realized exposure only becomes available at contract expiry/termination (ex-post); this is a critical point raised and discussed in Lindholm et al.~\cite{LLP}. Thus, early termination and temporary suspension turns the exposure into an ex-post available variable, and in this situation it seems more meaningful to study $\q$-calibration \eqref{Q calibration} because $(\Pi, V)$ is not in the available information set at contract inception.
\item[(2)] \underline{The conditional mean independence is a stochastic relationship.}
The conditional mean independence assumption
\eqref{eq: independence of exposure 1} seems to be even more problematic in practical applications. We highlight some crucial features in the following list:
\begin{itemize}
\item Property \eqref{eq: independence of exposure 1} is a conditional {\it mean} independence, this is weaker than conditional independence. It says that on each unit premium level $\Pi$, there is no systematic effect of the exposure $V$ on the {\it expected} loss costs per unit exposure.
This is interpreted that $\Pi$ is mean-sufficient for $Y$, and the exposure $V$ does not proxy a missing risk factor for unit loss cost prediction.
In particular, we have a proportional mean behavior in $V$ of the total loss $Z$. 
\item It is important to realize that \eqref{eq: independence of exposure 1} is a substantive modeling assumption, and not an automatic consequence of defining $Y=Z/V$. That is, the unit loss costs can always be defined by $Y=Z/V$, but this does not imply that the expected loss has a linearity in the exposure; see also the correlation
statement \eqref{positive correlation}. This linearity is a substantial assumption that can be imposed in the form of \eqref{eq: independence of exposure 1}. Most classical actuarial models impose a linearity assumption, e.g., working within the exponential dispersion family (EDF), one typically assumes that conditionally, given the true mean of the unit loss costs, the conditional mean is independent of the exposure and the conditional variance scales inversely proportionally to the exposure; this is also the common assumption in the B\"uhlmann--Straub credibility model \cite{BS}. For more discussion in a regression context, we also refer to 
Lindholm--Nazar \cite{LindholmNazar}.
\item Property \eqref{eq: independence of exposure 1} may be a reasonable assumption if we have ex-ante exposures (known at contract inception) that do not impact the average loss costs per unit exposures. Such a property may be violated if, e.g., high-risk profiles systematically sign shorter contracts or if there are seasonal patterns resulting in risk profiles that are not ceteris paribus over the entire insured period (e.g., avalanches are more likely during winter periods).
\item If exposures are ex-post, i.e., only known at contract termination, then likely \eqref{eq: independence of exposure 1} is violated if there is an endogenous mechanism between claims and contract termination. A common reason for lapsing a contract is an accident (because, e.g., the insured object is replaced). Such accident-induced lapses directly act on the exposures and on the loss costs, thus, conditional mean independence is not a reasonable assumption in such settings because the exposure contains additional information about the loss.
\item The conditional mean independence can be analyzed graphically by a calibration plot which additionally stratifies with exposure; calibration plots are discussed in Section \ref{sec: Calibration plot}, below. Alternatively, we can consider a two-dimensional kernel smoothed heatmap that considers 
\begin{equation}\label{regression H0}
(\pi,v)~\mapsto~ \E[Y\mid \Pi=\pi,V=v]. 
\end{equation}
We can also fit a regression model to \eqref{regression H0} to understand whether there is a systematic effect of the exposure $V$ on $Y$, given $\Pi$.
\end{itemize}
\end{itemize}
\end{remark}


\bigskip

\begin{center}
\noindent\fbox{%
  \begin{minipage}{0.9\textwidth}
    ~\\
    \underline{Conclusion from Remark \ref{debatable assumptions}:} We should generally doubt the validity of the conditional mean independence assumption \eqref{eq: independence of exposure 1} in general insurance pricing. Consequently, we will not require that the exposure is ex-ante, and our focus is on $\q$-calibration \eqref{Q calibration} throughout the remainder of this manuscript.
\\~    
  \end{minipage}%
}
\end{center}

\bigskip


\underline{What does the actuarial literature do?}
\begin{itemize}
\item No exposures ($V\equiv 1$): Denuit et al.~\cite{DenuitEtAl2021, DenuitEtAl2024} do not involve exposures, however, in their estimation procedure they use a total premium $V\Pi$-view to restore calibration. Fissler et al.~\cite{FisslerLorentzenMayer} do not involve exposures, however, they discuss estimation in their Section 5.2.2 relating to a $\q$-view. W\"uthrich \cite{WGini, WTest}, Denuit--Trufin \cite{DenuitTrufin2023, DenuitTrufin2024} and Delong--W\"uthrich \cite{DW1} do not involve exposures. W\"uthrich--Ziegel \cite{WZiegel} do not involve exposures, though their example considers $\q$-calibration through the application of the isotonic regression with case weights. 
\item $PZ$-calibration: W\"uthrich--Merz \cite[Formula (7.39)]{WM2023} and Denuit et al.~\cite{DenuitHuyghe}.
\item $\p$-calibration:  Delong et al.~\cite{DGW, DW2} and Brauer et al.~\cite{BrauerMenzel}, these papers additionally assume that conditional mean independence holds, thus, $\p$-calibration is equivalent to $\q$-calibration and $\q|_V$-calibration, see Proposition \ref{prop: strong equivalences}.
\item $\q$-calibration: Lindholm et al.~\cite{LLP} and Gatti \cite[formula (4.3)]{Gatti}.
\end{itemize}

From this list we see that most actuarial literature excludes variable exposures. Under exposures there are either the $PZ$-calibration view or the $\q$-calibration view used (sometimes under the additional conditional mean independence assumption). When it comes to the more applied actuarial literature, it is usually $\q$-calibration that is studied, though this is not particularly emphasized in the notation, e.g., Goldburd et al.~\cite[Section 7.2.1]{Goldburd} compare weighted unit loss cost averages against weighted unit premium averages in price buckets that have roughly the same exposure, this equi-exposure view corresponds to a discretized $\q$-calibration view, which will be denoted by $G_{\Pi}^{\q}$, below.

\subsection{Recalibrated unit premium}
The previous section has introduced different versions of calibration, and our focus is on $\q$-calibration \eqref{Q calibration} which has the equivalent definitions
\begin{equation*}
\Pi = \E_{\q} \left[Y \mid \Pi  \right]
\qquad \Longleftrightarrow \qquad
\E\left[P \mid \Pi  \right]  = \E \left[Z \mid \Pi  \right] \qquad \text{ a.s.}
\end{equation*}
That is, we stratify w.r.t.~the unit premium $\Pi$ to understand whether the resulting premium cohorts are on average self-financing for their claims.

\begin{defi}[Recalibrated unit premium]\label{def: Recalibrated unit premium}
The {\it recalibrated unit premium} under the exposure-weighted $\q$-measure is defined by
\begin{equation*}
m_{\q}(\Pi) = \E_{\q} \left[Y \mid \Pi \right],
\end{equation*}
and under the original population measure $\p$ by
\begin{equation*}
m_{\p}(\Pi) = \E \left[Y \mid \Pi \right].
\end{equation*}
\end{defi}

\medskip
\begin{graybox}
We have the following interesting result, a.s.,
\begin{equation}\label{correlation is important}
m_{\q}(\Pi) -m_{\p}(\Pi) = \frac{\operatorname{Cov} \left(V,Y \mid \Pi \right)}{\E \left[V \mid \Pi \right]}.
\end{equation}
\end{graybox}
\medskip

This result indicates that Assumption \ref{independence of exposure 1} is sufficient to have an identity $m_{\q}(\Pi) =m_{\p}(\Pi)$, a.s., but necessary is only conditional uncorrelatedness between $V$ and $Y$, given $\Pi$.

\bigskip
\begin{graybox}
Our main interest is in the exposure-weighted $\q$-calibration, and our goal is to verify
\begin{equation}  \label{Q calibration valid}
m_{\q}(\Pi) = \Pi \qquad \text{ a.s.}
\end{equation}
\end{graybox}
\medskip

\begin{remark}\normalfont
\label{remark qq calibrated}
\begin{itemize}
\item Since $\q \sim \p$ are equivalent probability measures, the term a.s.~(almost surely) is correct under any of the two population measures.
\item The recalibrated unit premiums are calibrated, i.e., 
\begin{equation}\label{recalibration step works}
m_{\q}(\Pi) = \E_{\q} \left[Y \mid m_{\q}(\Pi) \right]
\qquad \text{ and } \qquad 
m_{\p}(\Pi) = \E \left[Y \mid m_{\p}(\Pi) \right]
\qquad \text{ a.s.}
\end{equation}
This directly follows from the tower property of conditional expectation, and it motivates the isotonic recalibration step discussed in Section \ref{Actual-vs-expected plot -- statistical smoothing methods}, below.
\end{itemize}
\end{remark}

\medskip

\subsection{Stylized example}
\label{sec: stylized example}

\bigskip

\begin{center}
\noindent\fbox{%
  \begin{minipage}{0.9\textwidth}
    ~\\
This section constructs a stylized example that is $\p$-calibrated but not $\q$-calibrated. This example will be used throughout the subsequent sections that introduce graphical tools and quantitative statistical methods to analyze calibration and discrimination.
\\~    
  \end{minipage}%
}
\end{center}

\medskip

The example is constructed in two parts: Part 1 shows the methodological construction, and Part 2 gives a numerical example.

\paragraph{Part 1 of the stylized example.}

We construct a stylized example that is $\p$-calibrated but not $\q$-calibrated.
This requires that the conditional mean independence
\eqref{eq: independence of exposure 1} is violated, see Proposition \ref{prop: strong equivalences}, to obtain a non-zero conditional correlation in \eqref{correlation is important}. Select a bounded measurable function $h(\cdot)$ such that
\begin{eqnarray}\label{violation conditional mean independence}
\E\left[Y \mid \Pi , V \right] &=& \Pi + r(\Pi, V)  
\\&=& \nonumber
\Pi + h(V) - \E[h(V) \mid \Pi] >0 
\qquad \text{ a.s.}
\end{eqnarray}
This implies $\p$-calibration because we have centered residuals $\E[r(\Pi, V)\mid \Pi]=0$, that is,
\begin{eqnarray}\nonumber
m_{\p}(\Pi)&=&
\E\left[Y \mid \Pi \right] ~=~ 
\E \left[ \E\left[Y \mid \Pi , V \right] \mid \Pi  \right]
\\&=& \label{example P-calibration 0}
\E \left[\Pi + r(\Pi, V) \mid \Pi  \right]
~=~ \Pi
\qquad \qquad \text{ a.s.}
\end{eqnarray}
Using \eqref{correlation is important}, we compute the following covariance term
\begin{eqnarray*}
\operatorname{Cov} \left(V,Y \mid \Pi \right)
&=&
\operatorname{Cov} \left(V,\E\left[Y\mid \Pi, V\right] \mid \Pi \right)
\\&=&
\operatorname{Cov} \left(V,\Pi + r(\Pi, V) \mid \Pi \right)
\\&=& \operatorname{Cov} \left(V, h(V) \mid \Pi \right)\qquad \text{ a.s.}
\end{eqnarray*}
Using \eqref{example P-calibration 0}, this implies, a.s.,
\begin{equation*}
m_{\q}(\Pi) -\Pi \,=\,
m_{\q}(\Pi) -m_{\p}(\Pi) \,=\, \frac{\operatorname{Cov} \left(V,Y \mid \Pi \right)}{\E \left[V \mid \Pi \right]}
\,=\, \frac{\operatorname{Cov} \left(V, h(V) \mid \Pi \right)}{\E \left[V \mid \Pi \right]}.
\end{equation*}
Consequently, if the last correlation term is different from zero with positive probability, we cannot have $\q$-calibration. This is the case if $\E[Vr(\Pi, V)\mid \Pi]\neq 0$.
We provide an example. Assume that there exists $v_0>0$ and $\pi_0>0$ such that for $\p_{\Pi}$ almost every $\pi$
\begin{equation}\label{exposure assumption}
  \left\{
    \begin{array}{ll}
 V|_{\Pi=\pi} \text{ is non degenerate and supported in }(0,v_0] &\text{ for }\pi \le \pi_0,\\
      V|_{\Pi=\pi} \text{ is non degenerate and supported in }(v_0, \infty) &\text{ for }\pi > \pi_0.
    \end{array}
    \right.
\end{equation}
Hence, the support of the unit premium is partitioned into two parts $\Omega_0=\{\Pi \le \pi_0\}$ and $\Omega_1=\{\Pi > \pi_0\}$. On $\Omega_0$ the unit premiums provide non-degenerate exposure distributions on the interval $(0,v_0]$ and 
on $\Omega_1$ on the interval $(v_0, \infty)$. Thus, low unit premiums have low exposures, and high unit premiums high exposures.

Finally, we assume that $h$ is strictly increasing on $(0,v_0]$ and strictly decreasing on $(v_0,\infty)$. Based on these assumptions we have
\begin{equation*}
\left\{
\begin{array}{ll}
\operatorname{Cov} \left(V, h(V) \mid \Pi=\pi \right)
>0 & \text{ for $\pi \le \pi_0$,}\\
\operatorname{Cov} \left(V, h(V) \mid \Pi=\pi \right)
<0 & \text{ for $\pi > \pi_0$.}
\end{array}
\right.
\end{equation*}
Consequently,
\begin{equation*}
\left\{
\begin{array}{ll}
m_{\q}(\Pi) >m_{\p}(\Pi)=\Pi & \text{ for a.e.~$\Pi \le \pi_0$,}\\
m_{\q}(\Pi) <m_{\p}(\Pi)=\Pi & \text{ for a.e.~$\Pi > \pi_0$.}
\end{array}
\right.
\end{equation*}
Henceforth, the total losses are underestimated on small unit premiums and overestimated on large unit premiums.
This closes Part 1 of the stylized example.

\paragraph{Part 2 of the stylized example.}

For the sections on graphical and statistical methods, below, we equip the above example with explicit functions and numerical values. We assume that the unit premium follows a scaled and translated beta distribution under $\p$
\begin{equation}\label{translated beta}
\Pi \sim G_\Pi \quad \text{ with } \qquad 
\frac{\Pi - 800}{400} \sim \operatorname{Beta}(\alpha=4, \beta=4).
\end{equation}
Thus, $\Pi$ is supported in $(800, 1200)$ and its density is symmetric around $\pi_0:=1000$. The exposure \eqref{exposure assumption} is assumed to behave differently below and above this critical point $\pi_0$, namely,
\begin{equation}\label{selection of exposure distribution}
V|_{\Pi=\pi}\sim \left\{
\begin{array}{ll}
\operatorname{Uniform}(0.5,1)
& \text{ for }\pi \le \pi_0,\\
\operatorname{Uniform}(1,1.5)
& \text{ for }\pi > \pi_0.
\end{array}
\right.
\end{equation}
Next, we select the measurable function $h$ that enters \eqref{violation conditional mean independence}. We set on $(0.5, 1.5)$
\begin{equation*}
h(v) =  - 3000 \left(v-1\right)^2 ~ \ge ~-750.
\end{equation*}
This function is increasing on $I_1=(0.5, 1)$ and it is decreasing
on $I_2=(1, 1.5)$. We set
\begin{equation*}
\mu(\Pi,V)=\E\left[Y \mid \Pi , V \right] = \Pi + h(V) - \E[h(V) \mid \Pi] >0 
\qquad \text{ a.s.}
\end{equation*}
This example provides $\p$-calibration $m_{\p}(\Pi)=\Pi$, a.s., see \eqref{example P-calibration 0}, and we have miscalibration under the
$\q$-measure
\begin{equation}\label{exact miscalibration}
m_{\q}(\Pi) -\Pi \,=\,
\frac{\operatorname{Cov} \left(V,Y \mid \Pi \right)}{\E \left[V \mid \Pi \right]}
\,=\, \frac{\operatorname{Cov} \left(V, h(V) \mid \Pi \right)}{\E \left[V \mid \Pi \right]} =\left\{\begin{array}{ll}
1000/24 & \text { for $\Pi \le \pi_0$},\\
-1000/40 & \text { for $\Pi > \pi_0$}.
\end{array}\right.
\end{equation}
Lastly, we need to model the unit loss costs $Y$. We assume the conditional distribution
\begin{equation}\label{gamma responses}
Y|_{\Pi, V} \sim \operatorname{Gamma}\left(\gamma, \gamma/\mu(\Pi, V)\right),
\end{equation}
with shape parameter $\gamma>0$. This conditional distribution has expected value 
$\mu(\Pi, V)$, variance $\mu(\Pi, V)^2/\gamma$ and coefficient of variation of $1/\sqrt{\gamma}$; due to \eqref{conditional Bayes VV}, this distribution is identical under both population measures $\p$ and $\q$.

\medskip

This model specification is going to be used in the examples in the next sections.

\section{Graphical tools to assess calibration}
\label{Graphical calibration tools}

Our first goal is to assess $\q$-calibration \eqref{Q calibration} in the stylized example introduced in Section \ref{sec: stylized example} using different graphical tools. Recall that this example is $\p$-calibrated (which is not of much interest in actuarial pricing), but it is not $\q$-calibrated (our main interest), see \eqref{exact miscalibration}. Our goal is to introduce graphical tools that allow us to identify this $\q$-miscalibration.

\medskip

Our first step in Section \ref{distribution G} is to present the distribution of the unit premium $\Pi$ both under the $\p$-measure and the $\q$-measure. As will be seen below, the latter distribution under $\q$ is important in an empirical set-up. 

\medskip

The remainder of this section is then divided into two parts:

\medskip

\underline{Part 1 -- {\it Population Version}}: In Section \ref{Graphical tools: distributional versions}, we assume that the above (true) data generating model is known. This allows us to study the calibration question under the ground truth. Of course, this is unrealistic in practice, but it helps us to shape ideas and to introduce a clean notation.

\medskip

\underline{Part 2 -- {\it Sample Version}}: In Section \ref{Graphical tools: empirical versions}, we work under an unknown population model specification, and we try to answer the calibration question empirically from an observed sample. For this second step, we assume to have an i.i.d.~{\it test sample} ${\cal T}=(Y_i,V_i,\Pi_i)_{i=1}^n$ following the same law as $(Y,V,\Pi)$. Test sample means that we perform a proper out-of-sample validation that does not consider the learning data that has been used to derive the unit premium rule $\Pi$ -- in this sense, all considerations are understood conditionally on the learning sample, which is kept fixed (a mathematically fully consistent notation would require to have a conditional notation, given the learning sample,  for notational convenience we do not do that). 

\subsection{Unit premium distributions under $\p$ and $\q$}
\label{distribution G}

The unit premium $\Pi \sim G_\Pi$ has a scaled and translated beta distribution under the population measure $\p$, see \eqref{translated beta}. Figure \ref{Plot Pi distribution G} (lhs) shows this distribution $\Pi \sim G_\Pi$. Additionally, we illustrate the deciles of $G_{\Pi}$ by the dotted lines. These are obtained by selecting $K=10$ and setting
\begin{equation*}
G_\Pi^{-1}(k/K) \qquad \text{ for $k\in \{0,\ldots, K\}$,}
\end{equation*}
where $G_\Pi^{-1}$ is the generalized left-continuous inverse of $G_\Pi$ given by
\begin{equation*}
G_\Pi^{-1}(u) = \inf \left\{ y;~ G_\Pi(y) \ge u \right\}
\qquad \text{ for $u \in [0,1]$.}
\end{equation*}
Consequently, each interval
$$I_k = \left(G_\Pi^{-1}\left(\frac{k-1}{K}\right), G_\Pi^{-1}\left(\frac{k}{K}\right)\right]
$$
has an equal probability $\p(\Pi \in I_k)=1/K$ for $k \in \{1,\ldots, K\}$; note that $G_{\Pi}$ is absolutely continuous in our case. 

\begin{figure}[htb!]
\begin{center}
\begin{minipage}[t]{0.45\textwidth}
\begin{center}
\includegraphics[width=\textwidth]{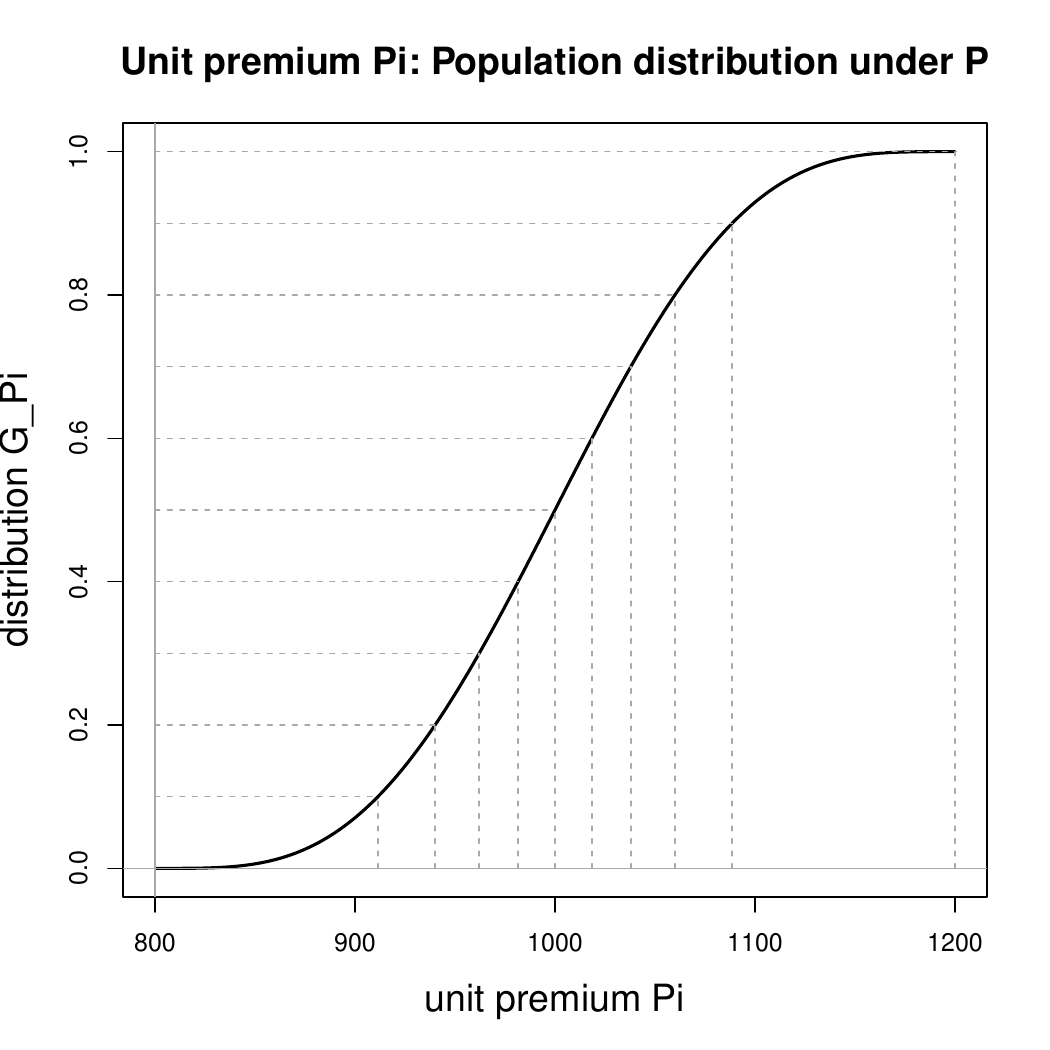}
\end{center}
\end{minipage}
\begin{minipage}[t]{0.45\textwidth}
\begin{center}
\includegraphics[width=\textwidth]{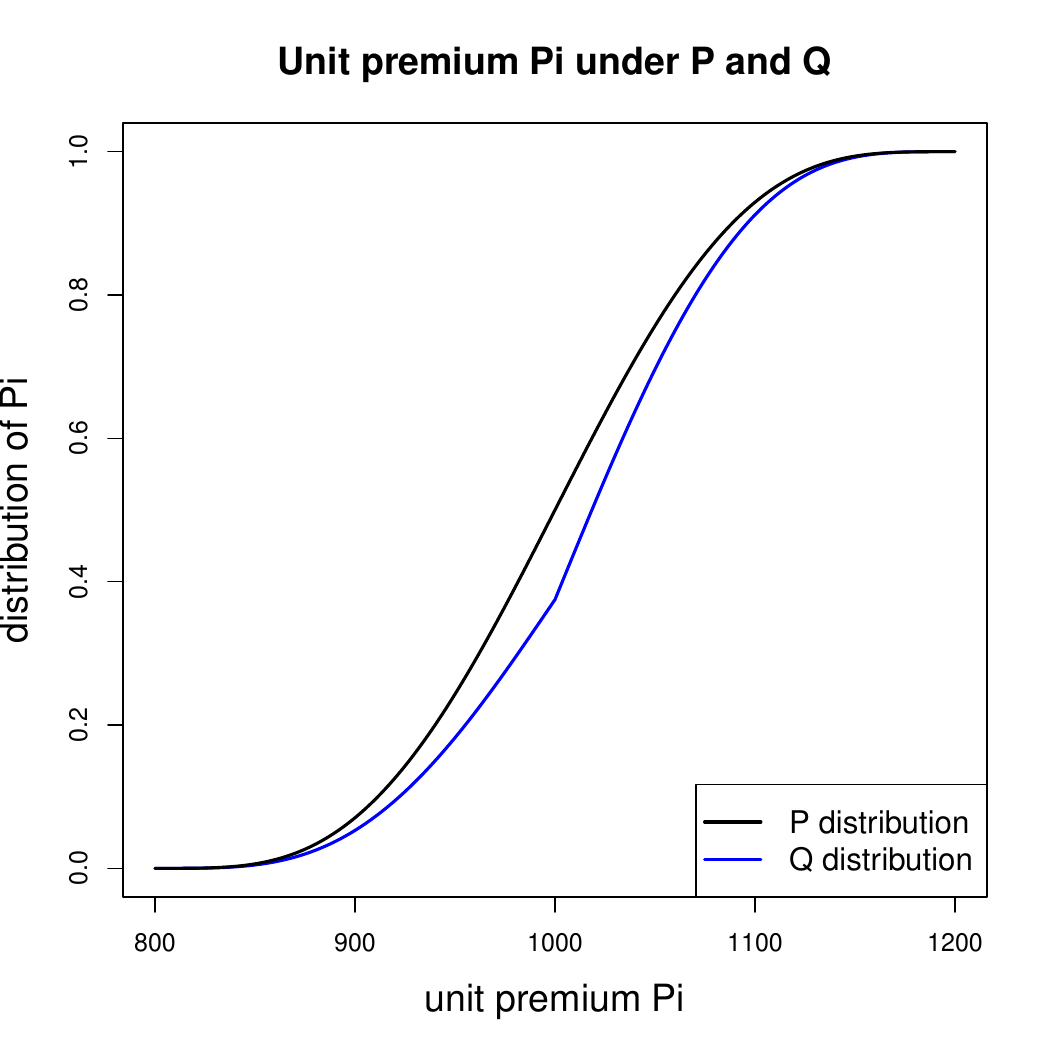}
\end{center}
\end{minipage}
\end{center}
\vspace{-.7cm}
\caption{(lhs) Unit premium distribution $\Pi \sim G_\Pi$ under population distribution $\p$, and (rhs) comparison of $G_\pi$ and $G_\pi^{\q}$.}
\label{Plot Pi distribution G}
\end{figure}

For parameter estimation, we prefer intervals that contain roughly equal aggregated exposures. Intuitively, this means that in all intervals there is roughly an equal amount of information available for parameter estimation (this statement would need additional assumptions to be made precise). Consequently, we do not want intervals $I_k$ with equal probabilities for $\Pi$, but rather with equi-exposures of $V$ in all intervals.
This motivates the unit premium distribution under the $\q$-measure
\begin{equation}\label{exposure-weighted Pi distribution}
G^{\q}_{\Pi}(\pi) = \q \left( \Pi \le \pi \right)
= \frac{1}{\E[V]}\, \E \left[ V \1_{\{\Pi \le \pi \}}\right] \qquad \text{ for $\pi \in \R$,}
\end{equation}
with generalized left-continuous inverse
\begin{equation*}
(G^{\q}_\Pi)^{-1}(u) = \inf \left\{ y;~ G^{\q}_\Pi(y) \ge u \right\}
\qquad \text{ for $u \in [0,1]$.}
\end{equation*}
The new intervals for $k \in \{1,\ldots, K\}$ under the exposure-weighted measure $\q$ are given by
$$I^{\q}_k = \left((G^{\q}_\Pi)^{-1}\left(\frac{k-1}{K}\right), (G^{\q}_\Pi)^{-1}\left(\frac{k}{K}\right)\right].
$$
Under this $\q$-measure set-up, we can compute the average exposure in each interval $I^{\q}_k$
\begin{equation}\label{decile binning formula}
1/K ~\approx~ 
\q\left( \Pi \in I^{\q}_k\right)
~=~\frac{1}{\E[V]}\, 
\E \left[ V \1_{\{ \Pi \in I^{\q}_k\}}\right].
\end{equation} 
The approximation is exact for continuous distributions.
Thus, we need to work under the exposure-weighted $\q$-measure framework for ensuring that all intervals have the same average observed exposure under equi-spaced quantile levels.
Figure \ref{Plot Pi distribution G} (rhs) shows the differences of the distributions $G_\Pi$ and $G^{\q}_\Pi$
 of $\Pi$ under $\p$ and $\q$.

\medskip

\begin{center}
\noindent\fbox{%
  \begin{minipage}{0.9\textwidth}
    ~\\
We generally work under the exposure-weighted population measure $\q$, and the subsequent derivations generally use $G^{\q}_\Pi$ to ensure equi-exposures by quantile binning, see \eqref{decile binning formula}.
\\~    
  \end{minipage}%
}
\end{center}

\medskip

\subsection{Graphical tools: Population Version}
\label{Graphical tools: distributional versions}
In this section, we assume that the true data generating model introduced in Section \ref{sec: stylized example} is known.  This {\it population version} will help us to shape ideas and to properly define all the relevant objects. In Section \ref{Graphical tools: empirical versions}, we  turn to the real-world situation of an unknown population model. This requires approximation by its empirical counterpart using the test sample ${\cal T}$.
This latter case is called the {\it sample version}.

\medskip

The first object of core interest is the $\q$-calibration property \eqref{Q calibration}. We present graphical tools to analyze it. We discuss the {\it calibration plot} and the {\it lift chart} which present the identical information, but in a slightly different structure.

\subsubsection{Calibration plot}
\label{sec: Calibration plot}

The calibration plot has many different names, e.g., in Gneiting--Resin \cite{GneitingResin} it is called {\it $T$-reliability diagram} (in our case $T$ is the mean functional), in Pohle \cite{Pohle} and W\"uthrich--Merz \cite{WM2023} it is called auto-calibration plot,
and a particular case of a sample version of the calibration plot is the {\it actual-vs-expected plot}; see Goldburd et al.~\cite{Goldburd}; we come back to the actual-vs-expected (AvE) plot in Section \ref{Actual-vs-expected plot -- quantile binning}, below.

\medskip

The {\it calibration plot} considers the graph\footnote{Function \eqref{calibration function} needs some care: $m_{\q}(\cdot)$ denotes a selected measurable version of the conditional mean $\E_{\q}[Y | \Pi]$, and calibration is then the a.s.~identity $m_{\q}(\Pi) = \Pi$; moreover, we impose additional display conventions for graphical interpolation between observed premium values.}
\begin{equation}\label{calibration function}
\pi ~\mapsto ~ m_{\q}(\pi) = \E_{\q} \left[Y \middle| \Pi=\pi \right],
\end{equation}
for $\pi$ in the convex hull of the premium range attained by $\Pi$. If this plot shows a diagonal line $m_{\q}(\pi) =\pi$, we have perfectly $\q$-calibrated unit premiums $\Pi$ for $Y$, otherwise not. 

\begin{figure}[htb!]
\begin{center}
\begin{minipage}[t]{0.45\textwidth}
\begin{center}
\includegraphics[width=\textwidth]{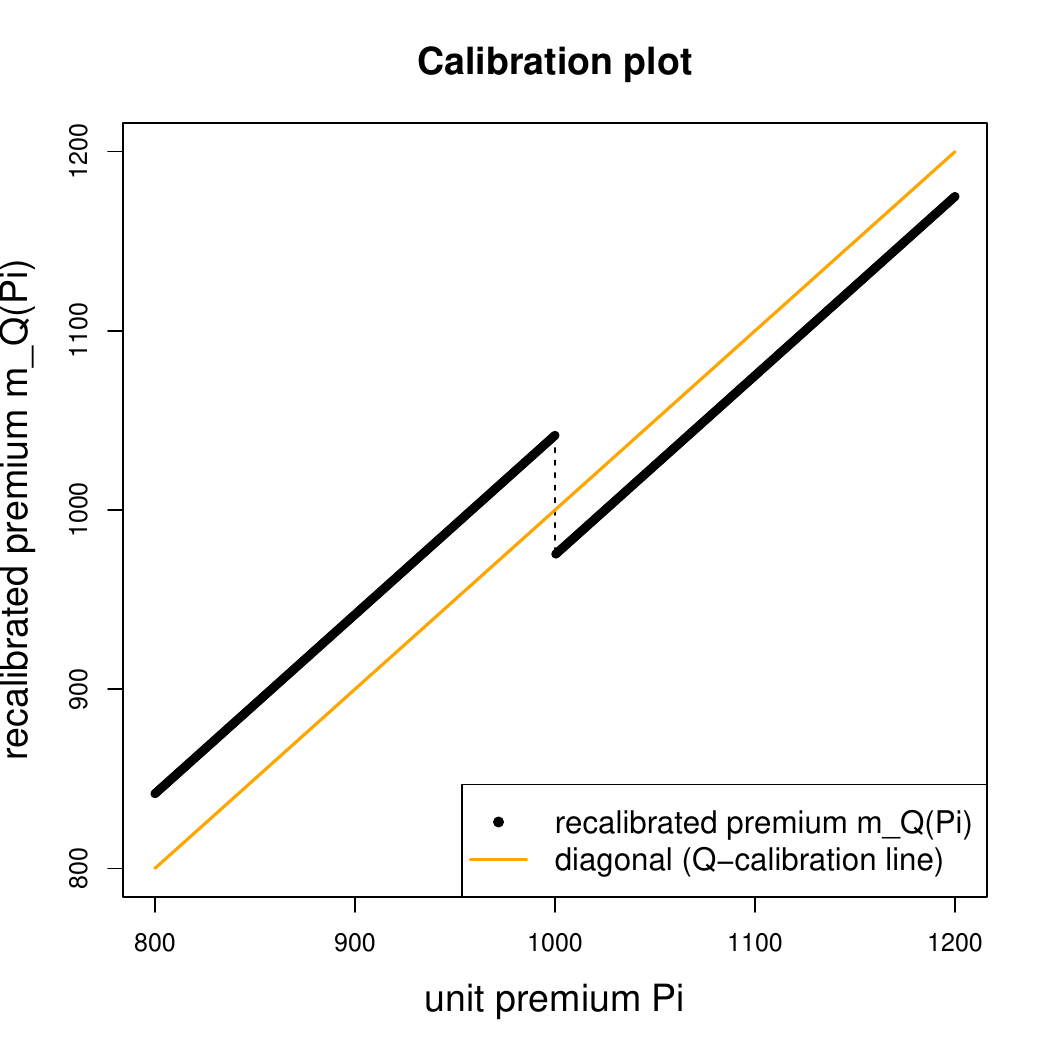}
\end{center}
\end{minipage}
\end{center}
\vspace{-.7cm}
\caption{Calibration plot \eqref{calibration function} under the exposure-weighted $\q$-measure; the orange diagonal line corresponds to perfect $\q$-calibration.}
\label{Calibration Plot 1}
\end{figure}

Figure \ref{Calibration Plot 1} shows the results of the recalibration step \eqref{calibration function} under $\q$, which in our modeling set-up results in the exact formula 
\eqref{exact miscalibration}. The black line shows the 
$\q$-recalibrated premiums $m_{\q}(\pi)$ in the unit premium range
$(800,1200)$, and the orange line corresponds to the diagonal. We observe that there is $\q$-miscalibration: for $\pi \le \pi_0=1000$, the expected unit loss costs are underestimated, for $\pi > \pi_0$, they are overestimated under $\q$. Thus, in this example we do not have $\q$-calibration  \eqref{Q calibration} because, a.s.,
\begin{eqnarray*}
\E \left[ Z \mid \Pi \right] > \E\left[P \mid \Pi \right]&&\text{ for $\Pi \le \pi_0$},\\
\E \left[ Z \mid \Pi \right] < \E\left[P \mid \Pi \right]&&\text{ for $\Pi > \pi_0$}.
\end{eqnarray*}
This leads to a systematic cross-subsidy from high-unit premium policies to low-unit premium policies. Note that $m_{\q}(\Pi)$ is $\q$-calibrated, see Remark \ref{remark qq calibrated}.

\begin{remark}[Risk ranking]\normalfont
\label{wrong risk ranking}
Figure \ref{Calibration Plot 1} shows a situation where $\q$-calibration of $\Pi$ for $Y$ fails to hold. In fact, this failure is not only on the level of the fitted unit premiums $\Pi$, but it also provides a wrong risk ranking: for $\Pi$ in a small neighborhood around $\pi_0=1000$, the risk ranking from the $x$-axis and the $y$-axis differ, e.g., $m_{\q}(\pi_0)>m_{\q}(\pi_0+\varepsilon)$ for small $\varepsilon >0$. Risk rankings will be assessed by Gini scores in Section \ref{section Gini score}, below.
\end{remark}

\subsubsection{Lift chart}
The {\it lift chart} shows the same statistics as the calibration plot, but it uses a different scale on the $x$-axis that is based on quantile levels. Sample versions of lift charts have been considered, e.g., in Goldburd et al.~\cite{Goldburd}, we come back to this in Section \ref{(Empirical) lift chart}, below.

\medskip

At the population level, the lift chart considers the two functions
\begin{equation}\label{lift chart 0}
u \in (0,1) \mapsto  
\left\{
\begin{array}{l}
m_{\q}\left((G^{\q}_\Pi)^{-1}(u)\right) = \E_{\q} \left[Y \middle| \Pi=(G^{\q}_\Pi)^{-1}(u) \right],
\\
 (G^{\q}_\Pi)^{-1}(u).
\end{array}
\right.
\end{equation}
The first function gives the $\q$-recalibrated unit premium, and the second one the unit premium at the quantile levels $u \in (0,1)$. If these two functions are identical, we have $\q$-calibration \eqref{Q calibration}.

\begin{figure}[htb!]
\begin{center}
\begin{minipage}[t]{0.45\textwidth}
\begin{center}
\includegraphics[width=\textwidth]{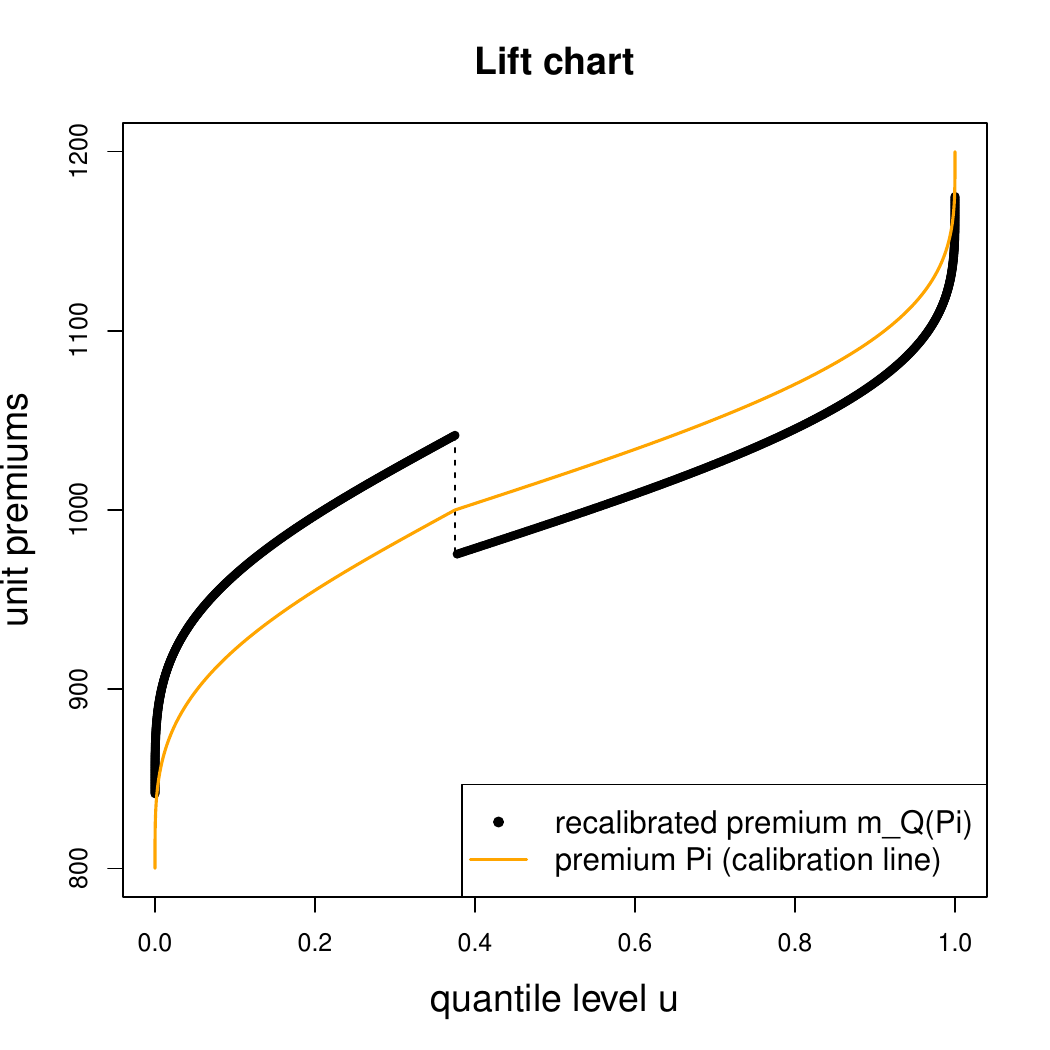}
\end{center}
\end{minipage}
\end{center}
\vspace{-.7cm}
\caption{Lift chart considering the two graphs \eqref{lift chart 0}.}
\label{Lift chart 1}
\end{figure}

Figure \eqref{Lift chart 1} provides the lift chart at the population level, the black curve gives the first function in \eqref{lift chart 0} and the orange line corresponds to the second function in \eqref{lift chart 0} highlighting the calibration line. This lift chart is identical to the calibration plot of Figure \ref{Calibration Plot 1}, we only changed the $x$-axis from the observation scale $\pi$ to the quantile level scale $u=G^{\q}_\Pi(\pi)$.

\medskip

The term {\it lift} refers to the fact that the wider the scale on the $y$-axis, the bigger the lift from the lowest to the biggest unit premiums. Thus, the lift corresponds to {\it discrimination} (studied in Section \ref{sec: Murphy diagram}, below) regardless whether this lift is justified by the calibration consideration of $\Pi$ for $Y$ or not. This lift is then interpreted as the premium discrimination between lowest and highest risk profiles (assigned by $\Pi$). If we use the global mean premium $\Pi=\E_{\q}[Y]$ (egalitarian price), there is calibration but there is no discrimination because every policyholder is charged the identical unit premium and the lift is zero.

\medskip

Intuitively, if there are two premium schemes $(Y, V,\Pi^a)$ and $(Y,V, \Pi^b)$, then the better calibrated one for $Y$, encloses a smaller ``area between the curves'' of the black and the orange graphs in Figures \ref{Calibration Plot 1} and \ref{Lift chart 1}, respectively. 
There are some difficulties in assessing this calibration accuracy for two different premium schemes:
\begin{itemize}
\item The two Figures \ref{Calibration Plot 1} and \ref{Lift chart 1} consider different scales on the $x$-axis, and one premium scheme may enclose a smaller area on one scale and a bigger one on the other scale. That is, there is no universality or a canonical scale for measuring the calibration error. One may though argue that in our case preference should be given to the lift chart because its scale on the $x$-axis does not depend on the selected premium scheme.
\item The graphs in Figures \ref{Calibration Plot 1} and \ref{Lift chart 1} require that we can compute the recalibrated premium $m_{\q}(\Pi)$. In most applications, this is not the case and this recalibration step needs to be estimated from observations. Therefore we need to turn from this clean mathematical formulation to empirical sample versions. This is the topic of the next section.
\item The ``area between the curves'' on its own is not a qualitative criteria for prediction accuracy, it only evaluates calibration, but not discrimination. E.g., we can have two pricing principles $\Pi^a=Y+c$ for a constant $c>0$ and 
$\Pi^b=\E_{\q}[Y]$. The latter is $\q$-calibrated, the former not, but the reader will certainly agree that the former one provides a better predictor for $Y$, if $c$ is small because it is the true claim $Y$ slightly shifted by $c$, i.e., it has the correct resolution but it is not fully calibrated; we come back to this discussion in Section \ref{sec: Murphy's decomposition}, below.
\end{itemize}

\subsection{Graphical tools: Sample Version}
\label{Graphical tools: empirical versions}
The previous graphical results were based on the knowledge of the (true) population model of $(Y,V,\Pi)$. In most applied situations, this population model is unknown and we need to resolve the calibration question from a sample version. This naturally involves noise, also called irreducible risk. Assume we have an i.i.d.~test sample
${\cal T}=(Y_i,V_i,\Pi_i)_{i=1}^n$ following the same law as $(Y,V,\Pi)$. We call ${\cal T}$ a test sample because it should be independent of the learning sample that has been used to find the unit premium rule $\Pi$.

The calibration plot considers the graph \eqref{calibration function}. This has been illustrated in Figure \eqref{Calibration Plot 1} under the knowledge of the population model. In absence of this knowledge we approximate it empirically by using the test sample ${\cal T}$. The most crude version is to replace the conditional means
$m_{\q}(\pi)=\E_{\q} \left[Y \middle| \Pi=\pi \right]$ empirically by the observations $Y_i | \Pi_i=\pi$. However, irreducible risk (noise) on the response scale $Y$ makes the resulting plot not very expressive. 

\begin{figure}[htb!]
\begin{center}
\begin{minipage}[t]{0.45\textwidth}
\begin{center}
\includegraphics[width=\textwidth]{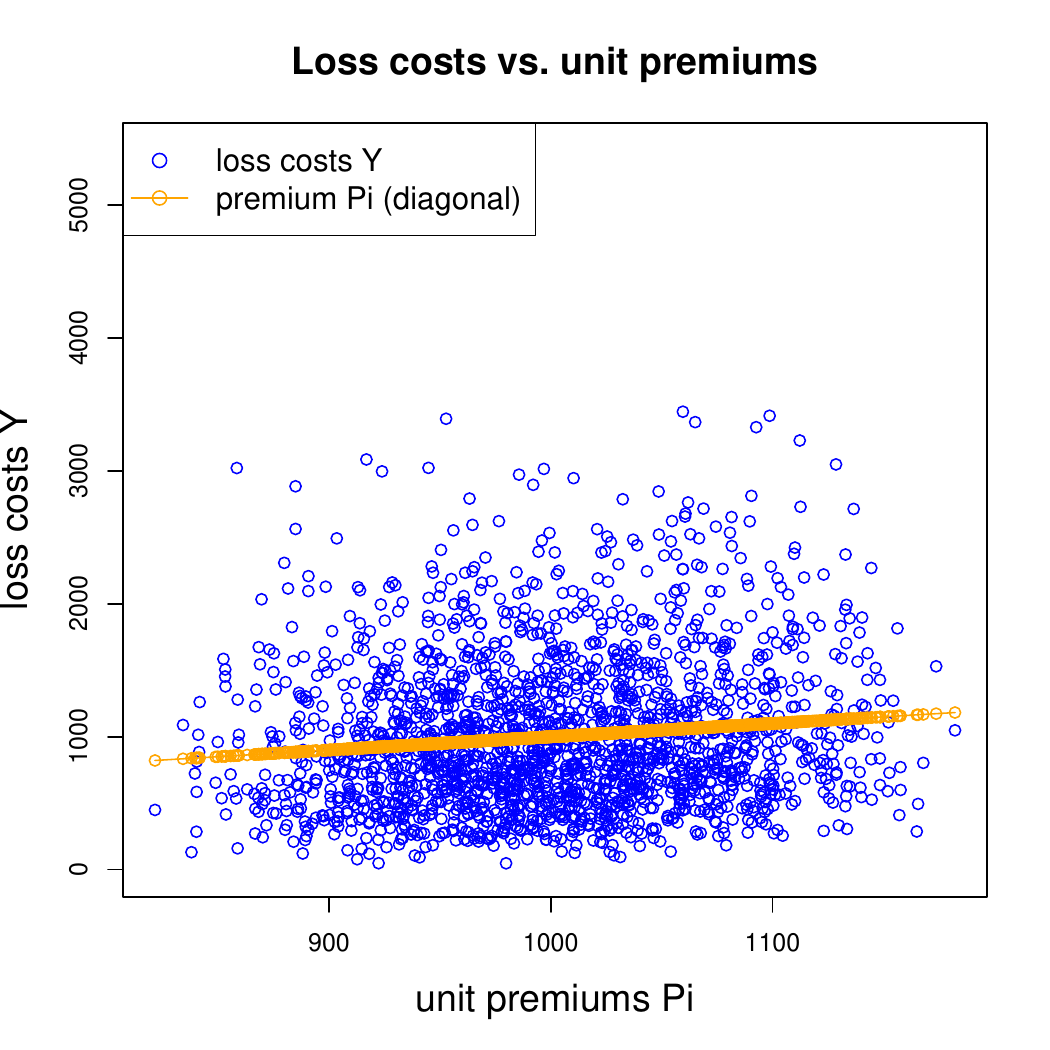}
\end{center}
\end{minipage}
\end{center}
\vspace{-.7cm}
\caption{Loss costs $Y_i$ plotted against unit premiums $\Pi_i$ on test sample ${\cal T}$.}
\label{ActualvsPredict 1}
\end{figure}

Figure \ref{ActualvsPredict 1} shows the loss costs $Y_i$ plotted against the unit premiums $\Pi_i$ on the test sample ${\cal T}=(Y_i,V_i,\Pi_i)_{i=1}^n$.
Basically, we replace the recalibrated unit premiums $m_{\q}(\Pi_i)$ -- black dots in Figure
\ref{Calibration Plot 1} -- by the observed responses $Y_i$ -- blue dots in
Figure \ref{ActualvsPredict 1}. We observe that this latter sample plot is dominated by the irreducible risk in $Y_i$, given $\Pi_i$, and it is hard to judge whether we have calibration of $\Pi$ for $Y$, or not.

\subsubsection{Actual-vs-expected plot -- quantile binning}
\label{Actual-vs-expected plot -- quantile binning}

In order to reduce the irreducible risk in Figure \ref{ActualvsPredict 1}, we need to aggregate to benefit from the law of large numbers. This can be achieved by quantile binning on the unit premium scale.
For this, we estimate the exposure-weighted unit premium $G^{\q}_{\Pi}$ distribution by
\begin{equation}\label{empirical distribution 2}
\widehat{G}^{\q}_\Pi (\pi) = \frac{1}{\sum_{i=1}^n V_i}\, \sum_{i=1}^n V_i\,\1_{\{\Pi_i \le \pi \}}.
\end{equation}
This approximates $G^{\q}_{\Pi}(\pi)$ pointwise in $\pi$, a.s., as the sample size $n \to \infty$ (by the law of large numbers). In contrast to the classical empirical distribution, we select the step sizes in \eqref{empirical distribution 2} by the exposures $(V_i)_{i=1}^n$. This is the sample version of the unit premium distribution under $\q$.

For quantile binning, select the number of bins $K\in \N$. This yields the equi-distributed quantile binning of the test sample under the empirical exposure-weighted unit premium distribution
\begin{equation}\label{Q bins}
\widehat{\cal I}^{\q}_k = \left\{ i \in \{1,\ldots, n\} \,\middle|\, 
(\widehat{G}^{\q}_\Pi)^{-1}((k-1)/K)< \Pi_i \le (\widehat{G}^{\q}_\Pi)^{-1}(k/K)\right\},
\end{equation}
for $k=1,\ldots, K$. As indicated in \eqref{decile binning formula}, every bin will contain approximately the same aggregated exposure
\begin{equation*}
\frac{V^+_k}{\sum_{i=1}^n V_i}  \,:=\,  \frac{1}{\sum_{i=1}^n V_i}\,\sum_{i \in \widehat{\cal I}^{\q}_k} V_i ~\approx~1/K.
\end{equation*}
This is verified in Figure \ref{Exposure per QBin 0}, there is a total exposure of $100,000$, and applying decile binning, $K=10$, each of the folds has a total exposure of approximately $10,000$.

\begin{figure}[htb!]
\begin{center}
\begin{minipage}[t]{0.45\textwidth}
\begin{center}
\includegraphics[width=\textwidth]{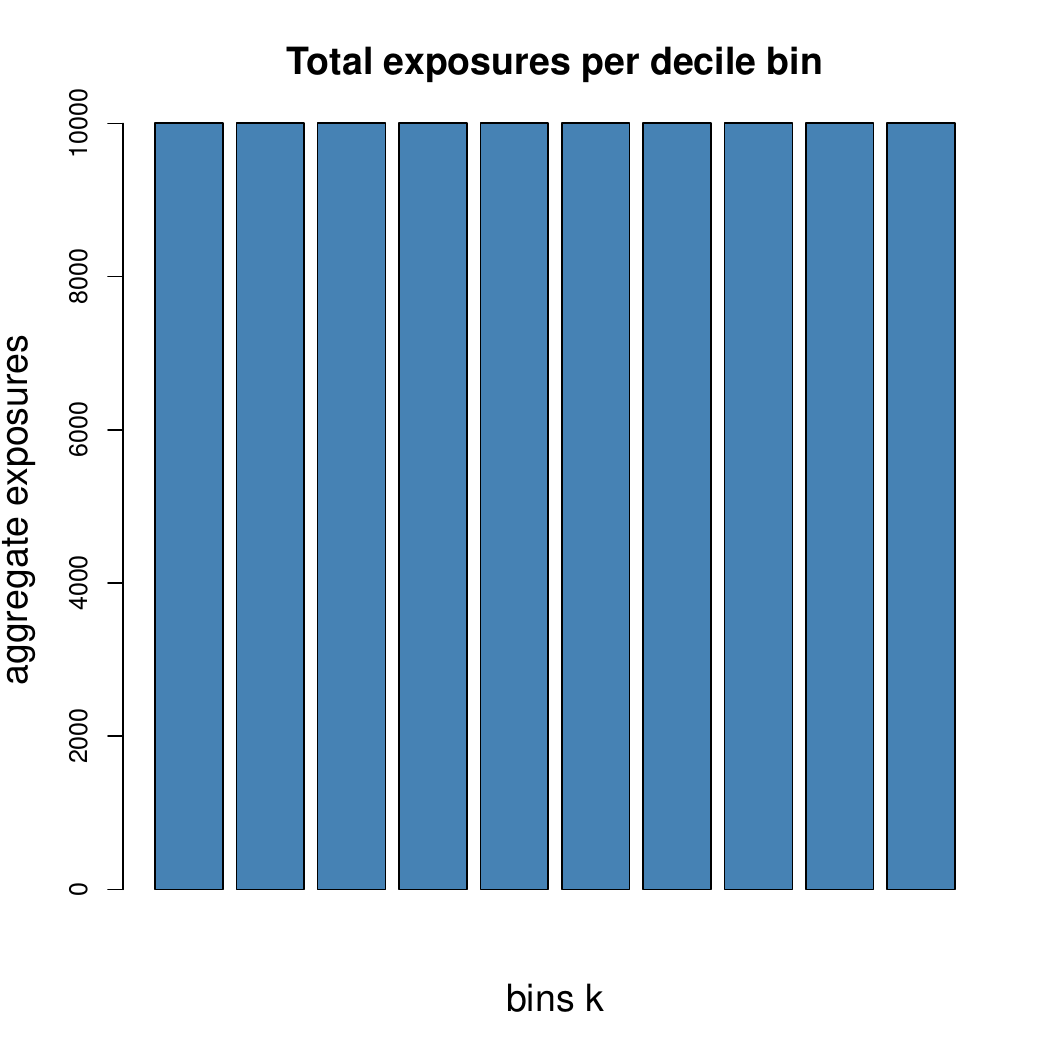}
\end{center}
\end{minipage}
\end{center}
\vspace{-.7cm}
\caption{Total exposures in each bin $\widehat{\cal I}^{\q}_k$ for decile binning $K=10$ under $\widehat{G}^{\q}_\Pi$.}
\label{Exposure per QBin 0}
\end{figure}

The idea now is to build the weighted bin averages. For the observed loss costs, we have weighted sample mean in each bin
\begin{equation}\label{binning procedure 1}
\overline{Y}_k  \,=\,  \frac{1}{V_k^+}\, \sum_{i \in \widehat{\cal I}^{\q}_k} V_iY_i \,= \,\frac{1}{V_k^+}\,\sum_{i \in \widehat{\cal I}^{\q}_k} Z_i.
\end{equation}
Equivalently, we compute the exposure-weighted average unit premium in each bin
\begin{equation}\label{binning procedure 2}
\overline{\Pi}_k   \,=\,  \frac{1}{V_k^+}\, \sum_{i \in \widehat{\cal I}^{\q}_k} V_i\Pi_i \,= \,\frac{1}{V_k^+}\,\sum_{i \in \widehat{\cal I}^{\q}_k} P_i.
\end{equation}
This weighted version relates to the $\q$-measure introduced in
Section \ref{sec: Q-calibration}, by considering the corresponding exposure-weighted quantities.

\begin{figure}[htb!]
\begin{center}
\begin{minipage}[t]{0.45\textwidth}
\begin{center}
\includegraphics[width=\textwidth]{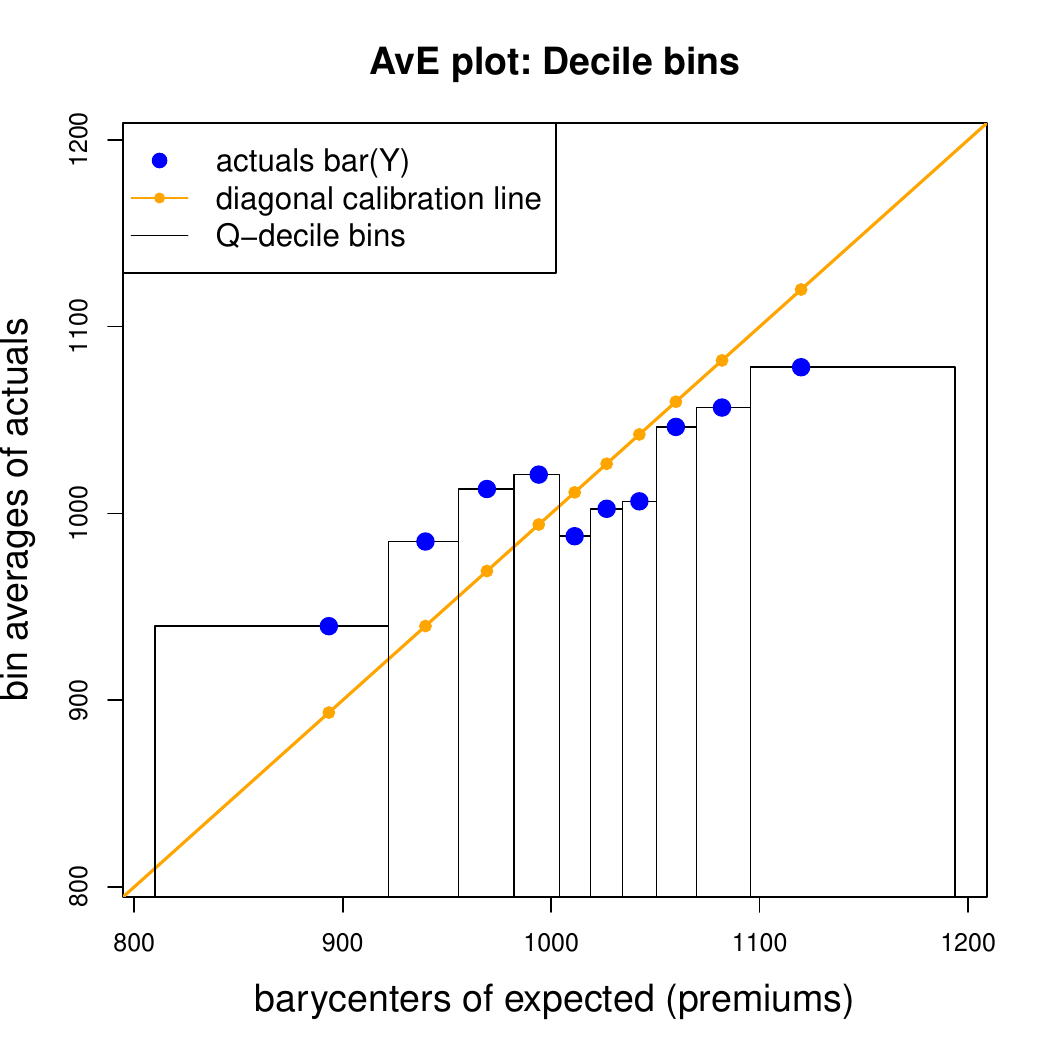}
\end{center}
\end{minipage}
\begin{minipage}[t]{0.45\textwidth}
\begin{center}
\includegraphics[width=\textwidth]{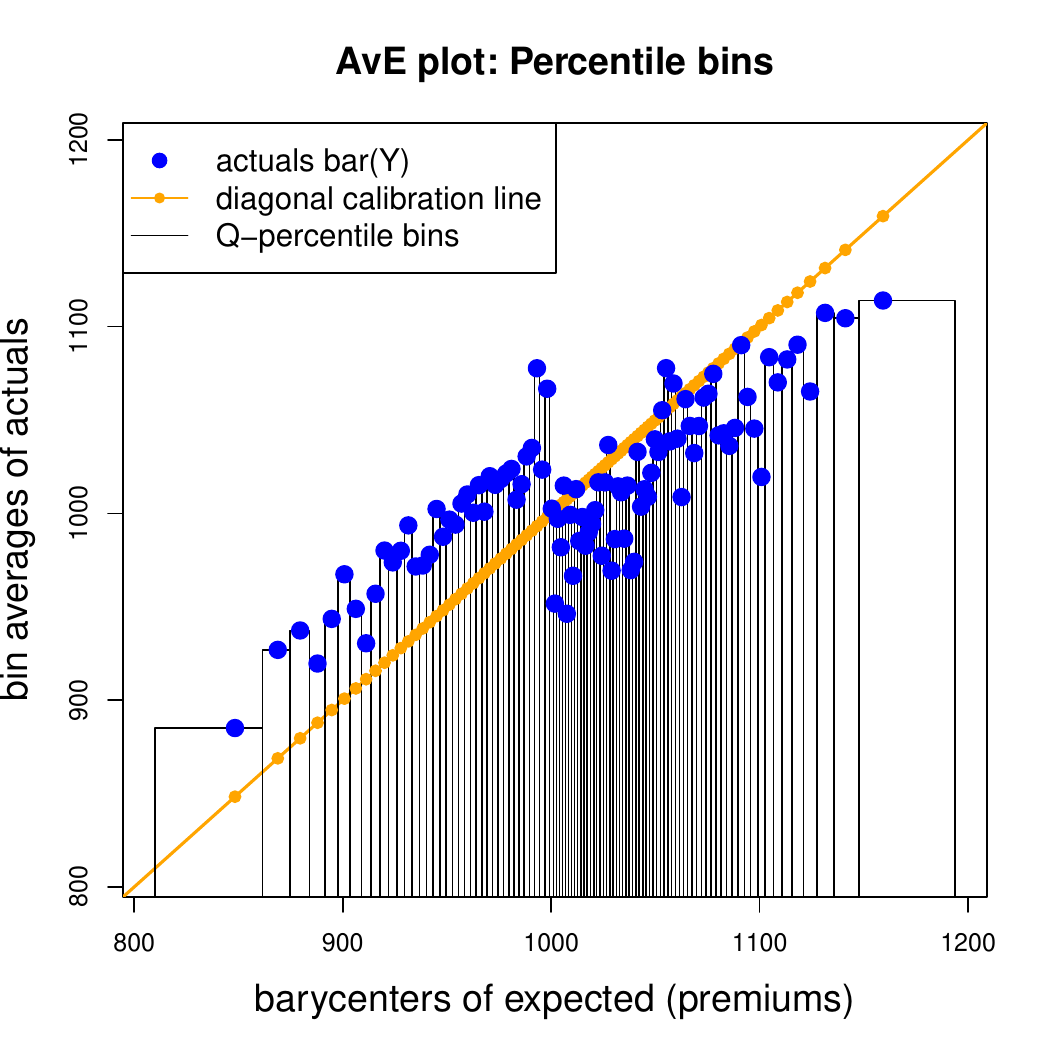}
\end{center}
\end{minipage}
\end{center}
\vspace{-.7cm}
\caption{AvE plot using decile binning $K=10$ (lhs) and percentile binning $K=100$ (rhs).}
\label{Exposure per QBin}
\end{figure}

\medskip
\begin{graybox}
The actual-vs-expected (AvE) plot considers the following sample version of the calibration plot
\begin{equation}\label{AvE plot definition 1}
\left(\overline{\Pi}_k, \,\overline{Y}_k\right) 
\quad {and} \quad
\left(\overline{\Pi}_k,\, \overline{\Pi}_k\right) 
 \qquad \text{ for $k=1, \ldots, K$.}
\end{equation}
\end{graybox}
\medskip

The AvE plot is also called actual-vs-predicted plot.

\medskip

Figure \ref{Exposure per QBin} shows the sample results for decile binning $K=10$ and percentile binning $K=100$.
The blue dots in Figure \ref{Exposure per QBin} show the (recalibrated) pairs $(\overline{\Pi}_k, \,\overline{Y}_k)_{k=1}^K$, and the orange lines in these figures correspond to the diagonal calibration line reflecting the pairs $(\overline{\Pi}_k,\, \overline{\Pi}_k)_{k=1}^K$. The blue dots clearly differ from the orange diagonal line in Figure \ref{Exposure per QBin}. This questions $\q$-calibration. Naturally, this is not a statistical test or a proof of miscalibration because it only corresponds to a graphical inspection that involves noise (irreducible risk). The noise is bigger for $K=100$ and smaller for $K=10$ due to the law of large numbers. A proper statistical test would involve a standard deviation estimate that quantifies the magnitude of this noise. However, from Figure \ref{Exposure per QBin} we see that the blue dots do not seem to randomly fluctuate around the orange diagonal, but there is a systematic pattern. This is a clear indication that $\q$-calibration is violated.

The black lines (rectangles) in Figure \ref{Exposure per QBin} show the quantiles $(\widehat{G}^{\q}_\Pi)^{-1}(k/K)$, $k\in \{0,\ldots, K\}$, these are impacted by the unit premium distribution (which is a beta distribution under $\p$) and by the exposure distribution. We see that there are more decile/percentile bins above the value $\pi_0$ because the exposure tends to be bigger for bigger unit premiums due to model assumption \eqref{selection of exposure distribution}.

\subsubsection{(Empirical) lift chart -- quantile binning}
\label{(Empirical) lift chart}
The sample version of the lift chart is then straightforward from \eqref{AvE plot definition 1}. Instead of plotting the quantile levels on the $x$-axis, we simply use the bin labels $k\in \{1,\ldots, K\}$ instead. 

\medskip
\begin{graybox}
The (sample version of the) lift chart considers the two graphs
\begin{equation}\label{empirical lift chart}
\left(k,\, \overline{Y}_k\right) 
\quad \text{and} \quad
\left(k, \,\overline{\Pi}_k\right) 
 \qquad \text{ for $k=1, \ldots, K$.}
\end{equation}
\end{graybox}
\medskip

These two graphs contain precisely the same information as \eqref{AvE plot definition 1}, but we represent the $x$-axis on a different scale (premium scale vs.~quantile level scale).

\medskip

\begin{figure}[htb!]
\begin{center}
\begin{minipage}[t]{0.45\textwidth}
\begin{center}
\includegraphics[width=\textwidth]{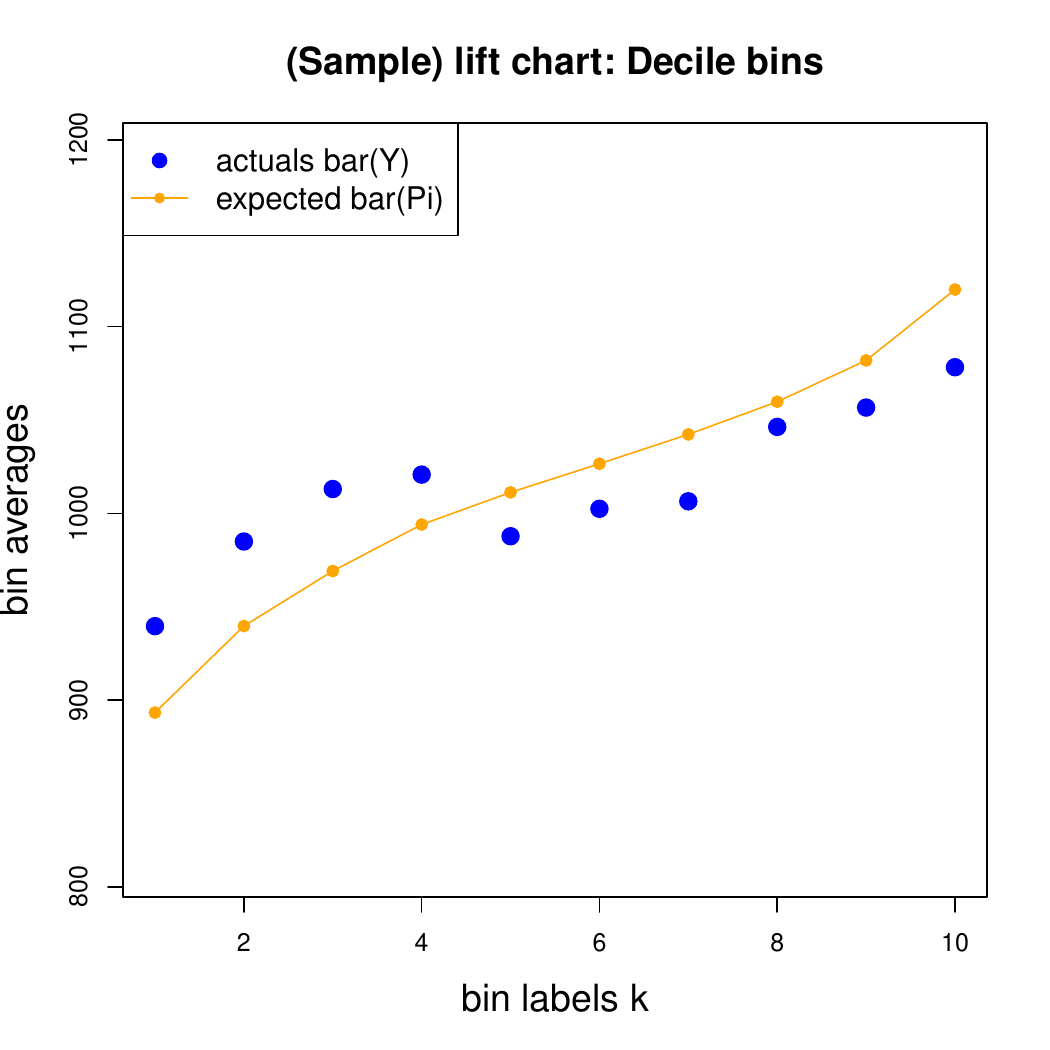}
\end{center}
\end{minipage}
\begin{minipage}[t]{0.45\textwidth}
\begin{center}
\includegraphics[width=\textwidth]{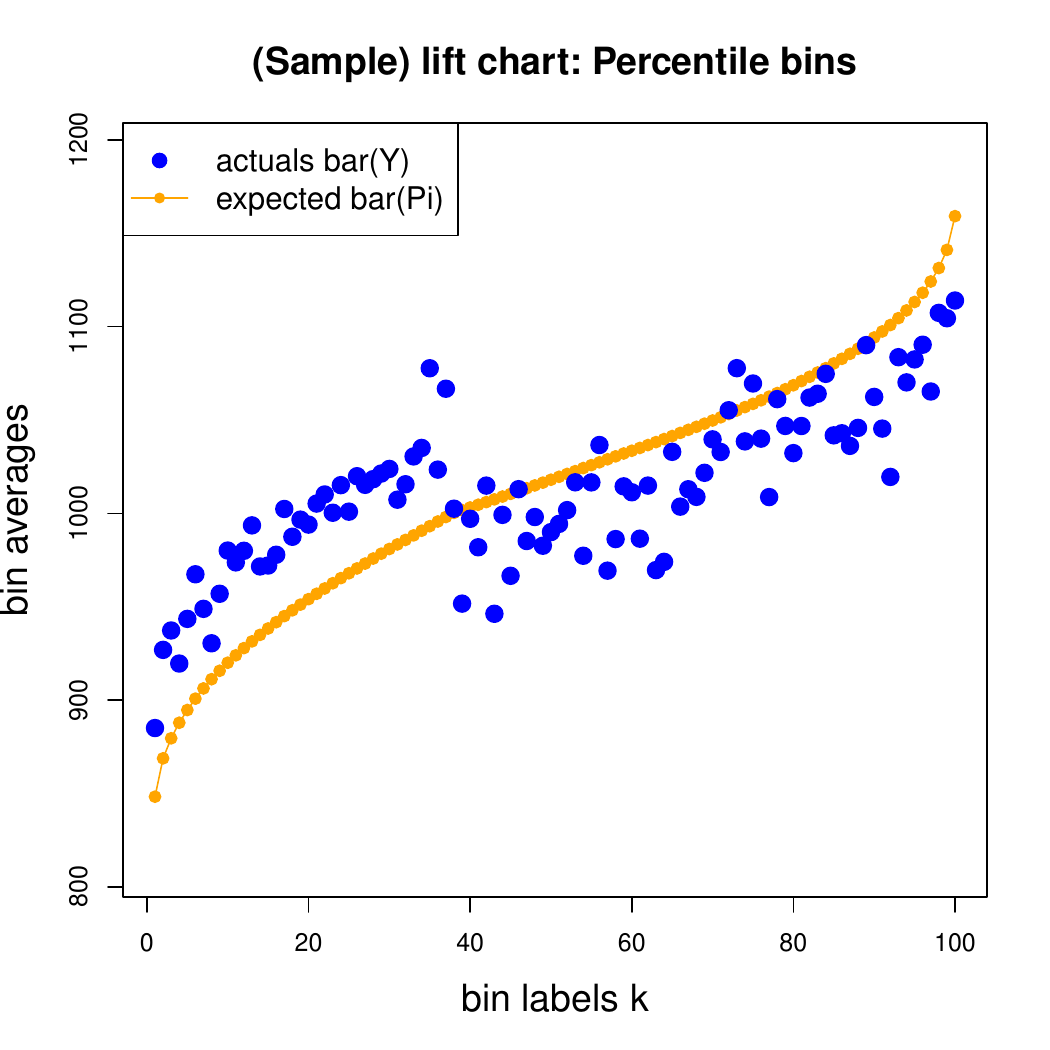}
\end{center}
\end{minipage}
\end{center}
\vspace{-.7cm}
\caption{Sample versions of the lift chart \eqref{empirical lift chart} using decile binning $K=10$ (lhs) and percentile binning $K=100$ (rhs).}
\label{LiftChart Bins 1}
\end{figure}

Figure \ref{LiftChart Bins 1} shows the lift chart \eqref{empirical lift chart} from decile binning and percentile binning using the bins \eqref{Q bins}. The conclusions are essentially the same as from the AvE plot in Figure \ref{Exposure per QBin}, though they may look a bit more obvious in the lift chart (underestimation for small unit premiums and overestimation for large unit premiums).

\medskip 

To turn the AvE plot and the lift chart into statistical methods, we would need to estimate the variance of (the uncertainty in) the empirical means $\overline{Y}_k$ as well as the impact of the quantile binning bounds. This would then allow us to turn the lift chart of Figure \ref{LiftChart Bins 1} into a $\chi^2$-test for the null hypothesis that we have a $\q$-calibrated model; such approaches have been considered, for example, in Gatti \cite{Gatti}, and the binary case is known as the Hosmer--Lemeshow test \cite{Hosmer}; see also Henzi et al.~\cite{Henzi}.

\subsubsection{Actual-vs-expected plot -- statistical smoothing methods}
\label{Actual-vs-expected plot -- statistical smoothing methods}
Quantile binning replaces the individual observations by a finite number of exposure-weighted
bin averages thereby benefiting from the law of large numbers.
We could also use more statistically guided methods that try to interpolate observations by regression functions. The two natural candidates are local regression of Loader \cite{Loader} and isotonic regression of 
Ayer et al.~\cite{Ayer}, Brunk et al.~\cite{Brunk}, Miles \cite{Miles},  Barlow et al.~\cite{BarlowEtAl}, Barlow--Brunk \cite{BarlowBrunk}, Kruskal \cite{Kruskal}. The pool adjacent violators (PAV) algorithm gives a fast implementation solving the isotonic regression numerically; see Leeuw et al.~\cite{Leeuw}. The local regression is a flexible smoothing method, its disadvantage is that it heavily relies on a good hyper-parameter selection. The isotonic regression is in some sense more crude, it essentially relies on the assumption of having a correct risk ranking in $\Pi$. Both methods can be problematic in applications, but they are still the best tools that are currently available.

\paragraph{Exposure-weighted Local Regression.}

This outline follows W\"uthrich et al.~\cite[Section 4.2.1]{AITools}.
For a {\it local regression}, one selects a bandwidth $\delta(\Pi)>0$ that may depend on the unit premium rule $\Pi$. This gives the smoothing window (interval)
\begin{equation*}
\Delta (\Pi) = \Big( \Pi - \delta(\Pi), \,\Pi + \delta(\Pi)\Big).
\end{equation*}
For a local regression around $\Pi$ only the instances $(Y_i,V_i, \Pi_i)$ in this smoothing window $\Pi_i \in \Delta(\Pi)$ are considered. Typically, one chooses $\delta (\Pi)$ such that the smoothing window $\Delta (\Pi)$ contains 10\% or 20\% of the available sample. Then, one select a weighting function $w_\Pi: \Delta(\Pi) \to \R_+$. This weighting function acts as a kernel that weighs observations closer $\Pi$ more than those that are at the boundary of the smoothing window. A popular choice is a (scaled) tricube weighting function. Finally, one select a class of splines, e.g., quadratic polynomials
\begin{equation}\label{quadratic splines}
x~\mapsto~\mu_{\boldsymbol{\vartheta}}(x;\Pi)=\vartheta_0 + \vartheta_1(x-\Pi)+\vartheta_2(x-\Pi)^2,
\end{equation}
with regression parameter $\boldsymbol{\vartheta}=(\vartheta_0,\vartheta_1,\vartheta_2)^\top$.
This motivates the local regression problem
\begin{equation*}
\widehat{\boldsymbol{\vartheta}}^{\Pi} = \underset{\boldsymbol{\vartheta} \in \R^3}{\arg\min}\,
\sum_{i=1}^n V_i\, \1_{\{\Pi_i \in \Delta(\Pi)\}}\,
w_\Pi\left(\Pi_i\right) 
\Big( Y_i - \mu_{\boldsymbol{\vartheta}}(\Pi_i;\Pi)\Big)^2.
\end{equation*}
The fitted local regression value in $\Pi$ is then obtained by setting
\begin{equation*}
\widehat{\mu}^{\rm loc}(\Pi)= \mu_{\widehat{\boldsymbol{\vartheta}}^{\Pi}}(\Pi;\Pi)= \widehat{\vartheta}^{\Pi}_0.
\end{equation*}
This is the local regression method as implemented in the {\sf R} package {\tt locfit} of Loader \cite{Loader}. The hyper-parameters involved are the bandwidth $\delta(\Pi)$ (usually a nearest neighbor fraction around $\Pi$), the weighting function $w_\Pi$ (usually tricube) and the splines, usually the step function (local constant fit), linear, quadratic or cubic polynomials.

The bandwidth controls the bias-variance trade-off. A small $\delta(\Pi)$ follows local variation closely but may produce noisy curves, whereas a large $\delta(\Pi)$ considers more information which results in a better smoothing, which however is more prone to local miscalibration.

\begin{figure}[htb!]
\begin{center}
\begin{minipage}[t]{0.45\textwidth}
\begin{center}
\includegraphics[width=\textwidth]{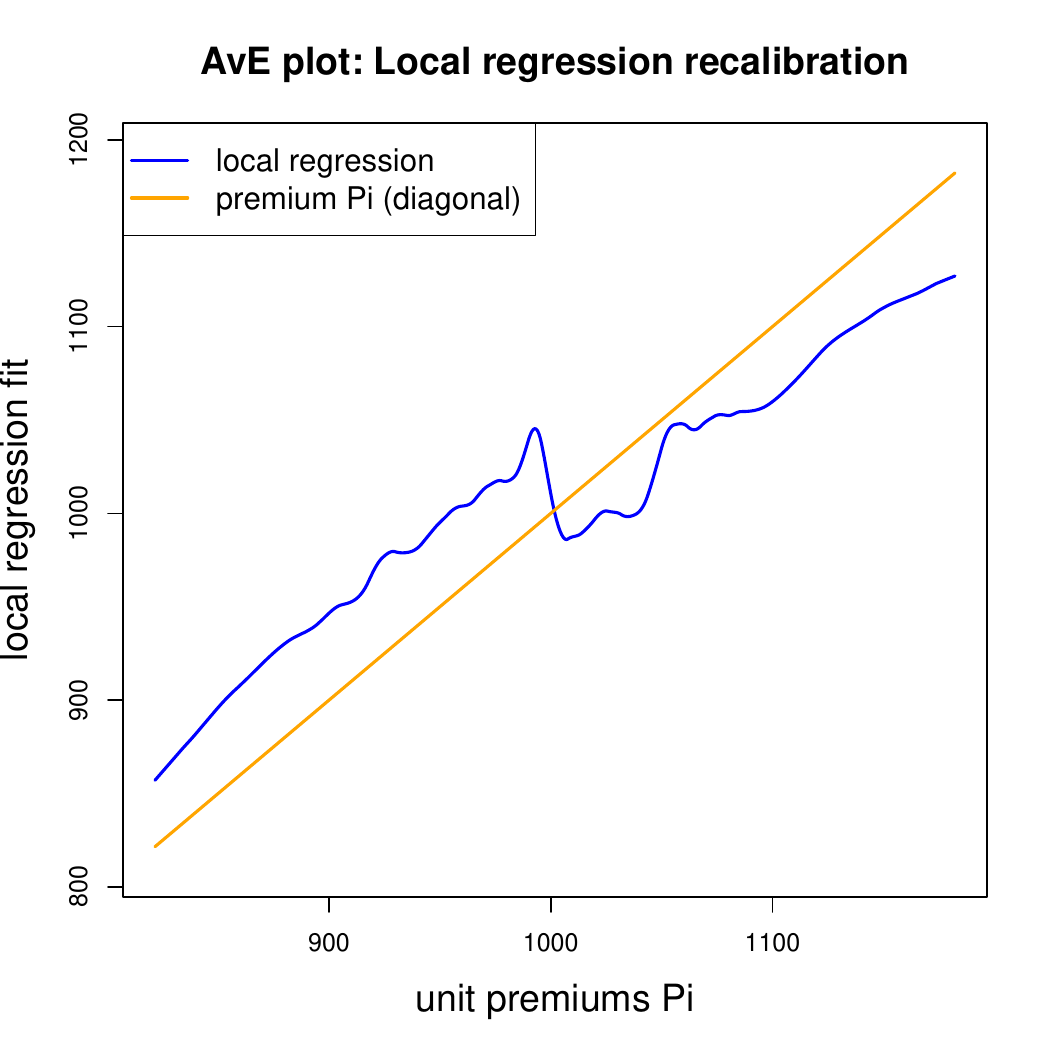}
\end{center}
\end{minipage}
\begin{minipage}[t]{0.45\textwidth}
\begin{center}
\includegraphics[width=\textwidth]{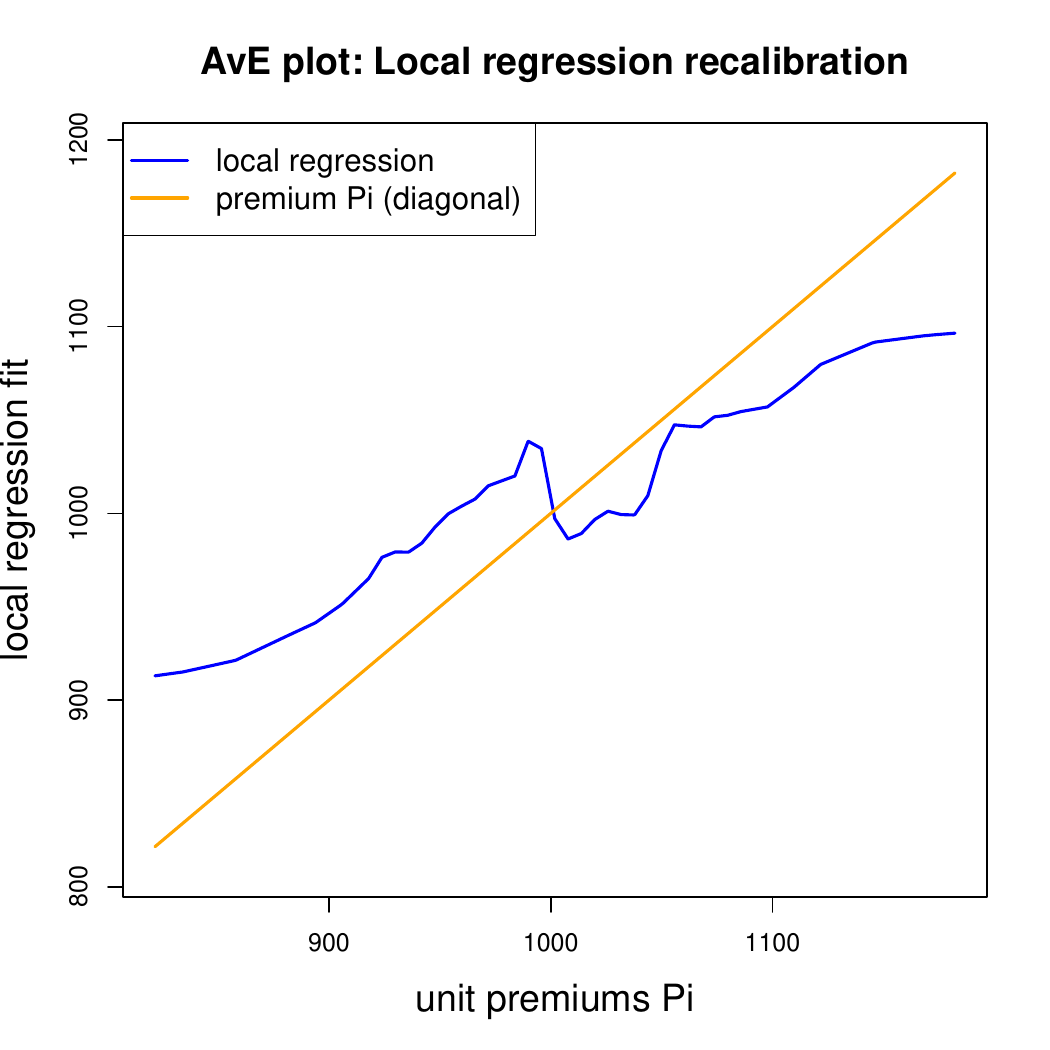}
\end{center}
\end{minipage}
\end{center}
\vspace{-.7cm}
\caption{AvE plot using local regression smoothing with (lhs) quadratic polynomials and (rhs) step function spline, both plots use a nearest neighbor fraction of 10\%.}
\label{Local regression 1}
\end{figure}

Figure \ref{Local regression 1} shows the local regression smoothed 
AvE plots. We use a nearest neighbor fraction of 10\% for the smoothing windows $\Delta(\Pi)$, the tricube kernel for $w_\Pi$, and the left-hand side uses  the quadratic regression splines \eqref{quadratic splines} and the right-hand side step functions. Note that the nearest neighbor fraction does not account for the exposures, but it is selected under the population measure $\p$ -- this is implied by the available software package and naturally we would prefer to have a $\q$-measure option. The step function spline can be seen as a rolling window approach of the quantile binning, though, still using the tricube weighting in this rolling window.

From Figure \ref{Local regression 1} we observe two wiggly curves that essentially show the right miscalibration picture. However, in a more complicated miscalibration situation it is often difficult to say whether the local regression is wiggly because of the noisy data (bias-variance trade-off) or a miscalibration. Therefore, local regression should be viewed as an indicative rather than definitive diagnostics.

\paragraph{Exposure-weighted Isotonic Regression.}

A different approach assumes that the original premium rule $\Pi$ gives the correct risk ranking, so
that $m_{\q}(\Pi)$ is non-decreasing in $\Pi$ (or comonotonic in the sense of random variables). Under this ranking assumption, define the {\it isotonic estimator} by
\begin{equation}\label{isotonic regression 0}
  \widehat{\bm}_{\q}^{\rm iso} ~=~\underset{\bm \in \R^n}{\arg\min}~
  \sum_{i=1}^n V_i \left(Y_i - m_i\right)^2 \qquad \text{subject to:
  $\Pi_i \le \Pi_j$ implies  $m_i \le m_j$.}
\end{equation}
Thus, the premiums $(\Pi_i)_{i=1}^n$ provide a ranking on the instances $i\in \{1,\ldots, n\}$, and this ranking is preserved by the isotonic regression solution $\widehat{\bm}_{\q}^{\rm iso} \in \R^n$. In between these values $\widehat{\bm}_{\q}^{\rm iso} \in \R^n$, we select a step function interpolation.

The step function interpolation implies that the isotonic regression results in a binning with constant recalibrated unit premiums in the bins. In contrast to quantile binning, the amount of data in each bin is not a selected hyper-parameter, but the algorithm \eqref{isotonic regression 0} decides on the optimal bin sizes such that isotonicity w.r.t.~the unit premiums $(\Pi_i)_{i=1}^n$ is preserved. Since on each bin, $\widehat{\bm}_{\q}^{\rm iso}$ corresponds to the exposure-weighted sample mean, this automatically implies that we have sample $\q$-calibration on all bins. Thus, this method performs (a discretized version of) the recalibration step; this was used in an essential way in W\"uthrich--Ziegel \cite{WZiegel}.

The main advantages of isotonic regression over local regression are that it does not involve any hyper-parameter selection and it results in a calibrated solution that is optimally binned w.r.t.~\eqref{isotonic regression 0}. The disadvantages are that it relies on a correct risk ranking in $(\Pi_i)_{i=1}^n$, this is not satisfied in our example, and under noisy data, the number of bins can be very small resulting in a very crude step function. Moreover, isotonic regression tends to overfit at the boundary observations of the unit premium.
For more discussion, we refer to W\"uthrich et al.~\cite[Section 4.2.2]{AITools}. Generally, we recommend isotonic regression for calibration inspection, but the resulting recalibrated regression function is often too crude for insurance pricing because a low signal-to-noise ratio typically leads to a crude binning.

\medskip

\begin{figure}[htb!]
\begin{center}
\begin{minipage}[t]{0.45\textwidth}
\begin{center}
\includegraphics[width=\textwidth]{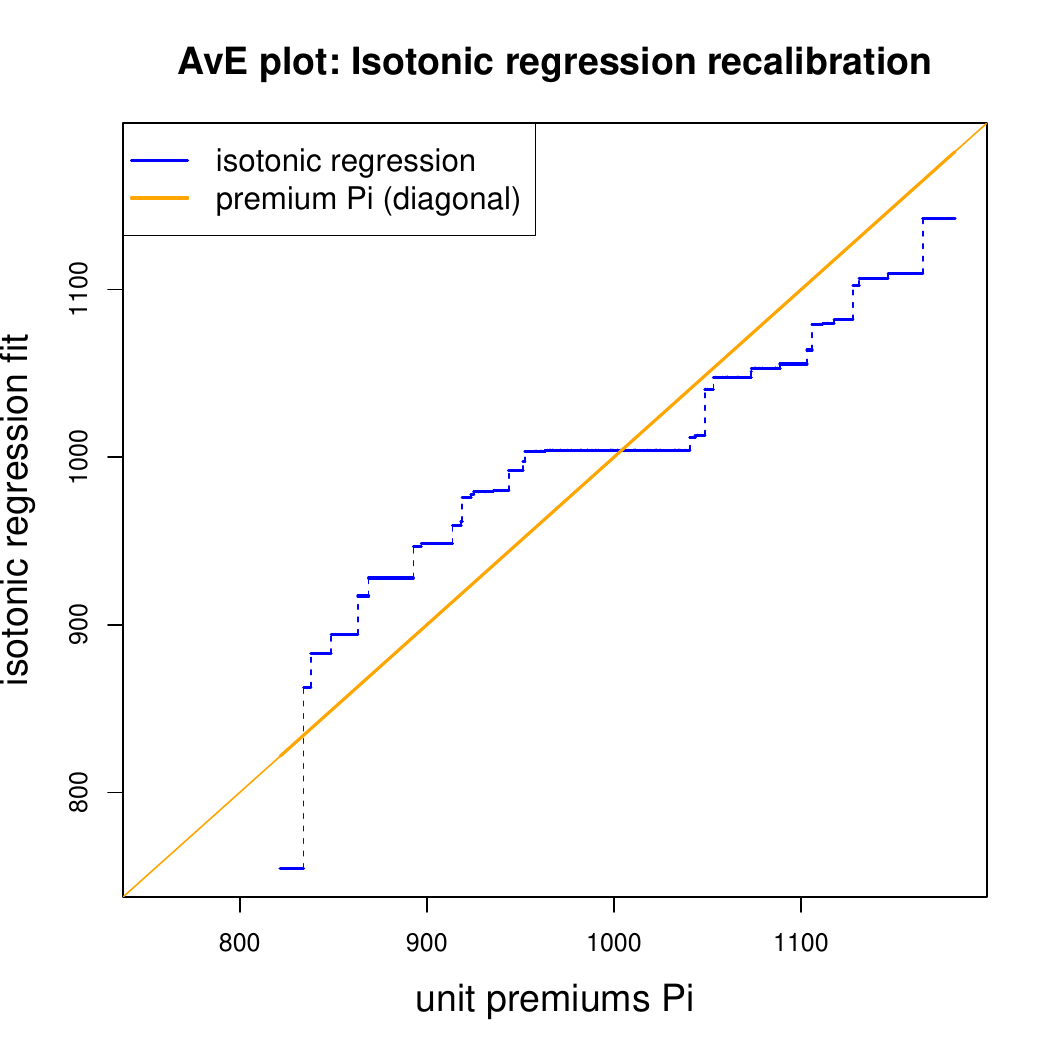}
\end{center}
\end{minipage}
\begin{minipage}[t]{0.45\textwidth}
\begin{center}
\includegraphics[width=\textwidth]{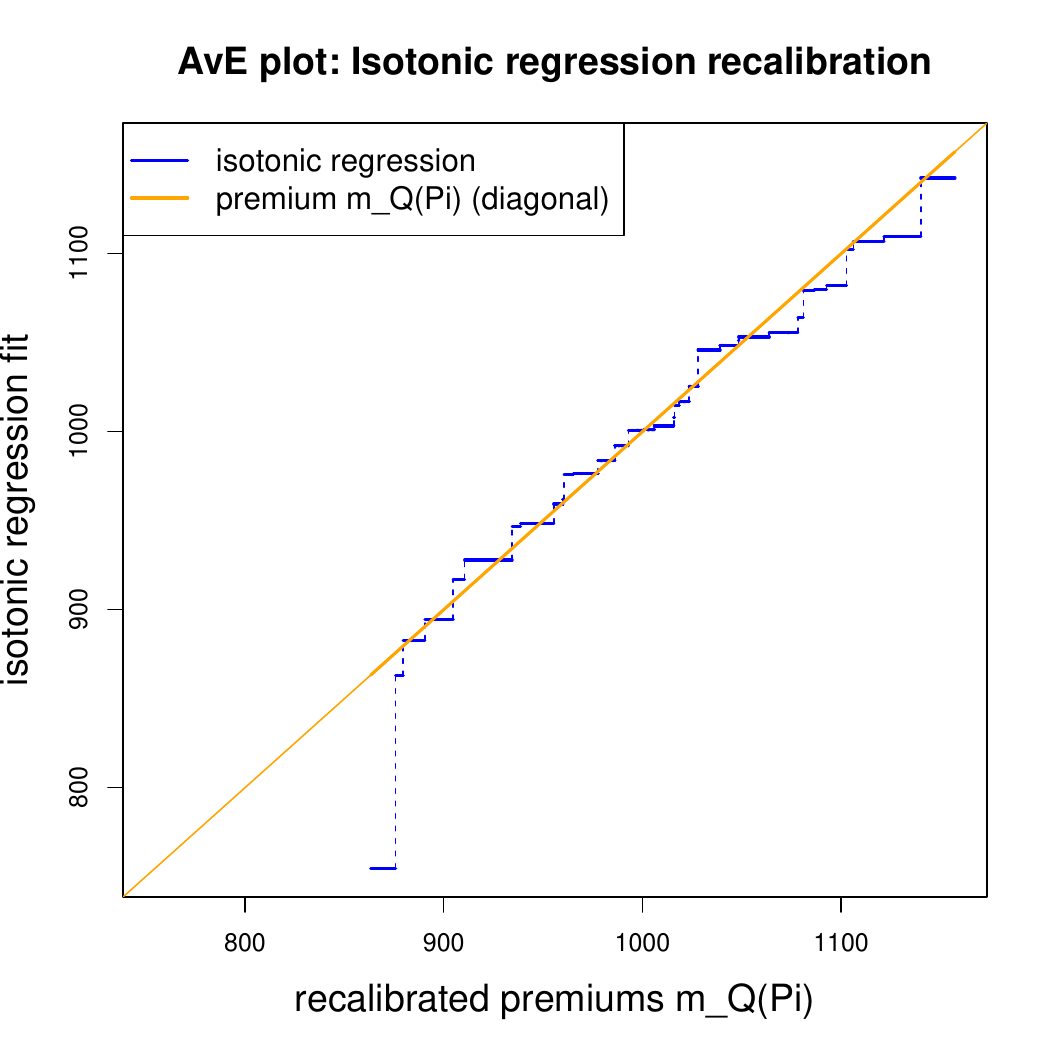}
\end{center}
\end{minipage}
\end{center}
\vspace{-.7cm}
\caption{AvE plot using isotonic regression smoothing with (lhs) risk ranking $\Pi$ and (rhs) risk ranking $m_{\q}(\Pi)$.}
\label{Isotonic regression 1}
\end{figure}

Figure \ref{Isotonic regression 1} shows that isotonic regression solution, on the left-hand side we use the risk ranking of the unit premiums $(\Pi_i)_{i=1}^n$ and on the right-hand side the (correct) risk ranking of the recalibrated premium  $(m_{\q}(\Pi_i))_{i=1}^n$. The former does not provide the correct risk ranking, see
Remark \ref{wrong risk ranking}, the latter does. Figure \ref{Isotonic regression 1} is also called CORP (consistent, optimally binned, reproducible and PAV) diagram in Dimitriadis et al.~\cite{DimitriadisCORP} and Gneiting--Resin \cite{GneitingResin}.

\medskip

We have the following observations:
\begin{itemize}
\item Figure \ref{Isotonic regression 1} (lhs) based on risk ranking $(\Pi_i)_{i=1}^n$ detects the calibration issue in $\Pi$. 
\item Figure \ref{Isotonic regression 1} (rhs) based on risk ranking $(m_{\q}(\Pi_i))_{i=1}^n$ correctly fluctuates around the orange diagonal, in fact, under infinite sample sizes the isotonic regression should coincide with this orange diagonal, i.e., $\widehat{\bm}_{\q}^{\rm iso}$ should match $m_{\q}(\Pi)$ if the isotonic regression is performed on the correct risk ranking. The difference in Figure \ref{Isotonic regression 1} (rhs) is a consequence of the noise in the responses $Y$.
\item The wrong risk ranking in $(\Pi_i)_{i=1}^n$ leads to a flat price around the center in Figure \ref{Isotonic regression 1} (lhs).
\item There is some overfitting in the tails, especially, in the lower tail. If the lowest unit premium has the smallest claim, the isotonic regression just reports this claim, i.e., no averaging takes place. For this reason, in real applications, smallest bins should be merged as well as largest bins.
\end{itemize}

As seen from Figure \ref{Isotonic regression 1}, the isotonic regression gives a natural binning that is optimal according to \eqref{isotonic regression 0}. Instead of plotting the AvE plot as in Figure \ref{Isotonic regression 1}, we could also label the bins in increasing order that results in a lift chart (with isotonically optimal bins w.r.t.~the initial premium $\Pi$). We provide such a plot in the real-data example below, see Figure \ref{MTPL isotonic regression lift chart}.

\subsubsection{Double-lift charts}
\label{sec: doublelift}

Our main focus in the previous section was on $\q$-calibration which is one part of a good predictive model. The other part is discrimination (resolution) which is going to be studied in the next section. We have briefly touched upon discrimination in the lift charts of Figures \ref{Lift chart 1} and \ref{LiftChart Bins 1} to explain the terminology {\it lift}. Naturally, if we have two pricing schemes $\Pi^a$ and $\Pi^b$, we would like to compare them w.r.t.~the lift and resolution they provide. A disadvantage of the sample version of the lift charts is that the quantile binning is done w.r.t.~the pricing schemes $\Pi^a$ and $\Pi^b$, respectively. Consequently, the resulting bins will generally not contain exactly the identical policies $i\in \{1,\ldots, n\}$, e.g., policy $i=1$ may be in the second smallest bin for $\Pi^a$ and in the third smallest one for $\Pi^b$. This is an inherent difficulty of binning approaches, and it makes a direct comparison difficult.

The double lift chart presented in Goldburd et al.~\cite{Goldburd} solves the binning problem by simultaneously considering both premium schemes $\Pi^a$ and $\Pi^b$ for the binning. Assume that all considered unit premiums are strictly positive, a.s. We analyze the ratio
\begin{equation*}
 \kappa=\frac{\Pi^b}{\Pi^a}.
\end{equation*}
Small values of $\kappa$ identify policies for which $\Pi^b$ is low relative to $\Pi^a$, whereas large values of $\kappa$ identify the reverse. We can then study the distribution $G_\kappa^{\q}$ of this ratio $\kappa$ under the exposure-weighted measure $\q$ for the quantile binning. Its sample version is given by
\begin{equation*}
 \widehat G_{\kappa}^{\q}(t)
 =\frac{1}{\sum_{i=1}^n V_i}\,
  \sum_{i=1}^n V_i\,\1_{\{\kappa_i\le t\}},
\end{equation*}
and this yields the quantile binning for $k=1,\ldots, K$
\begin{equation*}
\widehat{\cal I}^{\kappa}_k = \left\{ i \in \{1,\ldots, n\} \,\middle|\, 
(\widehat{G}^{\q}_\kappa)^{-1}((k-1)/K)< \kappa_i \le (\widehat{G}^{\q}_\kappa)^{-1}(k/K)\right\}.
\end{equation*}
The remaining parts are then completely analogous to the lift chart. We define the exposure-weighted averages
\begin{align*}
 V_k^{\kappa}
 &=\sum_{i\in\widehat{\cal I}^{\kappa}_k}V_i,\\
 \overline Y_k^{\kappa}
 &=\frac{1}{V_k^{\kappa}}
   \sum_{i\in\widehat{\cal I}^{\kappa}_k}V_iY_i,\\
 \overline\Pi_k^{a,\kappa}
 &=\frac{1}{V_k^{\kappa}}
   \sum_{i\in\widehat{\cal I}^{\kappa}_k}V_i\Pi_i^a,
 \qquad
 \overline\Pi_k^{b,\kappa}
 =\frac{1}{V_k^{\kappa}}
   \sum_{i\in\widehat{\cal I}^{\kappa}_k}V_i\Pi_i^b.
\end{align*}

\medskip
\begin{graybox}
The double-lift chart considers the three graphs
\begin{equation}
 \Big(k,\overline Y_k^{\kappa}\Big),
 \quad
 \Big(k,\overline\Pi_k^{a,\kappa}\Big)
 \quad \text{and} \quad
 \Big(k,\overline\Pi_k^{b,\kappa}\Big) \qquad
 \text{ for $k=1,\ldots,K,$}
 \label{eq:double-lift-plot}
\end{equation}
on the same axes.
\end{graybox}
\medskip

\begin{figure}[htb!]
\begin{center}
\begin{minipage}[t]{0.45\textwidth}
\begin{center}
\includegraphics[width=\textwidth]{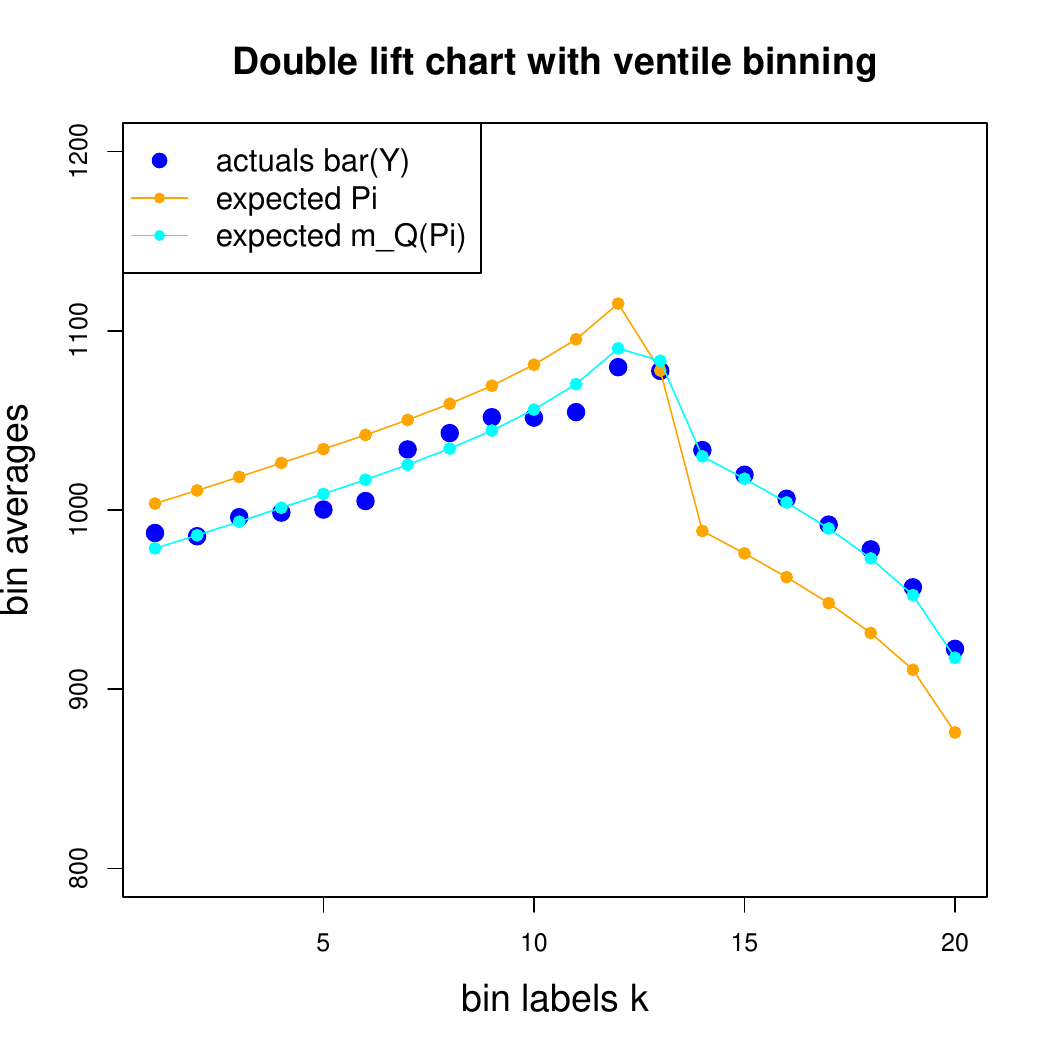}
\end{center}
\end{minipage}
\end{center}
\vspace{-.7cm}
\caption{Double lift chart using ventile binning $K=20$ for the ratio $\kappa=m_{\q}(\Pi)/\Pi$.}
\label{double lift chart}
\end{figure}

Figure \ref{double lift chart} gives the double lift chart for the ratio 
$\kappa=\Pi^b/\Pi^a=m_{\q}(\Pi)/\Pi$ using ventile binning $K=20$. On the left-hand side of the plot there are the policies where the unit premium $\Pi^ a=\Pi$ most severely overestimates the losses relative to $\Pi^b=m_{\q}(\Pi)$, and on the right-hand side it most severely underestimates the losses; remark that $m_{\q}(\Pi)$ is $\q$-calibrated by construction. The conclusion of this plot is that the expected values $\overline{m_{\q}(\Pi)}^{b, \kappa}$ match the actuals $\overline{Y}_k^\kappa$ much better than the expected values $\overline{\Pi}_k^{a,\kappa}$, thus, there is a clear preference for the recalibrated premiums from this double lift chart.

\section{Tools to assess discrimination and resolution}
\label{sec: Murphy diagram}
The previous graphical methods have mainly served to evaluate the calibration property \eqref{Q calibration}. These previous tools did not directly target at comparing two different pricing schemes $(Y, V, \Pi^a)$ and $(Y, V, \Pi^b)$ in terms of discrimination and resolution. The present section introduces a single score (summary statistic) that allows one for selecting among multiple competing pricing schemes mainly w.r.t.~discrimination, but there will also be some calibration terms involved. This score is based on {\it strictly consistent loss functions}; see Gneiting--Raftery \cite{GneitingRaftery} and Gneiting \cite{Gneiting}. For interpretation of strictly consistent loss functions, we introduce another graphical tool called the {\it Murphy diagram}.

\subsection{Strictly consistent loss function}
\label{sec: Strictly consistent loss function}
\subsubsection{Bregman divergence}
We give a brief introduction and present the main tools that are relevant for our purposes.
Consider a loss function $(y,x)\mapsto L(y,x)$ that compares responses $y$ and predictions $x$; the support of this loss function is a possibly infinite rectangle in $\R^2$.
Assume that the response $Y$ has a finite mean $\E[Y]$, and our goal is to determine this mean through an expected loss minimization
\begin{equation}\label{expected loss minimization}
\widehat{\mu}_0 ~\in~ \underset{ x }{\arg\min}~ \E \left[ L(Y,x) \right].
\end{equation}
Generally, this minimization \eqref{expected loss minimization} will fail to find the true mean $\E[Y]$, which is the motivation to define strictly consistent scoring for mean estimation.

\begin{defi}\label{strictly consistent}
A loss function $L$ is consistent
for mean estimation of $Y$ if 
\begin{equation}\label{consistent}
\E \left[ L(Y,\E[Y]) \right] ~\le~ \E \left[ L(Y,x) \right]
\qquad \text{ for all $x$ where the right-hand side is defined.}
\end{equation}
The loss function $L$ is strictly consistent for mean
estimation if an equality in \eqref{consistent} holds if and only if $x$ is the true mean $x=\E[Y]$.
\end{defi}

Under a strictly consistent loss function $L$ for mean estimation, the solution to \eqref{expected loss minimization} is the singleton $\widehat{\mu}_0=\E[Y]$. This makes it natural to measure the quality of a predictor $x$ for $Y$ by the strictly consistent loss $L(Y,x)$ -- smaller is better, and the (unique) minimum is the true mean $\E[Y]$. Remark that generally, a loss $L$ does not need to be symmetric in its arguments.

\medskip

The crucial mathematical result of  Savage \cite{Savage} and Gneiting \cite[Theorem 7]{Gneiting} yields that under mild technical conditions the (strictly) consistent loss functions are exactly the {\it Bregman divergences} \cite{Bregman} with (strictly) convex generators $\varphi$, i.e., the (strictly) consistent loss functions for mean estimation take the Bregman divergence form
\begin{equation}\label{Bregman loss definition}
  L_\varphi(y,x) =\varphi(y)-\varphi(x)-\varphi'(x)(y-x) ~\ge~ 0,
\end{equation}
with (sub-)gradient $\varphi'$ of the convex generator $\varphi$. This choice is regardless of the underlying probability law $\p$ as long as the left-hand side of \eqref{consistent} exists. We therefore always work with {\it Bregman losses} \eqref{Bregman loss definition} for mean estimation and  mean validation. Examples of Bregman losses are the square loss function, the Poisson deviance loss, the gamma deviance loss, and generally, all deviance losses from the EDF.

\medskip

This result of (strict) consistency for mean estimation carries over to conditional probability laws $\p(\cdot\mid{\cal A})$, yielding that the conditional expectation 
$\E[Y\mid {\cal A}]$ can be found by the Bregman loss minimization among the ${\cal A}$-measurable predictors; we come back to this in Section \ref{sec: Murphy's decomposition}, below. This makes the expected Bregman losses naturally suited to not only compare deterministic predictors $x$ as in \eqref{consistent}, but we can equally use it for ranking premiums $\Pi$. This motivates the following consideration:

For a given Bregman loss $L_\varphi$ and two competing pricing schemes $(Y, V, \Pi^a)$ and $(Y, V,\Pi^b)$, the one with the smaller expected Bregman loss should be preferred, i.e., 
\begin{equation}\label{preference order 2B}
\text{prefer $\Pi^a$ over $\Pi^b$ for $Y$} \qquad \Longleftrightarrow
\qquad
\E[L_\varphi(Y,\Pi^a)] \le \E[L_\varphi(Y,\Pi^b)].
\end{equation}
\medskip

{\bf Attention.} The difficulty with \eqref{preference order 2B} is that this preference order depends on the specific choice of the convex generator $\varphi$. Preference \eqref{preference order 2B} generally does not hold simultaneously for all convex generators $\varphi$; the simultaneous dominance under all convex generators (with aligned supports) is called {\it forecast dominance}, see Kr\"uger--Ziegel \cite[Definition 2.1]{KrugerZiegel}. In most practical forecast problems forecast dominance is not expected to hold, basically it is related to one premium rule using more information than the other, if both are calibrated, a precise mathematical statement involves convex orders; see Kr\"uger--Ziegel \cite[Theorem 3.1]{KrugerZiegel}.

\medskip

Testing preference \eqref{preference order 2B} for all convex generators $\varphi$ may not be feasible. Section \ref{Elementary losses and Murphy diagram} considers the simpler class of {\it elementary losses} which are parametrized through one single parameter $\theta \in \R$. This makes it easier to test \eqref{preference order 2B} for all elementary losses because we work on a parametrized class of (simple) losses. Moreover, the elementary losses can be seen as the building blocks of Bregman losses, this is explained in Section 
\ref{Elementary loss representation of the empirical Bregman loss}, below.

\subsubsection{Exposure-weighted Bregman divergence}

The above consistent scoring introduction considers the classical situation without exposures $V>0$. As was highlighted by Lindholm et al.~\cite{LLP}, in actuarial modeling, one typically considers weighted Bregman losses, this connects to Example \ref{ex: model fitting}. For a strictly convex generator $\varphi$,  \eqref{expected loss minimization} is replaced by 
\begin{equation}\label{expected loss minimization 2}
\mu_{\q} ~=~ \underset{ x }{\arg\min}~ \E \left[V\, L_\varphi(Y,x) \right]=\underset{ x }{\arg\min}~ \E_{\q} \left[L_\varphi(Y,x) \right],
\end{equation}
the latter uses the measure transformation \eqref{Q Radon Nikodym}. This weighted minimization provides a solution that typically differs from $\E[Y]$, namely, it provides the $\q$-mean of $Y$
\begin{equation}\label{realized-exposure target}
\mu_{\q} =\E_{\q}\left[Y \right] =\frac{1}{\E[V]}\,\E\left[VY \right]
=\frac{1}{\E[V]}\,\E\left[Z \right].
\end{equation} 
Concluding, in actuarial model fitting \eqref{expected loss minimization 2}, one targets the so-called {\it realized-exposure target} \eqref{realized-exposure target}. 
This accounts correctly for the dependence between the loss costs $Y$ and the exposure $V$, we also refer to \eqref{positive correlation}, and $Z=VY$ is the loss that the insurer will cover.

\medskip
\begin{graybox}
A natural consequence of this weighting is that also the preference order \eqref{preference order 2B} may change, and we instead consider
for a given Bregman loss $L_\varphi$:
\begin{eqnarray}\label{preference order 2V}
\text{prefer $\Pi^a$ over $\Pi^b$ for $Y$} &\quad \Longleftrightarrow
\quad&
\E[V\,L_\varphi(Y,\Pi^a)] \le \E[V\,L_\varphi(Y,\Pi^b)]
\\\nonumber
&\quad \Longleftrightarrow \quad&
\E_{\q}[L_\varphi(Y,\Pi^a)] \le \E_{\q}[L_\varphi(Y,\Pi^b)].
\end{eqnarray}
\end{graybox}
\medskip

We come back to the recalibration step of Definition \ref{def: Recalibrated unit premium}.

\begin{example}[Recalibration step and forecast dominance]
\label{Recalibration step and forecast dominance}
\normalfont
The recalibrated unit premium under the $\q$-measure is given by
\begin{equation*}
m_{\q}(\Pi) = \E_{\q} \left[Y \mid \Pi \right]
~\in~
\underset{ x }{\arg\min}~ \E_{\q} \left[ L_\varphi(Y,x)\mid \Pi \right]
\qquad \text{ a.s.},
\end{equation*}
for any convex generator $\varphi$ where the expected values exist. This implies
\begin{equation*}
\E_{\q} \left[ L_\varphi(Y,m_{\q}(\Pi)) \right]
\le \E_{\q} \left[ L_\varphi(Y,\Psi) \right],
\end{equation*}
for any $\sigma(\Pi)$-measurable random variable $\Psi$.
In particular, this applies to $\Psi=\Pi$, and as a consequence
\begin{equation*}
\text{prefer $m_{\q}(\Pi)$ over $\Pi$ for $Y$ under $L_\varphi$.}
\end{equation*}
Since this holds for any convex generator $\varphi$, we obtain forecast dominance in the sense of Kr\"uger--Ziegel \cite[Definition 2.1]{KrugerZiegel}. 
\medskip

\begin{center}
\noindent\fbox{%
  \begin{minipage}{0.9\textwidth}
    ~\\
Thus, the recalibrated unit premium $m_{\q}(\Pi)$ is the most accurate $\sigma(\Pi)$-measurable predictor of $Y$ under $\q$ in the forecast dominance sense.
\\~    
  \end{minipage}%
}
\end{center}

\medskip

This concludes the example.
\EndExample
\end{example}

\subsection{Elementary losses and Murphy diagram}
\label{Elementary losses and Murphy diagram}
\subsubsection{Elementary losses}
We make a first step towards a better understanding of the Bregman loss and a specific choice of a convex generator $\varphi$. This is done by discussing the {\it elementary losses}. The elementary losses have been introduced by Ehm et al.~\cite{Ehm}, and they allow for a mixture representation of the Bregman loss.
The {\it elementary losses} are based on the convex functions, for $\theta \in \R$, 
\begin{equation}\label{elementary generator}
x\in \R ~\mapsto~\varphi_\theta(x)=(x-\theta)_+,
\end{equation}
and they are given by
\begin{equation}\label{elementary scores}
L_\theta(y, x)=(y-\theta)_+-(x-\theta)_+-(y-x)\1_{\{x>\theta\}}.
\end{equation}
The last term in \eqref{elementary scores} gives a sub-derivative of $x\mapsto \varphi_\theta(x)$ with a non-differentiability occurring at $x=\theta$, an other possible choice is $\1_{\{x\ge \theta\}}$. 

\medskip

\underline{Notation:}
$L_\varphi$ denotes a Bregman loss for a general convex generator $\varphi$, and $L_\theta=L_{\varphi_\theta}$ is an elementary loss with generator $\varphi_\theta$ for fixed $\theta \in \R$.

\medskip

The elementary loss \eqref{elementary scores} has many equivalent reformulations
\begin{eqnarray}
L_\theta(y,x)&=&\nonumber
(y-\theta)_+-(x-\theta)_+-(y-x)\1_{\{x>\theta\}}
\\&=&\nonumber (y-\theta)\1_{\{y>\theta\}}-(x-\theta)\1_{\{x>\theta\}}-(y-x)\1_{\{x>\theta\}}
\\&=&\nonumber
(y-\theta)\1_{\{y>\theta\}}-(y-\theta)\1_{\{x>\theta\}}
\\&=&\nonumber (y-\theta)\left(\1_{\{y>\theta\}}-\1_{\{x>\theta\}}\right)
\\&=&\nonumber (y-\theta)\1_{\{y>\theta\ge x\}}
      + (\theta-y)\1_{\{y\le \theta <x\}}
  \\&=&\label{elementary scores 2}
        |y-\theta|\left(\1_{\{y\le \theta <x\}}+\1_{\{x\le \theta<y\}}\right).
\end{eqnarray}
The last expression says that it measures the distance $|y-\theta|$, whenever $\theta$ is between the response $y$ and the forecast $x$. This will be illustrated and discussed in an AvE plot in Figure \ref{Murphy Pi 2}, below.

\subsubsection{Murphy diagram}
The elementary loss $L_\theta$ is a Bregman loss and it can be used for 
preference ordering \eqref{preference order 2V}, it is a consistent loss function for mean estimation, but it is not strictly consistent because \eqref{elementary generator} is only convex. Similar to the selection of the convex generator $\varphi$ for preference ordering \eqref{preference order 2V}, we may raise the question about the specific choice of $\theta$ in the elementary loss $L_\theta$ selection. The {\it Murphy diagram} considers the expected elementary loss as a function of $\theta$
\begin{equation}\label{Murphy graph}
\theta \in \R ~\mapsto ~ \E_{\q} \left[L_\theta(Y,\Pi) \right]
=\frac{1}{\E \left[V \right]}\,\E \left[V\,L_\theta(Y,\Pi) \right].
\end{equation}
We now generally work with the exposure-weighted measure $\q$.

\medskip
\begin{graybox}
The sample version of the Murphy diagram on the test sample ${\cal T}=(Y_i,V_i,\Pi_i)_{i=1}^n$ is given by
\begin{equation}\label{empirical Murphy graph}
\theta ~\mapsto~ \widehat{\E}_{\q}[L_\theta(Y,\Pi)] = \frac{1}{ \sum_{i=1}^n V_i}\, \sum_{i=1}^n
V_i\,L_\theta(Y_i,\Pi_i).
\end{equation}
\end{graybox}

\medskip

The Murphy diagram was introduced in Ehm et al.~\cite{Ehm}, and it has become a popular graphical model validation tool, see, e.g., Dimitriadis et 
al.~\cite{Dimitriadis}.

\medskip

\begin{figure}[htb!]
\begin{center}
\begin{minipage}[t]{0.45\textwidth}
\begin{center}
\includegraphics[width=\textwidth]{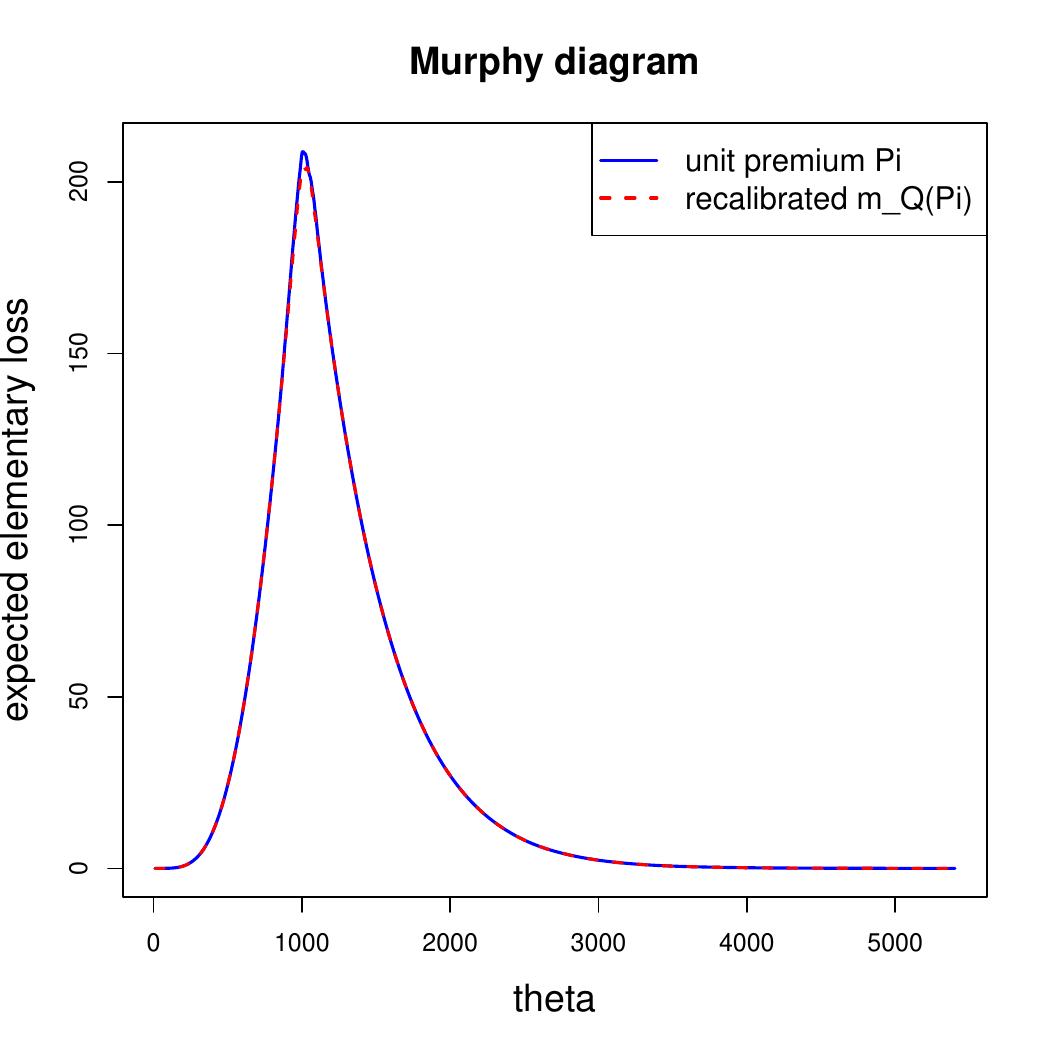}
\end{center}
\end{minipage}
\begin{minipage}[t]{0.45\textwidth}
\begin{center}
\includegraphics[width=\textwidth]{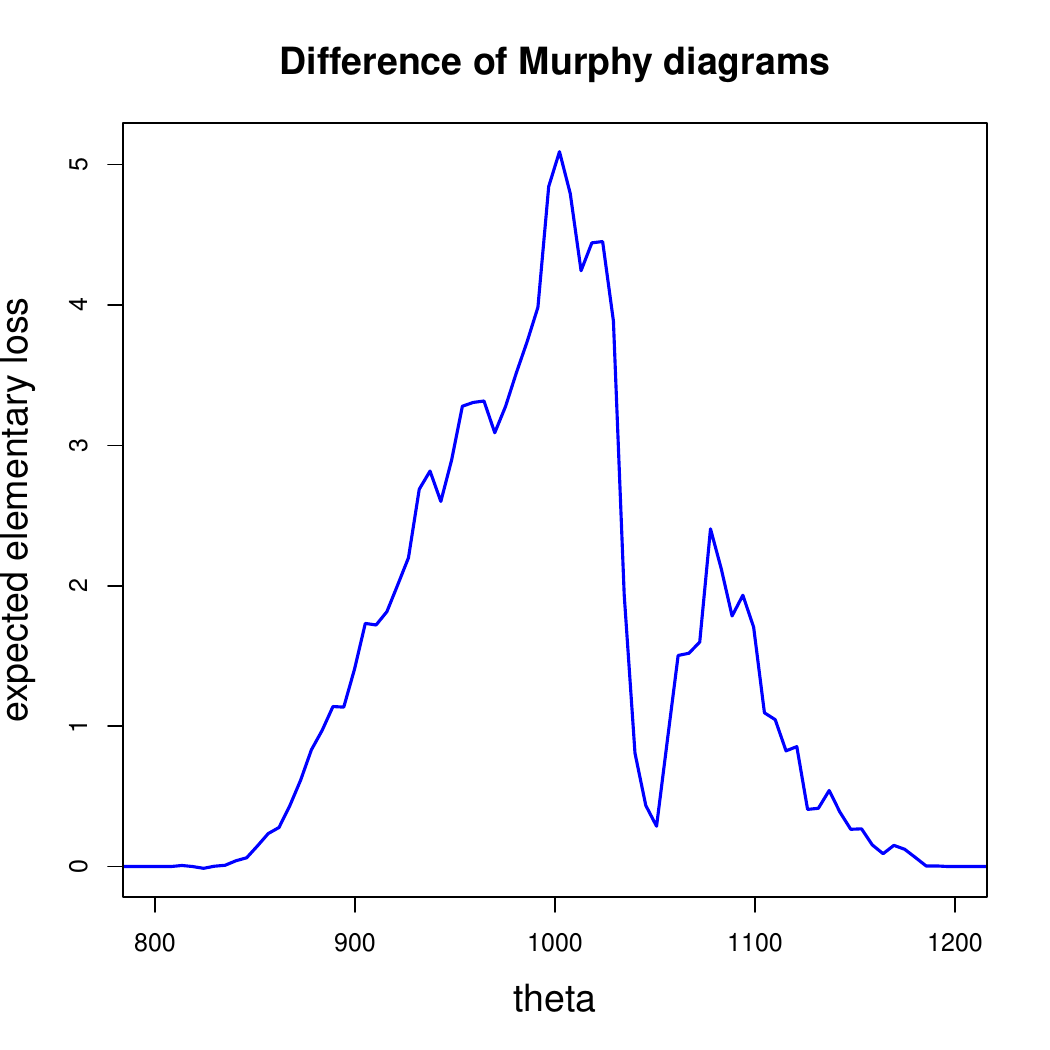}
\end{center}
\end{minipage}
\end{center}
\vspace{-.7cm}
\caption{(lhs) Murphy diagrams of the unit premium rules $\Pi$ and $m_{\q}(\Pi)$, and (rhs) their difference 
$\Delta_{\q}^{\rm Murphy}(\theta; \Pi, m_{\q}(\Pi))$.}
\label{Murphy Pi 1}
\end{figure}

Figure \ref{Murphy Pi 1} shows the (empirical) Murphy diagrams \eqref{empirical Murphy graph} of the two premium rules 
$\Pi$ and $m_{\q}(\Pi)$ in blue and red on the left-hand side, and the right-hand side shows their difference
\begin{eqnarray}\label{Murphy Delta difference}
\theta ~\mapsto ~ \Delta_{\q}^{\rm Murphy}\left(\theta; \Pi, m_{\q}(\Pi)\right)
&=&
\widehat{\E}_{\q}[L_\theta(Y,\Pi)] -\widehat{\E}_{\q}[L_\theta(Y,m_{\q}(\Pi))]
\\&=&\nonumber~ \frac{1}{ \sum_{i=1}^n V_i}\, \sum_{i=1}^n
V_i\left(L_\theta(Y_i,\Pi_i)-L_\theta(Y_i,m_{\q}(\Pi_i))\right).
\end{eqnarray}
In Figure \ref{Murphy Pi 1} (rhs), we
observe positivity of this difference $\Delta_{\q}^{\rm Murphy}(\theta; \Pi, m_{\q}(\Pi))$ for most of the values $\theta \in \R$ (up to minor noise perturbations). This suggests to prefer 
premium rule $m_{\q}(\Pi)$ over $\Pi$ for forecasting $Y$. Of course, this is clear in this example because it verifies the forecast dominance statement discussed in Example \ref{Recalibration step and forecast dominance}.

\medskip

From a practical point of view, we see the following issues:
\begin{itemize}
\item For two general pricing rules $\Pi^a$ and $\Pi^b$, we do not expect such a clear preference picture as in Figure \ref{Murphy Pi 1} (rhs). First, we do not expect that there is this dominance for all $\theta \in \R$ if we consider two general pricing rules that are roughly equally accurate, for example, comparing a gradient boosting rule $\Pi^a$ to a neural network rule $\Pi^b$. Moreover, the noise in the responses $Y$ will contaminate the Murphy decision diagram.
\item  Figure \ref{Murphy Pi 1} shows a situation in which we can explicitly compute the recalibrated premium $m_{\q}(\Pi)$ because we know the population model in our example. Generally, this is not the case, and the empirical recalibration step will add an other major source of uncertainty (inaccuracy) to this decision making problem.
\item The Murphy diagram in Figure \ref{Murphy Pi 1} looks appealing and interpretable. However, it is not so obvious from that graph in which range the premiums fit well and in which part they do not.
The elementary loss measures the distance $|Y_i-\theta|$, whenever $\theta$ is between the response $Y_i$ and the premium $\Pi_i$, see \eqref{elementary scores 2}.
As a result, the sample Murphy diagram \eqref{empirical Murphy graph} presents an overlap of different pairs $(Y_i,\Pi_i)$ for which the selected threshold $\theta$ is in between these two values. We will discuss this a bit further in the next graph.
\end{itemize}

\begin{figure}[htb!]
\begin{center}
\begin{minipage}[t]{0.9\textwidth}
\begin{center}
\includegraphics[width=\textwidth]{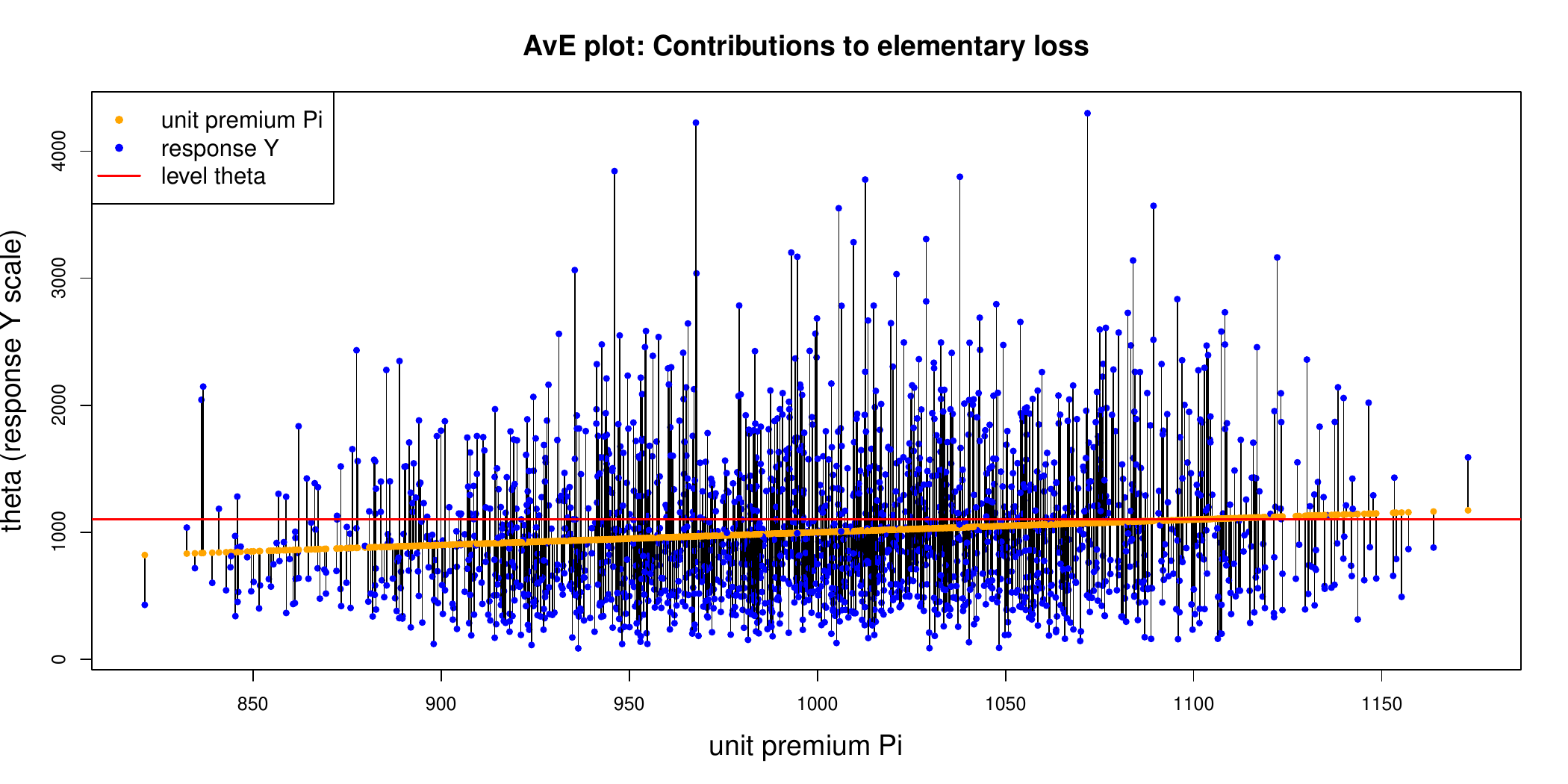}
\end{center}
\end{minipage}
\end{center}
\vspace{-.7cm}
\caption{AvE plot: Contributions to the elementary loss $L_\theta$ for fixed $\theta$ (red horizontal line).}
\label{Murphy Pi 2}
\end{figure}

We recall the AvE plot of Section \ref{Actual-vs-expected plot -- quantile binning}. The AvE plot uses the bin averages \eqref{AvE plot definition 1}. We replace these bin averages by the individual observations (for actuals $Y_i$ vs.~expected $\Pi_i$)
\begin{equation*}
\left(\Pi_i, \,Y_i\right) 
\quad {and} \quad
\left(\Pi_i,\, \Pi_i\right) 
 \qquad \text{ for $i=1, \ldots, n$,}
\end{equation*}
i.e., we discard the binning.  We then add vertical black segments to connect the 
blue dots \bl{$(\Pi_i,Y_i)$} with the orange diagonal dots
\textcolor{orange}{$(\Pi_i,\Pi_i)$} in Figure \ref{Murphy Pi 2}. Finally, we select a value $\theta$ on the $y$-axis in Figure \ref{Murphy Pi 2} and we plot a horizontal \red{red line} on this level. Every instance $i$, for which the black vertical segment between \bl{$(\Pi_i,Y_i)$} and \textcolor{orange}{$(\Pi_i,\Pi_i)$} intersects the \red{red line}, contributes to the sample elementary loss \eqref{empirical Murphy graph} at the selected level $\theta$, and the size of the contribution is equal to $|Y_i-\theta|$, see \eqref{elementary scores 2}. For example, the instance with the highest unit premium (to the very right in Figure \ref{Murphy Pi 2}) does not contribute to the selected level $\theta$.
This is the formal procedure of computing
the sample elementary loss
\begin{equation*}
\widehat{\E}_{\q}[L_\theta(Y,\Pi)] \,=\,
\frac{1}{ \sum_{i=1}^n V_i}\, \sum_{i=1}^n V_i\,L_\theta(Y_i,\Pi_i)
\,=\,\frac{1}{ \sum_{i=1}^n V_i}\, \sum_{i=1}^n
V_i \left(Y_i-\theta\right)\left(\1_{\{Y_i>\theta\}}-\1_{\{\Pi_i>\theta\}}\right).
\end{equation*}
From Figure \ref{Murphy Pi 2} it is difficult to interpret these contributions because there are too many instances $i \in \{1,\ldots, n\}$ in the plot. Therefore, we consider an aggregated version (using quantile binning). This aggregated version does not give the same Murphy diagram, but an interpretable aggregated version. 

\subsubsection{The asymmetrically binned Murphy diagram}
Figure \ref{Murphy Pi 2}, being based on individual instances, is hardly interpretable. We therefore build a binned version thereof. This binned version does {\it not} reproduce the sample Murphy diagram \eqref{empirical Murphy graph}, but it is {\it only} used as a graphical tool for interpretation. However, it has a useful application that gives much deeper insight into pricing and risk classification, see Remark \ref{binned Murphy}
and Section \ref{Illustration of resolution and miscalibration}, below.

\medskip

Our goal is to compare the two premium rules $\Pi^a$ and $\Pi^b$. For binning we need to select one of the two, e.g., $\Pi^a$, and the resulting plot will depend on this choice -- that is why we call the binned version ``asymmetric'' when comparing the two premium rules $\Pi^a$ and $\Pi^b$. 
We call the resulting graph the {\it $\Pi^a$-binned Murphy diagram}.
Select premium rule $\Pi^a$ for binning, and define the $K \in \N$ bins w.r.t.~$\Pi^a$ by
\begin{equation}\label{Q bins A}
\widehat{\cal I}^{a}_k = \left\{ i \in \{1,\ldots, n\} \,\middle|\, 
(\widehat{G}^{\q}_{\Pi^a})^{-1}((k-1)/K)< \Pi^a_i \le (\widehat{G}^{\q}_{\Pi^a})^{-1}(k/K)\right\}.
\end{equation}
We compute the weighted average loss costs and the binned exposures 
\begin{equation*}
\overline{Y}^a_k  = \frac{1}{V^a_k}\,\sum_{i \in \widehat{\cal I}^{a}_k} V_iY_i \qquad \text{ and }\qquad
V^a_k  = \sum_{i \in \widehat{\cal I}^{a}_k} V_i.
\end{equation*}
Analogously, this yields the weighted $\Pi^a$-binned unit premiums
\begin{equation*}
\overline{\Pi}^a_k  = \frac{1}{V^a_k}\,\sum_{i \in \widehat{\cal I}^{a}_k} V_i\Pi^a_i
\qquad \text{ and }\qquad 
\overline{\Pi}^{b/a}_k  = \frac{1}{V^a_k}\,\sum_{i \in \widehat{\cal I}^{a}_k} V_i\Pi^b_i.
\end{equation*}
This then motivates the (asymmetrically) {\it $\Pi^a$-binned Murphy diagrams}
\begin{eqnarray}\label{binned Pi a}
\theta & \mapsto &\frac{1}{ \sum_{k=1}^K V^a_k}\, \sum_{k=1}^K
V^a_k \left(\overline{Y}^a_k-\theta\right)\left(\1_{\{\overline{Y}^a_k>\theta\}}-\1_{\{\overline{\Pi}^{\textcolor{red}{a}}_k>\theta\}}\right),
\\\label{binned Pi b}
\theta & \mapsto &\frac{1}{ \sum_{k=1}^K V^a_k}\, \sum_{k=1}^K
V^a_k \left(\overline{Y}^a_k-\theta\right)\left(\1_{\{\overline{Y}^a_k>\theta\}}-\1_{\{\overline{\Pi}^{\textcolor{red}{b/a}}_k>\theta\}}\right).
\end{eqnarray}
The difference in \eqref{binned Pi a}-\eqref{binned Pi b} stems from the last indicator considering $\overline{\Pi}^{\textcolor{red}{a}}_k$ and $\overline{\Pi}^{\textcolor{red}{b/a}}_k$, respectively, this is highlighted in red color. 

\begin{remark}\normalfont
\begin{itemize}
\item We emphasize that generally
\begin{equation}\label{AvE inequality}
L_\theta(\overline{Y}^a_k, \overline{\Pi}^a_k)\,\neq\,
\frac{1}{ V^a_k}\, \sum_{i \in \widehat{\cal I}^{a}_k} V_i \,L_\theta(Y_i,\Pi_i).
\end{equation}
The binned version is only used as a graphical tool to reduce the complexity and the noise in the plots, but it does not serve at computing the expected elementary loss.
\item We can exchange the role of the two premium rules in \eqref{binned Pi a}-\eqref{binned Pi b} which gives the {\it $\Pi^b$-binned Murphy diagram}. We present both binning versions to understand the impact of binning.
\item If the two premium rules $\Pi^a$ and $\Pi^b$ provide the same risk ranking (in terms of the unit premium), they will result in the identical binning, thus, the binning is rank based w.r.t.~the premium rule.
\end{itemize}
\end{remark}

\begin{figure}[htb!]
\begin{center}
\begin{minipage}[t]{0.45\textwidth}
\begin{center}
\includegraphics[width=\textwidth]{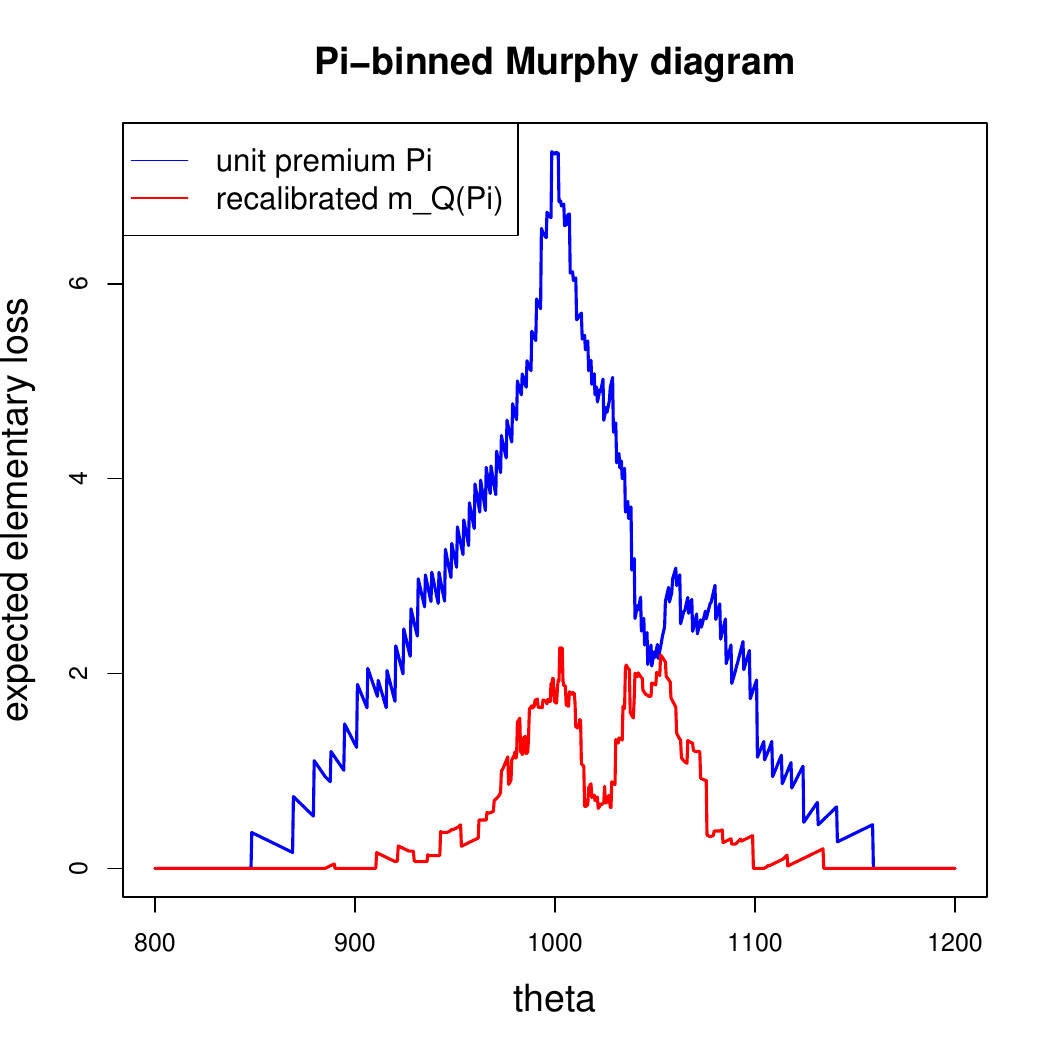}
\end{center}
\end{minipage}
\begin{minipage}[t]{0.45\textwidth}
\begin{center}
\includegraphics[width=\textwidth]{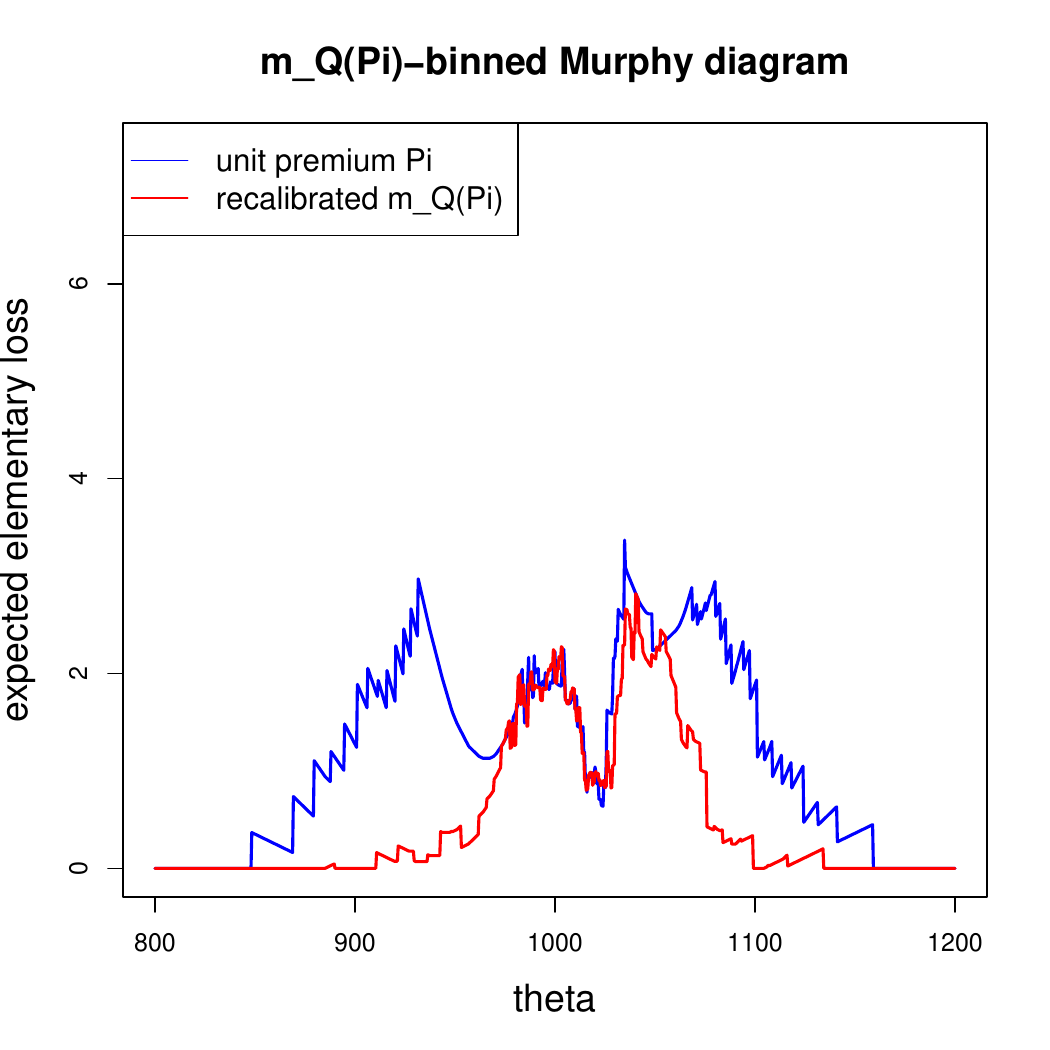}
\end{center}
\end{minipage}
\end{center}
\vspace{-.7cm}
\caption{Asymmetrically binned Murphy diagrams of the unit premium rules $\Pi$ and $m_{\q}(\Pi)$: (lhs) $\Pi$-binned and (rhs) $m_{\q}(\Pi)$-binned for percentile binning $K=100$; the $y$-scale is identical in the two plots.}
\label{binned Murphy Pi 1}
\end{figure}

Figure \ref{binned Murphy Pi 1} shows the asymmetrically binned Murphy diagrams with percentile binning $K=100$. The left-hand side uses the unit premium rule $\Pi$ for binning and the right-hand side the recalibrated version $m_{\q}(\Pi)$.
In this example, the binning choice only marginally affects the binned Murphy diagram of the recalibrated premiums $m_{\q}(\Pi)$, but it impacts the binned Murphy diagram of the premium $\Pi$ quite significantly; recall that $\Pi$ does not provide the correct risk ranking, see Remark \ref{wrong risk ranking}. Based on these binned Murphy diagrams, there is a clear preference of $m_{\q}(\Pi)$ over $\Pi$, which empirically verifies once more the results of Example \ref{Recalibration step and forecast dominance}.

\medskip

\begin{figure}[htb!]
\begin{center}
\begin{minipage}[t]{0.9\textwidth}
\begin{center}
\includegraphics[width=\textwidth]{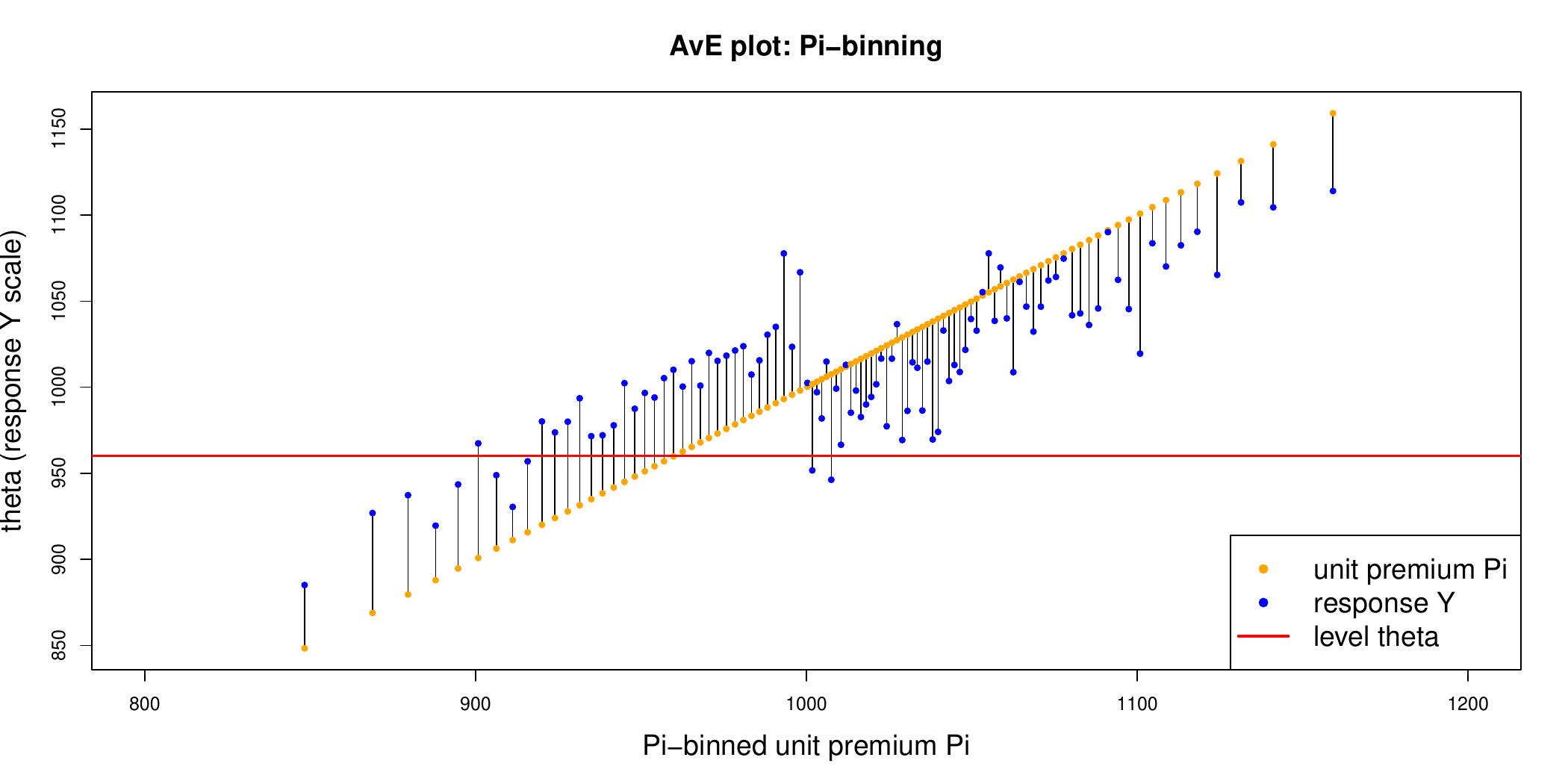}
\end{center}
\end{minipage}
\begin{minipage}[t]{0.9\textwidth}
\begin{center}
\includegraphics[width=\textwidth]{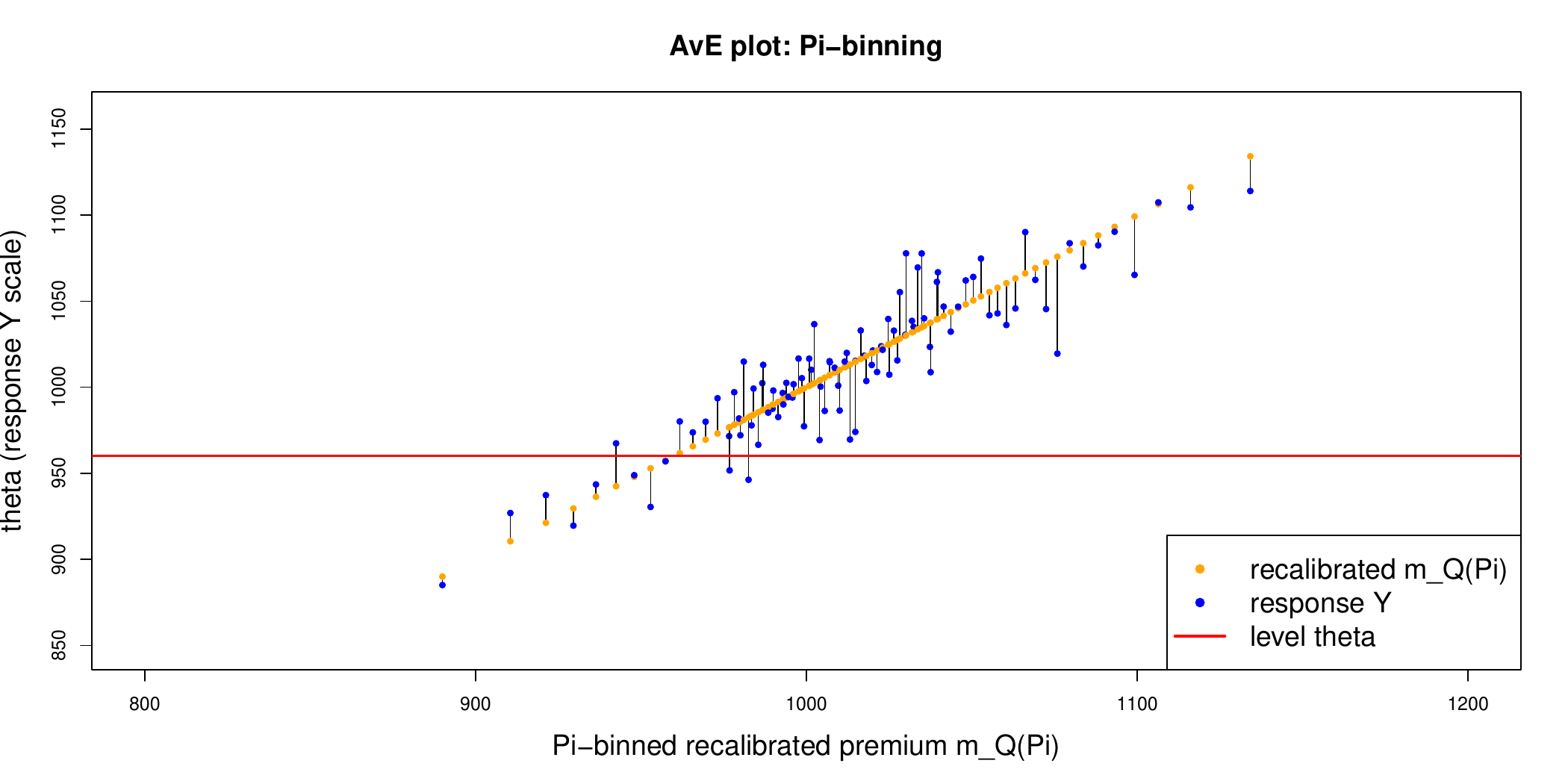}
\end{center}
\end{minipage}
\end{center}
\vspace{-.7cm}
\caption{AvE plot with $\Pi$-percentile binning: upper panel $\overline{Y}^a_k$ vs.~$\overline{\Pi}^{\textcolor{red}{a}}_k$ and lower panel
$\overline{Y}^a_k$ vs.~$\overline{\Pi}^{\textcolor{red}{b/a}}_k$ with premium rules $\Pi^a=\Pi$ and $\Pi^b=m_{\q}(\Pi)$.}
\label{AvE Murphy 1}
\end{figure}

Figure \ref{AvE Murphy 1} shows the AvE plot using percentile binning w.r.t.~$\Pi^a=\Pi$, see \eqref{AvE plot definition 1}. We again connect the 
actual $(\overline{\Pi}^a_k, \overline{Y}^a_k)$ with the expected
$(\overline{\Pi}^a_k, \overline{\Pi}^a_k)$ by vertical black segments, and if these segments intersect the horizontal red line at level $\theta$, they contribute to the elementary loss $L_\theta$. The upper panel in Figure 
\ref{AvE Murphy 1} shows the situation of the unit premium $\Pi^a=\Pi$ and the lower panel gives the recalibrated case $\Pi^b=m_{\q}(\Pi)$. This recalibrated version is binned w.r.t.~$\Pi^a=\Pi$, and it considers the actual
$(\overline{\Pi}^{b/a}_k, \overline{Y}^a_k)$ against the expected
$(\overline{\Pi}^{b/a}_k, \overline{\Pi}^{b/a}_k)$. Generally, in the upper panel the black segments are longer and the unit premium range that contributes to the elementary loss $L_\theta$ for a fixed $\theta$ is bigger for $\Pi$ than $m_{\q}(\Pi)$. This indicates the calibration issue of $\Pi$. Note that the $y$-levels of the actuals $\overline{Y}^a_k$ are identical in both panels, only their $x$-coordinates 
$\overline{\Pi}^a_k$ and $\overline{\Pi}^{b/a}_k=\overline{m_{\q}(\Pi)}^a_k$ differ.

We conclude that the asymmetrically $\Pi^a$-binning Murphy diagram and the resulting AvE plot of Figure \ref{AvE Murphy 1} is mainly a tool to graphically understand which premium levels contribute to the elementary losses.

\begin{remark}\normalfont
\label{binned Murphy}
There is a stronger mathematical justification of the $\Pi$-binned version of the Murphy diagram that is related to the discussion in Section \ref{sec: Murphy's decomposition}. Fix the number of bins $K \in \N$.
Denote by $B$ the bin allocation of a randomly selected instance $(Y,V,\Pi)$ among the $K$ quantile bins under the $\q$-distribution $G_{\Pi}^{\q}$. This allows us to define the recalibration step w.r.t.~the bin indicator $B$
\begin{equation*}
m_{\q}(B) = \E_{\q}\left[ Y \mid B \right]
\qquad \text{ and } \qquad 
\pi_B = \E_{\q}\left[ \Pi \mid B \right].
\end{equation*}
This gives us the binned Murphy diagram
\begin{equation*}
\theta ~ \mapsto ~ \E_{\q}\left[ L_\theta(m_{\q}(B), \pi_B)\right].
\end{equation*}
The graph \eqref{binned Pi a} is a sample version of this binned Murphy diagram.
In fact, we have the following Murphy's decomposition, see
Section \ref{Illustration of resolution and miscalibration} below for a proper treatment,
\begin{equation}\label{binning to be proved}
\E_{\q}\left[ L_\theta(Y, \pi_B)\right]
= \E_{\q}\left[ L_\theta(Y,m_{\q}(B))\right]
+\E_{\q}\left[ L_\theta(m_{\q}(B), \pi_B)\right].
\end{equation}
The first term on the right-hand side is the irreducible within-bin variation produced by $Y$, and the second term gives a miscalibration error, by using $\pi_B$ instead of the calibrated version $m_{\q}(B)$. If $\pi_B$ is $\q$-calibrated, this term vanishes, see Section \ref{sec: Murphy's decomposition}, below.
\end{remark}

\subsection{From elementary losses to Bregman divergences}
\label{Elementary loss representation of the empirical Bregman loss}
From \eqref{elementary generator}-\eqref{elementary scores} it is immediately clear that elementary losses are Bregman losses, and aggregating these elementary losses will preserve the Bregman loss property. This leads to the following mathematical result.
Under mild regularity conditions, Bregman losses can be written as Lebesgue--Stieltjes integrals over the elementary losses, see Ehm et al.~\cite[Theorem 1b]{Ehm},
\begin{equation}\label{Bregman loss vs elementary loss}
L_\varphi(y,x)=\int L_\theta(y,x)\,\dd\varphi'(\theta).
\end{equation}
Thus, Bregman losses consist of mixtures of elementary losses $L_\theta$ with mixing measure $\dd\varphi'(\theta)$. Inserting the specific form of the elementary loss, we can rewrite \eqref{Bregman loss vs elementary loss} as
\begin{equation*}
L_\varphi(y,x)=\1_{\{y> x\}}\int_x^y (y-\theta)\,\dd\varphi'(\theta)+
\1_{\{y<x\}}\int_y^x (\theta-y)\,\dd\varphi'(\theta).
\end{equation*}
This now directly connects to the AvE plots of 
Figures \ref{Murphy Pi 2} and \ref{AvE Murphy 1}. These plots show for a given unit premium, say $\Pi_i$, the segments from $\Pi_i$ to $Y_i$ by the vertical black lines (for a fixed policy $i$). To compute the elementary loss $L_\theta(Y_i,\Pi_i)$ we obtain the length of the segment $|Y_i - \theta|$, supposed that $\theta$ is between $Y_i$ and $\Pi_i$, see \eqref{elementary scores 2}. The Bregman loss adds a scaling over the segment from $\Pi_i$ to $Y_i$ which is determined by $\dd \varphi'(\theta)$. Assume that $\varphi$ is twice differentiable, then the scaling is given by $\varphi''(\theta)$, and we compute 
\begin{equation}\label{Bregman loss vs elementary loss 2}
L_\varphi(y,x)=\1_{\{y> x\}}\int_x^y (y-\theta)\varphi''(\theta)\,\dd \theta+
\1_{\{y<x\}}\int_y^x (\theta-y)\varphi''(\theta)\,\dd \theta.
\end{equation}
In the AvE plot of Figure \ref{Murphy Pi 2}, this scaling $\varphi''(\theta)$ acts on the $y$-axis, and it allows us to downgrade, for example, large loss costs by selecting a monotonically decreasing function 
for $\theta \mapsto \varphi''(\theta)$. Such a downgrading of large losses may make sense, namely, we do not generally expect the mean to capture large losses (tail losses), and henceforth, such large losses should not impact the mean model selection too much (if the mean is not dominated by these large losses, i.e., calibration is given). Remark that this downgrading {\it only} applies to the model validation part, but not to the model estimation part. I.e., if we would censor claims during model fitting, we would underestimate the true average losses (and likely calibration is violated).

\medskip
\begin{graybox}
A popular choice for the convex generator $\varphi$ is the Patton family \cite{Patton} obtained by selecting a fixed $p\in \R$ and setting
\begin{equation}\label{Tweedie choices}
\varphi''(\theta) = \theta^{-p},
\end{equation}
on $\theta > 0$. In fact, one can verify that
\begin{eqnarray*}
p=0 && \text{ relates the square loss \eqref{Q minimization},}\\
p=1 && \text{ relates the Poisson deviance loss,}\\
p=2 && \text{ relates the gamma deviance loss,}\\
p=3 && \text{ relates the inverse Gaussian deviance loss.}
\end{eqnarray*}
\end{graybox}
\medskip

All these losses belong to the EDF, and they build a model hierarchy with an increasing variance behavior; see W\"uthrich--Merz \cite{WM2023}. We demonstrate the case of the gamma deviance loss $p=2$ in Example \ref{example gamma deviance}, below, and the Poisson case is studied in Section \ref{sec Poisson deviance loss}, below.

\subsection{Sample Bregman loss and currency invariant preferences}
The estimation of the expected Bregman loss $\E_{\q}[L_\varphi(Y,\Pi)]$ is straightforward from an i.i.d.~test sample ${\cal T}=(Y_i,V_i, \Pi_i)_{i=1}^n$.
Namely, we set
\begin{equation}\label{empirical Bregman loss}
\widehat{\E}_{\q}[L_\varphi(Y,\Pi)] = \frac{1}{\sum_{i=1}^n V_i}\, \sum_{i=1}^n V_i\,
L_\varphi(Y_i,\Pi_i).
\end{equation}
This is the loss figure that is typically reported in statistical analysis; see, e.g., W\"uthrich et al.~\cite{AITools}.
Based on this sample Bregman loss, we can perform preference ordering \eqref{preference order 2V} of different unit premium rules. We generally do not expect forecast dominance between comparable unit premium rules that are based on the same set of information. Therefore, we typically select one (single) Bregman loss $L_{\varphi}$, and the discussion around \eqref{Tweedie choices} can support us in this selection.

\medskip

There is one more critical point to be considered, namely, the implied preference order \eqref{preference order 2V} should be scale invariant, i.e., it should not depend on the currency 
\begin{equation}\label{scale invariant}
\widehat{\E}_{\q}[L_\varphi(Y,\Pi^a)] \le \widehat{\E}_{\q}[L_\varphi(Y,\Pi^b)]
\quad \Longleftrightarrow \quad
\widehat{\E}_{\q}[L_\varphi(cY,c\Pi^a)] \le \widehat{\E}_{\q}[L_\varphi(cY,c\Pi^b)],
\end{equation}
for any $c>0$. This is to say, that model selection should be unit-free as we want to select the same forecast model under Euros or Dollars.
The scale invariance \eqref{scale invariant} is not generally given. However, if we select a loss function from the Patton family \cite{Patton}, given by \eqref{Tweedie choices}, we observe by a change of variable
\begin{eqnarray}\nonumber
L_\varphi(cy,cx)&=&\1_{\{cy> cx\}}\int_{cx}^{cy} \frac{cy-\theta}{\theta^p}\,\dd \theta+
\1_{\{cy<cx\}}\int_{cy}^{cx} \frac{\theta-cy}{\theta^p}\,\dd \theta
  \\&=&\label{homogeneous order 1}
        c^{2-p} \,L_\varphi(y,x)
.
\end{eqnarray}
Hence, the Patton family \cite{Patton} is positively homogeneous of degree $2-p$. This guarantees that the preference order \eqref{scale invariant} is preserved under different currencies. In fact, this motivation is rather similar to Patton \cite[Proposition 4]{Patton} and it also gives support to consider Tweedie's dominance of Denuit et al.~\cite{DenuitEtAl2021}; Tweedie's class of deviance losses is a subclass of the Patton family \eqref{Tweedie choices} because Tweedie's class does not exist for $p\in (0,1)$; see J{\o}rgensen \cite[Theorem 2]{Jorgensen2}.

\section{Murphy's decomposition}
\label{sec: Murphy's decomposition}
\subsection{Murphy's decomposition and Bregman score}
Since the (strict) consistency of the Bregman loss \eqref{Bregman loss definition} does not depend on the specific choice of the probability law, it also carries over to conditional probabilities.

\begin{lemma} \label{Pythagoras}
Consider an information set ${\cal A}\subset {\cal F}$ and define the conditional mean 
\begin{equation*}
m_{\q}({\cal A})=\E_{\q}[Y\mid {\cal A}].
\end{equation*}
Assume $X$ is ${\cal A}$-measurable.
There is the conditional Bregman Pythagorean relation, 
\begin{equation}\label{eq:conditional-pythagorean}
 \E_{\q}\left[ L_{\varphi}\left(Y,X\right)\mid{\cal A}\right]
  =
  \E_{\q}\left[ L_{\varphi}\left(Y,m_{\q}({\cal A})\right)\mid{\cal A}\right]
  + L_{\varphi}\left(m_{\q}({\cal A}),X\right) \qquad \text{ a.s.}
\end{equation}
\end{lemma}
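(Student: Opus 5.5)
The plan is to verify the identity pointwise by expanding the Bregman form \eqref{Bregman loss definition}, and then to take conditional $\q$-expectations given ${\cal A}$. Write $m=m_{\q}({\cal A})$ for brevity. For any real numbers $y,m,x$ (in the domain of $\varphi$) we have the algebraic three-point identity
\begin{equation*}
L_\varphi(y,x) - L_\varphi(y,m) - L_\varphi(m,x)
= \left(\varphi'(m)-\varphi'(x)\right)(y-m).
\end{equation*}
This follows by direct expansion. The $\varphi(y)$ terms cancel, $-\varphi(x)+\varphi(x)$ cancels, and $\varphi(m)-\varphi(m)$ cancels. The remaining terms are $-\varphi'(x)(y-x)+\varphi'(m)(y-m)+\varphi'(x)(m-x)$, which equals $(\varphi'(m)-\varphi'(x))(y-m)$. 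This step is routine and holds for any fixed choice of sub-gradient $\varphi'$.

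Next, I substitute $y=Y$, $m=m_{\q}({\cal A})$ and $x=X$, and take $\E_{\q}[\,\cdot\mid{\cal A}]$. Both $X$ and $m_{\q}({\cal A})$ are ${\cal A}$-measurable, so $L_\varphi(m_{\q}({\cal A}),X)$ is ${\cal A}$-measurable and comes out of the conditional expectation unchanged. For the same reason, $\varphi'(m_{\q}({\cal A}))-\varphi'(X)$ is ${\cal A}$-measurable. It can therefore be pulled out ("taking out what is known"), which leaves
\begin{equation*}
\left(\varphi'(m_{\q}({\cal A}))-\varphi'(X)\right)\,\E_{\q}\left[Y-m_{\q}({\cal A})\mid{\cal A}\right]=0 \qquad \text{a.s.},
\end{equation*}
by the definition of $m_{\q}({\cal A})$. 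Rearranging gives \eqref{eq:conditional-pythagorean}. Note that the argument uses the conditional $\q$-expectation throughout, so the exposure weighting enters only through the definition of $m_{\q}({\cal A})$. Equivalently, by \eqref{conditional Bayes} one could write everything as $\E[V\,\cdot\mid{\cal A}]/\E[V\mid{\cal A}]$ and run the same computation under $\p$.

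The main obstacle is technical integrability rather than algebra. Pulling out $\varphi'(m_{\q}({\cal A}))-\varphi'(X)$ requires the product with $Y-m_{\q}({\cal A})$ to be $\q$-integrable, or at least that conditional expectations of nonnegative or suitably integrable terms are well defined. I would invoke the paper's standing assumption that all considered moments exist and are finite (Remarks \ref{Remark covariate}). Under weaker assumptions, I would fall back on the generalized conditional expectation for products of ${\cal A}$-measurable and conditionally integrable factors. Since all three Bregman terms are nonnegative, their conditional expectations are always defined in $[0,\infty]$. One therefore only needs the cross term to vanish, which can be argued on the sets $\{|\varphi'(m_{\q}({\cal A}))-\varphi'(X)|\le N\}\in{\cal A}$ followed by letting $N\to\infty$. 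A minor point is that $m_{\q}({\cal A})$ must lie in the domain of $\varphi$; this holds by convexity of the domain whenever $Y$ takes values in that convex interval.
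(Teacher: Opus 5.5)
Your proof is correct and is essentially the paper's argument. The paper conditions each Bregman loss separately, obtaining $\E_{\q}[\varphi(Y)\mid{\cal A}]-\varphi(X)-\varphi'(X)(m_{\q}({\cal A})-X)$, and then subtracts the case $X=m_{\q}({\cal A})$; you instead subtract first via the pointwise three-point identity and then remove the cross term by taking out what is known, which has the small advantage of never requiring $\E_{\q}[\varphi(Y)\mid{\cal A}]$ to be finite, as the paper's subtraction tacitly does.
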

This lemma is proved in the appendix.

\medskip

For a strictly convex generator $\varphi$, 
the last term in \eqref{eq:conditional-pythagorean} vanishes if and only if 
$X=m_{\q}({\cal A})$. 
We translate this to the $\q$-calibration considerations \eqref{Q calibration}. Assume that the information set ${\cal A}=\sigma(\Pi)$ is generated by the price $\Pi$. 

\medskip
\begin{graybox}
The conditional Bregman Pythagorean relation \eqref{eq:conditional-pythagorean} yields
\begin{equation}\label{conditional Bregman}
 \E_{\q}\left[ L_{\varphi}\left(Y,\Pi\right)\mid \Pi\right]
  =
  \E_{\q}\left[ L_{\varphi}\left(Y,m_{\q}(\Pi)\right)\mid \Pi\right]
  + L_{\varphi}\left(m_{\q}(\Pi),\Pi\right).
\end{equation}
\end{graybox}
\medskip

This relation is crucial. It measures the accuracy of the price $\Pi$ for predicting the claim $Y$ under a Bregman loss $L_\varphi$ with strictly convex generator $\varphi$. This accuracy is decomposed into two terms: (1) the accuracy of the recalibrated price $m_{\q}(\Pi)$, and (2) the discrepancy between the price $\Pi$ and its recalibrated version $m_{\q}(\Pi)$. The first term (1) reflects discrimination and the second term (2) calibration. This second term vanishes for $\q$-calibrated prices $\Pi$. Note that these considerations have already been used in Remark \ref{binned Murphy} where we studied the $\Pi$-binned version of the Murphy diagram.

\medskip

{\it Murphy's decomposition} \cite{Murphy} modifies identity \eqref{conditional Bregman} in two ways. First, it considers the population version by taking expected values w.r.t.~the exposure-weighted measure $\q$. It calibrates the consideration to the global mean $\mu_{\q}=\E_{\q}[Y]$.

\bigskip
\begin{graybox}
\begin{cor}[Murphy's decomposition] \label{cor Murphy}
Select a convex generator $\varphi$. Murphy's decomposition is given by
\begin{eqnarray}
\E_{\q}[L_\varphi(Y, \Pi)]
&=&\label{Murphy}
\underbrace{\E_{\q}[L_\varphi(Y, \mu_{\q})]}_{=:\mathrm{UNC}_\varphi}
-
\underbrace{\E_{\q}[L_\varphi(m_{\q}(\Pi), \mu_{\q})]}_{=:\mathrm{RES}_\varphi(\Pi)}
+
\underbrace{\E_{\q}[L_\varphi(m_{\q}(\Pi),\Pi)]}_{=:\mathrm{MCB}_\varphi(\Pi)}.
\end{eqnarray}
\end{cor}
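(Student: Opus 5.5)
The plan is to apply the conditional Bregman Pythagorean relation of Lemma \ref{Pythagoras} twice, once with the information set ${\cal A}=\sigma(\Pi)$ and once with the trivial $\sigma$-field, and then subtract the two resulting identities.

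\emph{Step 1: conditioning on $\Pi$.} Take expectations under $\q$ on both sides of \eqref{conditional Bregman}. The tower property then gives
\begin{equation*}
\E_{\q}[L_\varphi(Y,\Pi)] = \E_{\q}[L_\varphi(Y,m_{\q}(\Pi))] + \E_{\q}[L_\varphi(m_{\q}(\Pi),\Pi)].
\end{equation*}
The last term is exactly $\mathrm{MCB}_\varphi(\Pi)$.

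\emph{Step 2: conditioning on $\Pi$ with the global mean as predictor.} Apply Lemma \ref{Pythagoras} again with ${\cal A}=\sigma(\Pi)$, but now with the deterministic, hence $\sigma(\Pi)$-measurable, predictor $X=\mu_{\q}$. Taking $\q$-expectations yields
\begin{equation*}
\E_{\q}[L_\varphi(Y,\mu_{\q})] = \E_{\q}[L_\varphi(Y,m_{\q}(\Pi))] + \E_{\q}[L_\varphi(m_{\q}(\Pi),\mu_{\q})].
\end{equation*}
This reads $\mathrm{UNC}_\varphi = \E_{\q}[L_\varphi(Y,m_{\q}(\Pi))] + \mathrm{RES}_\varphi(\Pi)$.

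\emph{Step 3: combining.} Solve Step 2 for the common term $\E_{\q}[L_\varphi(Y,m_{\q}(\Pi))] = \mathrm{UNC}_\varphi - \mathrm{RES}_\varphi(\Pi)$ and substitute it into Step 1. This gives \eqref{Murphy} directly.

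The only point requiring care, and the main obstacle, is integrability. The subtraction in Step 3 is legitimate only if $\E_{\q}[L_\varphi(Y,m_{\q}(\Pi))]$ is finite, since otherwise one would be manipulating $\infty-\infty$. This finiteness follows from Step 2 whenever $\mathrm{UNC}_\varphi<\infty$, because all terms are non-negative. The paper's standing assumption that all considered moments exist and are finite covers it. If one wants it explicitly, one also notes that the tower property for non-negative integrands holds without further assumptions.

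Lemma \ref{Pythagoras} itself rests on the identity
\begin{equation*}
L_\varphi(y,x)-L_\varphi(y,m)-L_\varphi(m,x) = (\varphi'(m)-\varphi'(x))(y-m),
\end{equation*}
whose conditional $\q$-expectation given ${\cal A}$ vanishes. This is because $x$ and $m$ are ${\cal A}$-measurable and $\E_{\q}[Y-m\mid{\cal A}]=0$. For a merely convex $\varphi$, the subgradient at $m_{\q}(\Pi)$ must be chosen measurably, for instance the right derivative, and the cross term must be integrable or handled by truncation. Since the lemma is assumed, this is not needed here.
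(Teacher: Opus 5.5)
Your proof is correct and is essentially the paper's argument: you take $\q$-expectations in \eqref{conditional Bregman}, then apply Lemma \ref{Pythagoras} once more with ${\cal A}=\sigma(\Pi)$ and the deterministic predictor $X=\mu_{\q}$ to identify $\E_{\q}[L_\varphi(Y,m_{\q}(\Pi))]-\E_{\q}[L_\varphi(Y,\mu_{\q})]=-\mathrm{RES}_\varphi(\Pi)$, and your integrability remark is a welcome addition. The only slip is in your opening sentence, which mentions the trivial $\sigma$-field, whereas Step 2 (correctly) conditions on $\sigma(\Pi)$, exactly as the paper does.
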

\end{graybox}
\medskip

This corollary is proved in the appendix.

\bigskip

The expected Bregman loss on the left-hand side of \eqref{Murphy} is decomposed into an uncertainty term (UNC), a resolution term (RES) and a miscalibration term (MCB). UNC measures the total fluctuations contained in the response $Y$ (relative to its deterministic mean), RES measures the resolution (discrimination) that can be achieved by the calibrated version of the unit premium $\Pi$, and MCB quantifies the calibration error.
All three terms are non-negative, and they vanish for a strictly convex generator $\varphi$ if and only if
\begin{align*}
\mathrm{RES}_\varphi(\Pi)=0 &~\iff~ m_{\q}(\Pi)=\mu_{\q}, \qquad \text{a.s.},\\
\mathrm{MCB}_\varphi(\Pi)=0 &~\iff~ m_{\q}(\Pi)=\Pi, \qquad \text{a.s.}
\end{align*}

We turn Murphy's decomposition into a {\it score} where bigger means better, and so that it is calibrated to the global mean $\mu_{\q}$.

\medskip
\begin{graybox}
\begin{defi}[Bregman score]
We define the Bregman score by
\begin{eqnarray}\nonumber
S_\varphi(Y, \Pi)&=&
\E_{\q}[L_\varphi(Y, \mu_{\q})]- 
\E_{\q}[L_\varphi(Y, \Pi)]
\\&=&\label{Bregman score decomposition}
\E_{\q}[L_\varphi(m_{\q}(\Pi), \mu_{\q})]
-\E_{\q}[L_\varphi(m_{\q}(\Pi), \Pi)]
\\&=&\nonumber
\mathrm{RES}_\varphi(\Pi)-\mathrm{MCB}_\varphi(\Pi).
\end{eqnarray}
\end{defi}
\end{graybox}
\medskip

The resulting score system is anchored at $S_\varphi(Y, \mu_{\q})=0$, and it quantifies the two important terms of forecast accuracy:
\begin{itemize}
\item \underline{Resolution:} The resolution term measures how well a premium rule $\Pi$ can discriminate the response $Y$ by quantifying the accuracy gain of the $\q$-calibrated version $m_{\q}(\Pi)$ relative to the global mean $\mu_{\q}$. 
\item \underline{Calibration:} The calibration term measures how well the premium rule $\Pi$ is calibrated to the response $Y$ by quantifying the miscalibration. 
\end{itemize}

\medskip

We come back to the preference order \eqref{preference order 2V}.

\medskip
\begin{graybox}
Assume we have two pricing schemes $(Y, V, \Pi^a)$ and $(Y, V, \Pi^b)$ and a given convex generator $\varphi$. This gives the preference order
\begin{eqnarray}\nonumber
\text{prefer $\Pi^a$ over $\Pi^b$ for $Y$} &\quad \Longleftrightarrow
\quad& S_\varphi(Y, \Pi^a) \ge S_\varphi(Y, \Pi^b)
\\\label{preference order 1}
&\quad \Longleftrightarrow \quad&
\E_{\q}[L_\varphi(Y,\Pi^a)] \le \E_{\q}[L_\varphi(Y,\Pi^b)]
\\\nonumber
&\quad \Longleftrightarrow \quad&
\E[V\,L_\varphi(Y,\Pi^a)] \le \E[V\,L_\varphi(Y,\Pi^b)].
\end{eqnarray}
\end{graybox}
\medskip

The computation of this preference order can be done by its sample version
\eqref{empirical Bregman loss} on the test sample ${\cal T}$. This yields
\begin{eqnarray}\label{empirical scoring}
\widehat{S}_\varphi(Y, \Pi)
&=&\widehat{\E}_{\q}[L_\varphi(Y,\mu_{\q})]-\widehat{\E}_{\q}[L_\varphi(Y,\Pi)]
\\&=&\nonumber
\frac{1}{\sum_{i=1}^n V_i}\, \sum_{i=1}^n V_i\,\Big(
L_\varphi(Y_i,\mu_{\q})
-L_\varphi(Y_i,\Pi_i)\Big).
\end{eqnarray}
We assume that the premium rule $\Pi$ and the global mean $\mu_{\q}$ have been determined on an independent learning sample, and \eqref{empirical scoring} reflects an out-of-sample score.

\begin{example}[Gamma deviance scoring]\normalfont
\label{example gamma deviance}
We compute the preference order \eqref{preference order 1} for our synthetic data example introduced in Section \ref{sec: stylized example}. We compute the sample version \eqref{empirical scoring} based on the test sample ${\cal T}$ of sample size $n=100,000$; this is the identical test data that has been used in Section \ref{Graphical tools: empirical versions}. The responses of this data have been generated by conditional gamma distributions \eqref{gamma responses}. This makes it natural to use the gamma deviance loss from the Bregman loss family. The gamma deviance loss corresponds to the inverse quadratic scaling $\varphi''(\theta)=2/\theta^{2}$ in \eqref{Tweedie choices} and it respects 
the currency invariance \eqref{scale invariant} in preference ordering.

We first derive the gamma deviance loss before providing the numerical results of the premium rules $\Pi$ and $m_{\q}(\Pi)$.
We select for $x>0$
\begin{equation}\label{gamma deviance choice}
\varphi(x) = -2\log x.
\end{equation}
This yields first and second derivatives on $\R_+$
\begin{equation*}
\varphi'(x) = -\frac{2}{x} \qquad \text{ and } \qquad
\varphi''(x) = \frac{2}{x^2}>0.
\end{equation*}
Thus, we have a strictly convex generator $\varphi$ on $\R_+$ and its Bregman loss is given by
\begin{equation}\label{gamma deviance loss}
L_\varphi(y,x)= -2\log y + 2\log x + \frac{2}{x}\left(y-x\right)
= 2 \left( \frac{y-x}{x} - \log\left(\frac{y}{x}\right) \right).
\end{equation}
This Bregman loss \eqref{gamma deviance loss} is the gamma deviance loss. The gamma deviance loss is obtained from the EDF by selecting the cumulant function $\kappa(\vartheta)=-\log(-\vartheta)$, for $\vartheta<0$. This cumulant function generates the gamma distribution, it has canonical link $(\kappa')^{-1}(m)=-1/m$, for $m>0$, and variance function ${\cal V}(m)=\kappa''((\kappa')^{-1}(m))=m^2$; see W\"uthrich--Merz \cite[Section 2.2.2]{WM2023}. The power parameter of $p=2$ of the variance function ${\cal V}$ exactly refers to parameter $p$ of the Patton family \eqref{Tweedie choices}.

\begin{table}[htb!]
\begin{center}
{\small
\begin{tabular}{|l|rr|}
\hline
& \multicolumn{1}{|c}{Bregman} & \multicolumn{1}{c|}{Bregman score}  \\
premium rules & loss \eqref{gamma deviance loss} & \eqref{empirical scoring} in $10^{-3}$\\
\hline\hline
global mean model $\mu_{\q}$ & 0.31928 & -- \\
premium rule $\Pi$ & 0.31901 & 0.2743 \\
premium rule $m_{\q}(\Pi)$ & 0.31778 & 1.5017\\
\hline
\end{tabular}}
\end{center}
\caption{Gamma Bregman loss $\widehat{\E}_{\q}[L_\varphi(Y,\cdot)]$ and Bregman score $\widehat{S}_\varphi(Y, \cdot)$ of the considered premium rules under the gamma deviance loss choice \eqref{gamma deviance choice} for the generator $\varphi$.}
\label{gamma deviance example table}
\end{table}

Table \ref{gamma deviance example table} presents the resulting scores, and we give clear preference to the recalibrated premium rule $m_{\q}(\Pi)$ ($1.5017\cdot 10^{-3}$) over the original premium rule $\Pi$ ($0.2743\cdot 10^{-3}$). In view of the previous graphs and results this is not surprising, as $\Pi$ has a serious calibration issue. In fact, the miscalibration term is zero for the recalibrated price $m_{\q}(\Pi)$, this follows from \eqref{conditional Bregman}. This motivates the estimation
\begin{equation}\label{miscalibration estimation 00}
\widehat{\mathrm{MCB}}_\varphi(\Pi)=
\widehat{S}_\varphi(Y, m_{\q}(\Pi))-
\widehat{S}_\varphi(Y, \Pi).
\end{equation}
This 
immediately yields the miscalibration error estimate of $\Pi$ for $Y$
\begin{equation*}
\widehat{\mathrm{MCB}}_\varphi(\Pi) ~=~1.2274 \cdot 10^{-3}.
\end{equation*}
This is the $\q$-calibration defect of the unit premium rule $\Pi$ for $Y$ measured under the gamma deviance loss.
\EndExample
\end{example}

\medskip

The previous example leaves us with two open questions:
\begin{itemize}
\item[(1)] The previous example essentially benefits from the fact that we can explicitly compute the recalibrated unit premium $m_{\q}(\Pi)$. Only this allows us to obtain Murphy's decomposition of the Bregman score into the resolution term and the miscalibration term, in particular, this allows us to compute \eqref{miscalibration estimation 00}.  What can we do in a real-world situation where the recalibrated unit premium cannot be computed explicitly?
\item[(2)] Table \ref{gamma deviance example table} gives a clear preference to one of the two premium rules in terms of \eqref{preference order 1}. Can we understand this numerical result graphically, e.g., identifying the weaknesses of the premium rules?
\end{itemize}

The first question (1) needs estimation of the corresponding split. The second question (2) is already partly answered by the AvE plots of Figure \ref{AvE Murphy 1} and we will provide a different perspective on the same results.

\subsection{Graphical illustration of the Bregman score}
Having two unit premium rules $\Pi^a$ and $\Pi^b$, we aim at better understanding how they form the Bregman score estimate \eqref{empirical scoring}. We again select one of the two unit premium rules, say $\Pi^a$, for an ordering and, thus, the following plots will be asymmetric if applied to both unit premium rules. Let $([i]_a)_{i=1}^n$ denote the ordered sequence w.r.t.~$(\Pi_i^{a})_{i=1}^n$, that is, 
$\Pi_{[i]_a}^{a} \le  \Pi_{[i+1]_a}^{a}$ for all $i=1,\ldots, n-1$.
This motivates to consider the paired iterative Bregman score aggregation graphs
\begin{eqnarray}\label{iterative Bregman 1}
m \in \{1,\ldots, n\} & \mapsto &
\frac{1}{\sum_{i=1}^n V_i}\, \sum_{i=1}^m V_{[i]_a}\,\Big(
L_\varphi(Y_{[i]_a},\mu_{\q})
-L_\varphi(Y_{[i]_a},\Pi_{[i]_a}^{\textcolor{red}{a}})\Big),
\\\label{iterative Bregman 2}
m \in \{1,\ldots, n\} & \mapsto &
\frac{1}{\sum_{i=1}^n V_i}\, \sum_{i=1}^m V_{[i]_a}\,\Big(
L_\varphi(Y_{[i]_a},\mu_{\q})
-L_\varphi(Y_{[i]_a},\Pi_{[i]_a}^{\textcolor{red}{b}})\Big).
\end{eqnarray}
The only difference is the last premium rule indicated by red color. Note that we consider a {\it paired} graph in \eqref{iterative Bregman 1}-\eqref{iterative Bregman 2}, meaning that the aggregation order is the same for both unit premium rules.

\medskip

\begin{figure}[htb!]
\begin{center}
\begin{minipage}[t]{0.45\textwidth}
\begin{center}
\includegraphics[width=\textwidth]{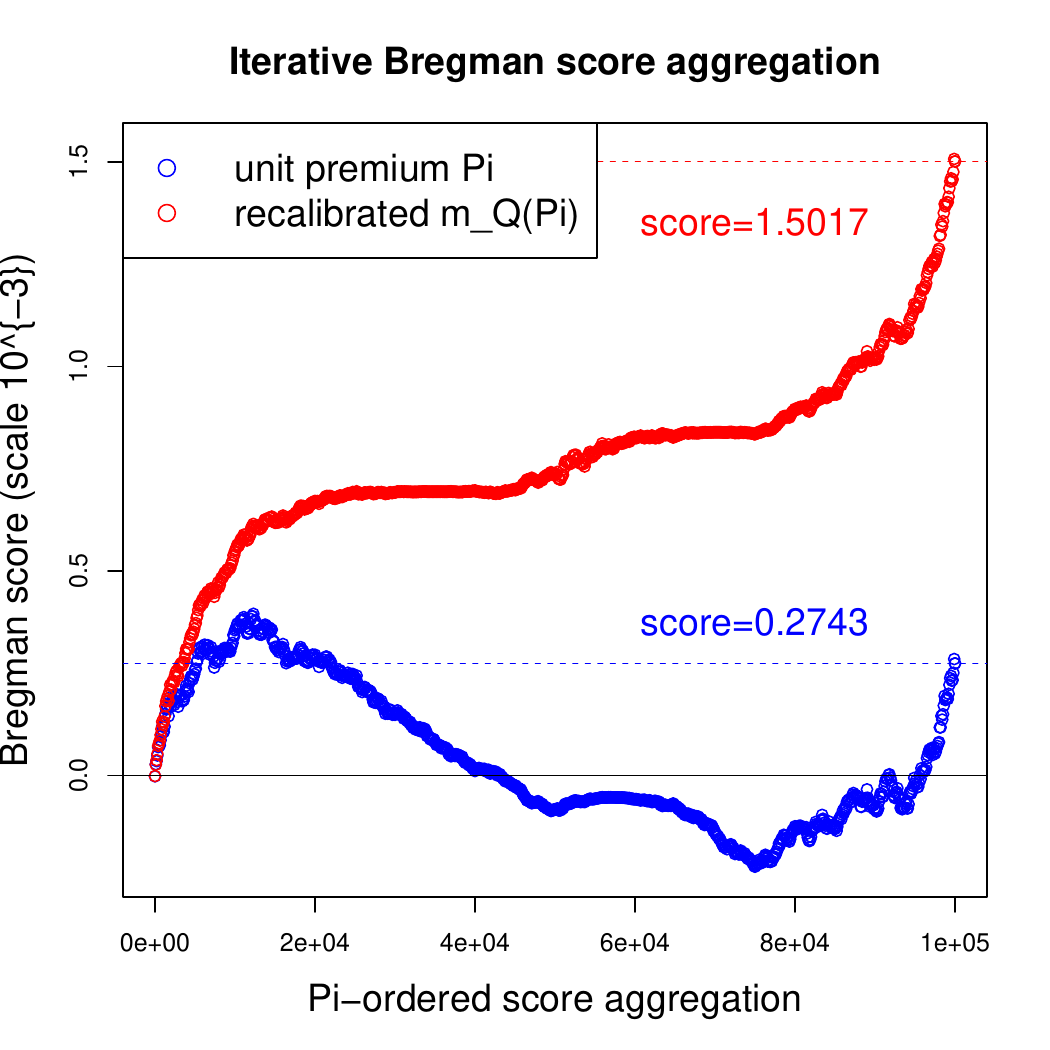}
\end{center}
\end{minipage}
\begin{minipage}[t]{0.45\textwidth}
\begin{center}
\includegraphics[width=\textwidth]{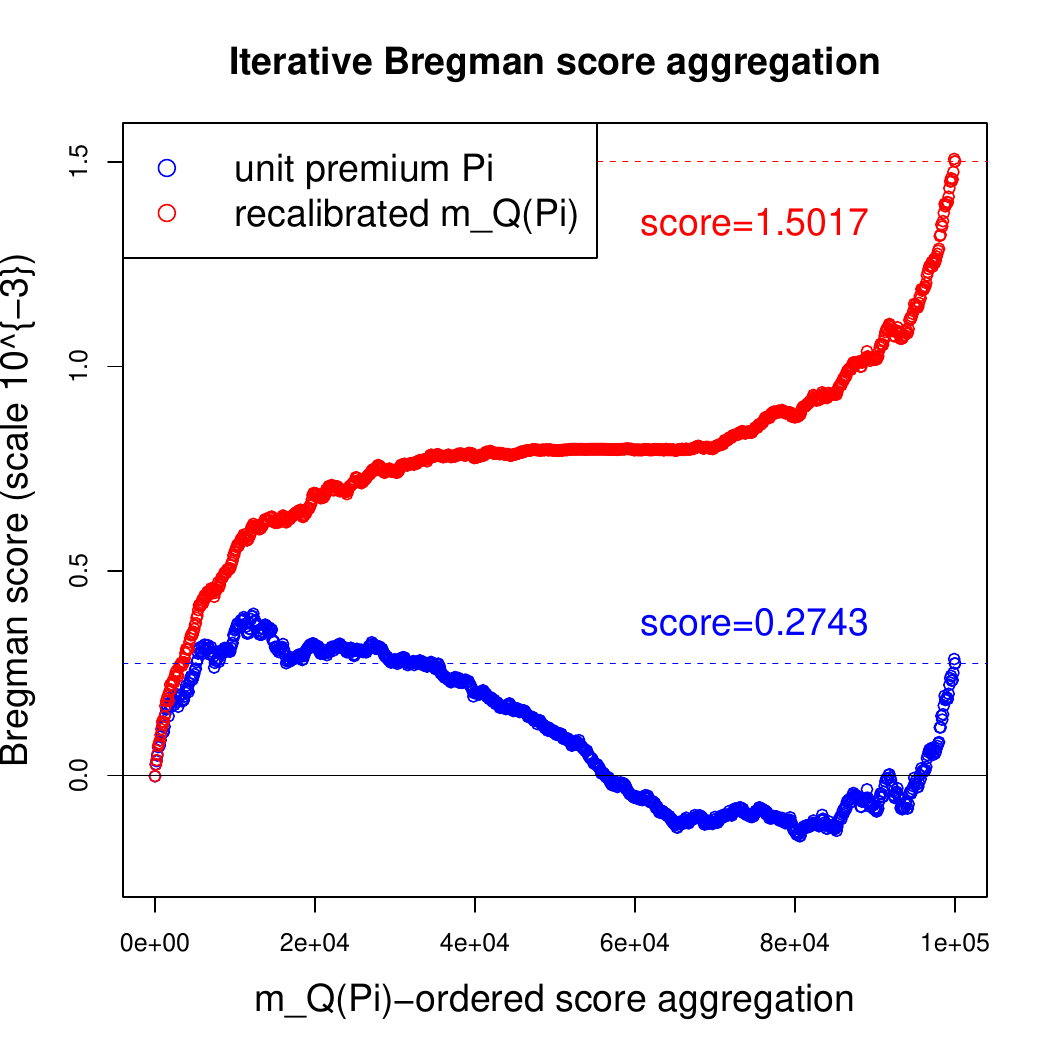}
\end{center}
\end{minipage}
\end{center}
\vspace{-.7cm}
\caption{Paired iterative Bregman score aggregation from smallest to biggest unit premium: (lhs) ordered w.r.t.~unit premium $(\Pi_i)_{i=1}^n$ and (rhs) ordered w.r.t.~recalibrated unit premium $(m_{\q}(\Pi_i))_{i=1}^n$.}
\label{Ordered Bregman aggregation}
\end{figure}

Figure \ref{Ordered Bregman aggregation} shows the paired iterative Bregman score aggregation \eqref{iterative Bregman 1}-\eqref{iterative Bregman 2}, on the left-hand side ordered w.r.t.~the unit premium $(\Pi_i)_{i=1}^n$ and on the right-hand side w.r.t.~the recalibrated unit premium $(m_{\q}(\Pi_i))_{i=1}^n$. The final value for $m=n$ precisely gives the Bregman scores of Table \ref{gamma deviance example table}.

These curves are increasing if the unit premium prediction is more accurate than the global mean $m_{\q}$, and decreasing otherwise. We observe that for $m_{\q}(\Pi)$ the curve is generally increasing with flat pieces where the recalibrated unit premium $m_{\q}(\Pi)$ takes roughly the same value as the global mean $m_{\q}$. On the right-hand side this flat piece is in the middle of the graph, on the left-hand side the flat piece is partitioned into two parts by the wrong risk ordering of $\Pi$ on the $x$-axis. On the other hand, the original unit premium $\Pi$ has many predictions that perform worse than the global mean $m_{\q}$ (negative slopes in blue curves), which clearly shows that the unit premium $\Pi$ is not an accurate predictor. This gives a graphical illustration how the Bregman score of Table \ref{gamma deviance example table} is composed across its unit premium range, and the downward slopes allow us to identify premium ranges with inaccurate predictions.

\subsection{Sample computation of resolution and miscalibration}
The next problem that we consider is Murphy's decomposition into resolution and miscalibration terms of the Bregman score in the case the recalibrated premium $m_{\q}(\Pi)$ is unknown, which is the typical situation in applications. Recall \eqref{Bregman score decomposition}. This gives us two different ways of expressing the miscalibration term
\begin{eqnarray*}
\mathrm{MCB}_\varphi(\Pi)&=&\E_{\q}[L_\varphi(m_{\q}(\Pi), \Pi)],
\\
\mathrm{MCB}_\varphi(\Pi)&=&S_\varphi(Y, m_{\q}(\Pi))-S_\varphi(Y, \Pi)
.
\end{eqnarray*}
The goal is to use these two representations for deriving an estimation.
Assume we have estimates $\widehat{m}_{\q}(\Pi_i)$, we can use both of the two identities to receive a miscalibration term estimate
\begin{eqnarray}
\label{MCB1}
\widehat{\mathrm{MCB}}^{(1)}_\varphi(\Pi)&=&\frac{1}{\sum_{i=1}^n V_i}\, \sum_{i=1}^n V_i\,
L_\varphi(\widehat{m}_{\q}(\Pi_i),\Pi_i),
\\\label{MCB2}
\widehat{\mathrm{MCB}}^{(2)}_\varphi(\Pi)&=&\frac{1}{\sum_{i=1}^n V_i}\, \sum_{i=1}^n V_i\,\Big(
L_\varphi(Y_i,\Pi)
-L_\varphi(Y_i,\widehat{m}_{\q}(\Pi_i))\Big).
\end{eqnarray}
We present these two different formulas \eqref{MCB1}-\eqref{MCB2} because they present different viewpoints, and they give different results on finite samples (they are identical in the population version). The first one \eqref{MCB1} does not use the responses for given $\widehat{m}_{\q}(\Pi_i)$, whereas the second one \eqref{MCB2} does. Consequently, the two estimates are expected to differ. From this viewpoint, one might always prefer the first one because it does not involve the noisy part of the responses. However, the accuracy of \eqref{MCB1} crucially depends on the accuracy that we can get in the (empirical) recalibration step yielding the estimates $\widehat{m}_{\q}(\Pi_i)$. This is the critical step in the empirical version of Murphy's decomposition.  Usually, one uses the isotonic regression
\eqref{isotonic regression 0} on the test sample ${\cal T}$ for the recalibration step, which yields estimates
\begin{equation*}
\widehat{m}_{\q}(\Pi_i)= (\widehat{\bm}_{\q}^{\rm iso})_i 
\qquad \text{ for $i=1,\ldots, n$.}
\end{equation*}
The results in the following list illustrate that this isotonic regression step underestimates the miscalibration error (0.9934 vs.~1.2274) in the first version \eqref{MCB1}, because the isotonic regression estimate is comparably crude and tends to be too close to the unit premium $\Pi_i$:
\begin{eqnarray}\nonumber
\mathrm{MCB}_\varphi(\Pi)&=& 1.2274,
\\\label{MCB numbers}
\widehat{\mathrm{MCB}}^{(1)}_\varphi(\Pi)&=& 0.9934,
\\\nonumber
\widehat{\mathrm{MCB}}^{(2)}_\varphi(\Pi)&=& 1.2591.
\end{eqnarray}
On the other hand, the second version \eqref{MCB2} overestimates the miscalibration error (1.2591 vs.~1.2274). This is due to an in-sample bias, because we use the test sample ${\cal T}$ to fit the isotonic regression \eqref{isotonic regression 0} and we use the same observations to evaluate \eqref{MCB2}. Of course, we could mitigate the last difficulty if we had additional (independent) observations or by cross-validation. In our numerical example \eqref{MCB numbers}, the true value is in between the two estimates, however, we do not know whether this holds more generally or only in this example.

\subsection{Illustration of resolution and miscalibration}
\label{Illustration of resolution and miscalibration}
We come back to Remark \ref{binned Murphy}. Consider $K \in \N$ bins, and
denote by $B$ the bin allocation of a randomly selected instance $(Y,V,\Pi)$ among the $K$ quantile bins under the $\q$-distribution $G_{\Pi}^{\q}$. Recall  
\begin{equation*}
m_{\q}(B) = \E_{\q}\left[ Y \mid B \right]
\qquad \text{ and } \qquad 
\pi_B = \E_{\q}\left[ \Pi \mid B \right].
\end{equation*}
The conditional Bregman Pythagorean relation yields, see Lemma \ref{Pythagoras},
\begin{equation*}
 \E_{\q}\left[ L_{\varphi}\left(Y, \pi_B\right)\mid B\right]
  =
  \E_{\q}\left[ L_{\varphi}\left(Y,m_{\q}(B)\right)\mid B\right]
  + L_{\varphi}\left(m_{\q}(B), \pi_B\right) \qquad \text{ a.s.}
\end{equation*}
Taking the $\E_{\q}$-expectation proves \eqref{binning to be proved}, and it yields the following Murphy's decomposition, the proof is identical to the one of Corollary \ref{cor Murphy},
\begin{eqnarray*}
\E_{\q}\left[ L_{\varphi}(Y, \pi_B)\right]
&=& \E_{\q}\left[ L_{\varphi}(Y,m_{\q}(B))\right]
    +\E_{\q}\left[ L_{\varphi}(m_{\q}(B), \pi_B)\right]
  \\&=&\E_{\q}\left[ L_{\varphi}(Y,\mu_{\q})\right]
-\E_{\q}\left[ L_{\varphi}(m_{\q}(B), \mu_{\q})\right]
+\E_{\q}\left[ L_{\varphi}(m_{\q}(B), \pi_B)\right]        
        .
\end{eqnarray*}
The first term on the first line on the right-hand side is the irreducible within-bin variation produced by the responses $Y$ around $m_{\q}(B)$, and the second term gives {\it a} miscalibration error, by using $\pi_B$ instead of the calibrated version $m_{\q}(B)$. There is a subtle difference here to Murphy's decomposition \eqref{Murphy}, namely, we consider $B$-partitioning and we do not recalibrate $\pi_B$ resulting in the calibrated version $m_{\q}(B)$. Therefore, the second line does not present the classic Murphy's decomposition \eqref{Murphy}, but a variant that uses $B$-binning.

\medskip

\begin{figure}[htb!]
\begin{center}
\begin{minipage}[t]{0.45\textwidth}
\begin{center}
\includegraphics[width=\textwidth]{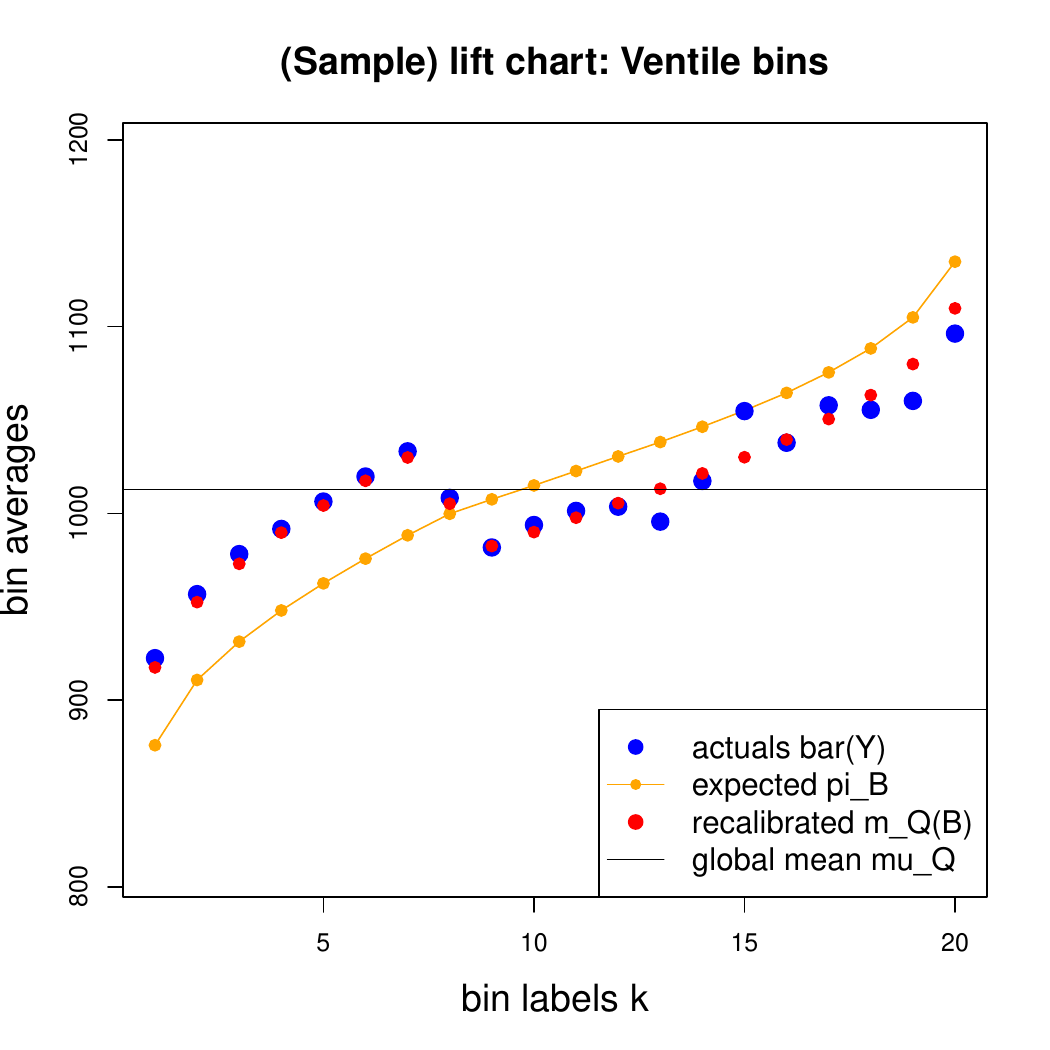}
\end{center}
\end{minipage}
\begin{minipage}[t]{0.45\textwidth}
\begin{center}
\includegraphics[width=\textwidth]{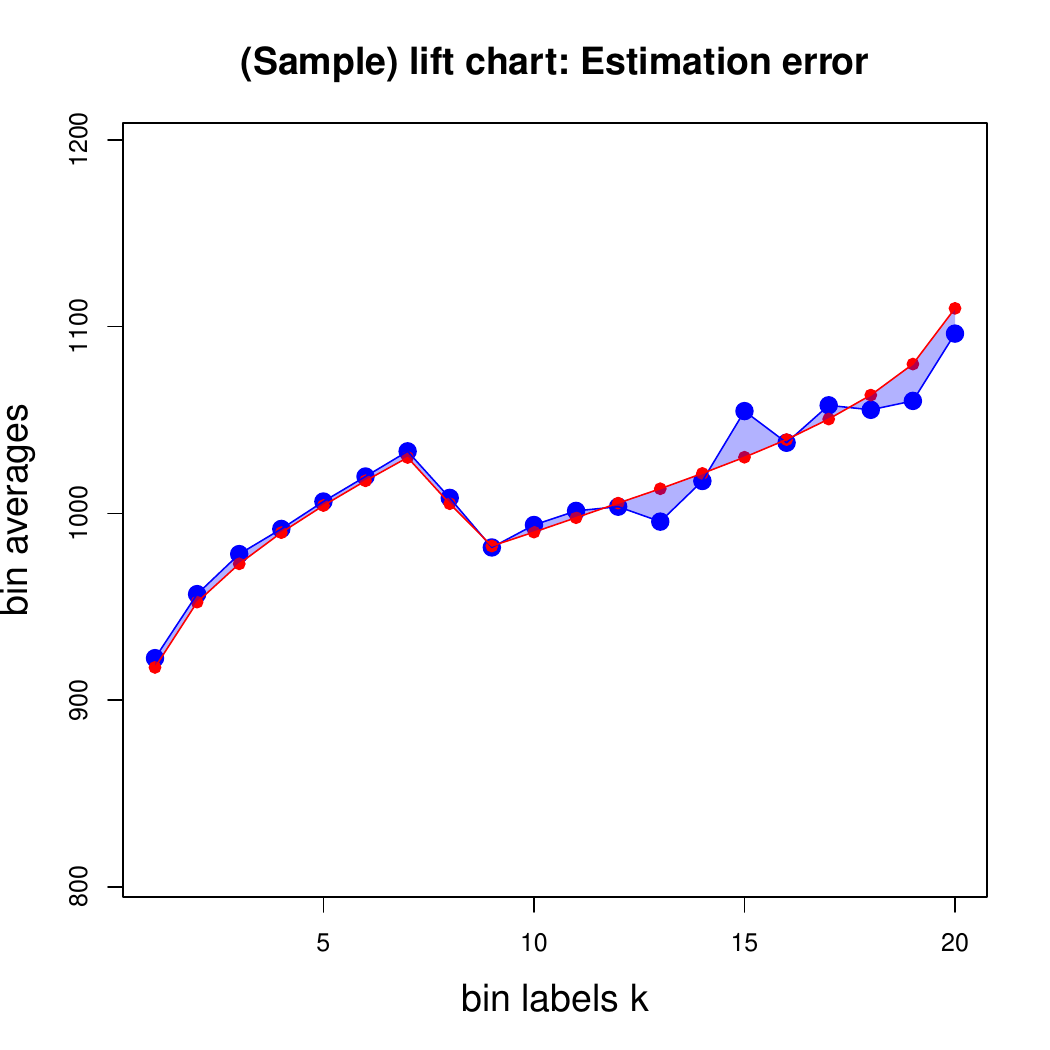}
\end{center}
\end{minipage}
\begin{minipage}[t]{0.45\textwidth}
\begin{center}
\includegraphics[width=\textwidth]{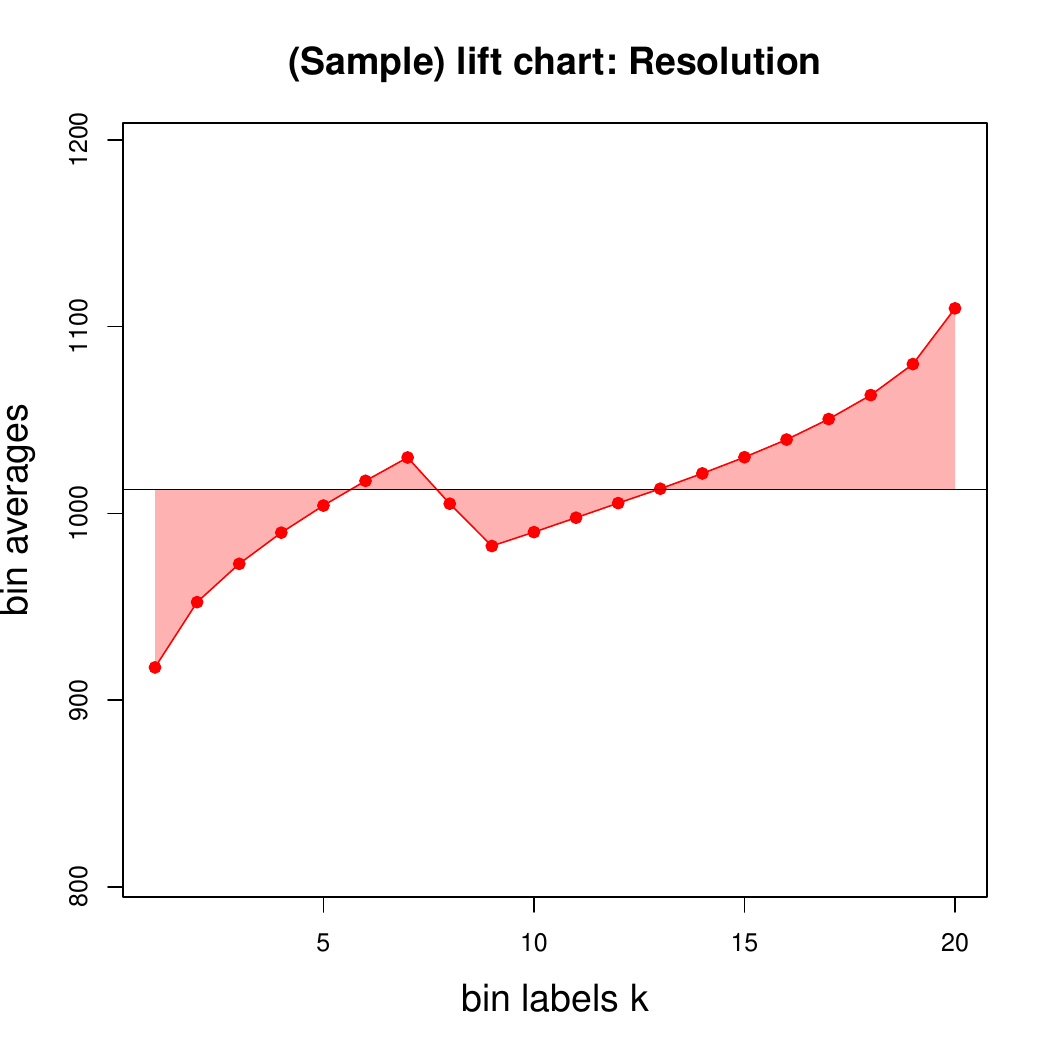}
\end{center}
\end{minipage}
\begin{minipage}[t]{0.45\textwidth}
\begin{center}
\includegraphics[width=\textwidth]{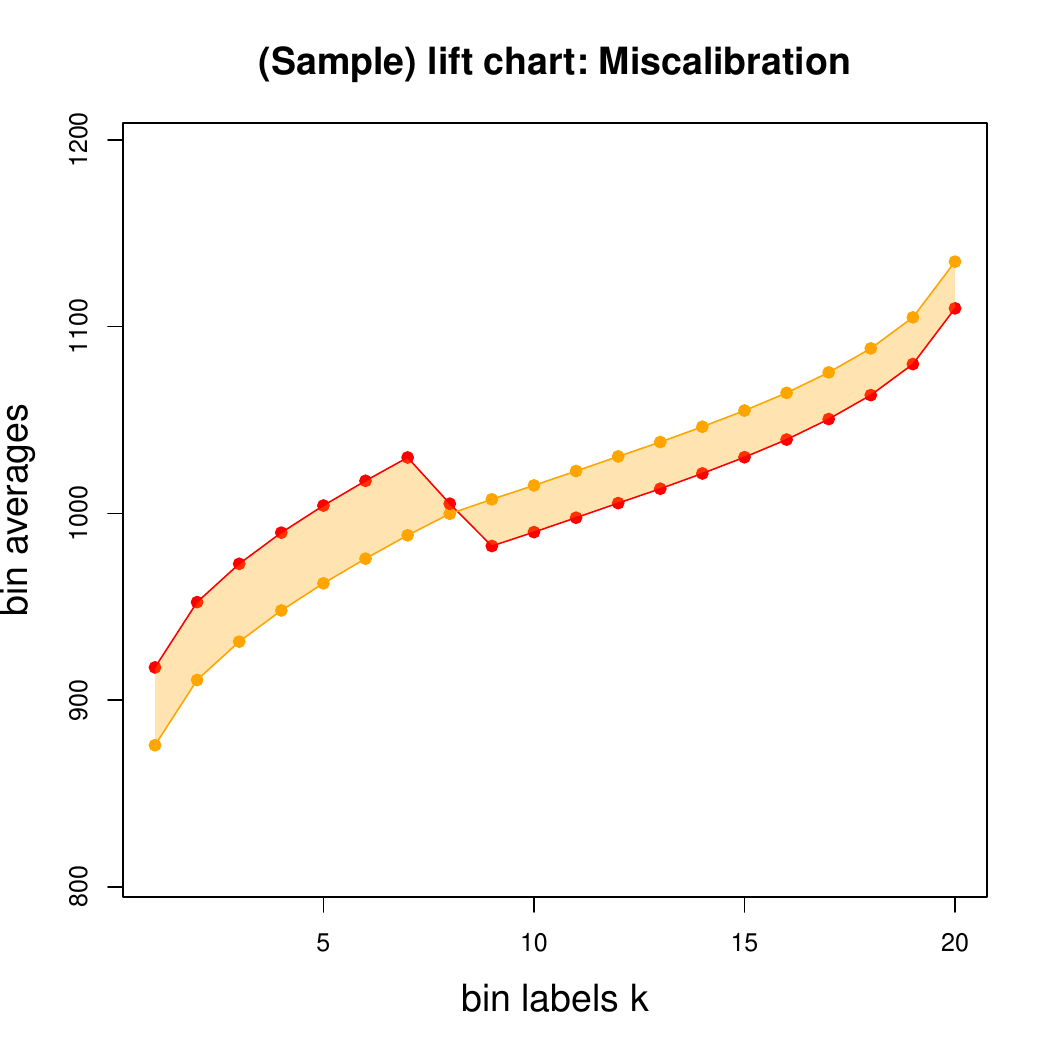}
\end{center}
\end{minipage}
\end{center}
\vspace{-.7cm}
\caption{(top-lhs) Lift chart using ventile binning $K=20$, (top-rhs) illustration of estimation error
  $\overline{Y}_B - m_{\q}(B)$, (bottom-lhs) visualization of resolution gain over the global mean model $\mu_{\q}$, and (bottom-rhs) visualization of miscalibration error (raw differences before they enter the Bregman loss).}
\label{Resolution plot}
\end{figure}

This then inspires the following plots of Semenovich--Dolman \cite{SemenovichDolman2020}:
\begin{itemize}
\item Figure \ref{Resolution plot} (top-lhs) is precisely the sample lift chart illustrated in Figure \ref{LiftChart Bins 1}; for better visualization we use ventile binning $K=20$.
Moreover, we adapt the notation to the present section using the binning indicator $B$, and we add to this lift chart the global mean $\mu_{\q}=\E_{\q}[Y]$ (black horizontal line) as well as the true calibrated bin means $m_{\q}(B)$ (red dots).
\item
Figure \ref{Resolution plot} (top-rhs) shows the estimation error $\overline{Y}_B - m_{\q}(B)$ comparing the sample means $\overline{Y}_B$ on the bins $B\in \{1,\ldots, K\}$, see \eqref{binning procedure 1}, to their recalibrated population counterparts $m_{\q}(B)$. This is the unavoidable estimation error of the recalibration step on the bins that results from the irreducible risk, i.e., this is the error we collect under an unknown population distribution.
\item
Figure \ref{Resolution plot} (bottom-lhs) shows the resolution term (raw differences) comparing the calibrated means $m_{\q}(B)$ to the global mean $\mu_{\q}$. This is the (calibrated) price granularity that we can get on the selected bins, resulting in the corresponding discrimination and risk classification. The red area shows the cross-subsidy  if one would use the egalitarian price $\mu_{\q}$ instead.
\item
Figure \ref{Resolution plot} (bottom-rhs) shows the miscalibration error (raw differences) if using $\pi_B$ instead of the calibrated version $m_{\q}(B)$ obtained on the $B$-binning granularity.
\end{itemize}

The illustration in Figure \ref{Resolution plot} shows resolution and calibration in the lift chart. The binning was performed w.r.t.~the premium scheme $\Pi$. However, the same analysis applies to any binnning of the insurance portfolio, and this relates to the original work in the 1960's about an optimal risk classification; see Bailey--Simon \cite{BaileySimon}, Bailey \cite{Bailey} and Jung \cite{Jung}. In a more modern view, such a risk classification should also be void of unfair discrimination; see Lindholm et al.~\cite{DFIP}.

\section{Gini score}
\label{section Gini score}
\subsection{Introduction: Risk ranking}
In the previous section, the discussion has been centered around {\it calibration} and {\it discrimination}. A couple of times, we made some statements about {\it risk ranking}, in particular, the isotonic regression is based on such a (correct) risk ranking. The present section discusses the question of accurate risk rankings. A popular risk ranking measure is the {\it Gini score} which goes back to the Gini index \cite{Gini0, Gini}  in economics that was used to study disparity of wealth distributions within different populations; the Gini score is based on the Lorenz curve \cite{Lorenz}. This Gini concept has been adopted in statistical modeling for risk ranking assessments, and it has also become a popular model validation tool in credit scoring and actuarial modeling; see Gourieroux--Jasiak \cite{GourierouxJasiak}, Frees et al.~\cite{Frees1, Frees2} and Denuit et al.~\cite{DenuitSznajderTrufin, DenuitTrufin}.
The following outline is based on Brauer--W\"uthrich \cite{Brauer1}, who discussed the Gini score under exposures (case weights) and ties in the data, and we directly focus on the sample version of the Gini score.

\medskip

The main question we study in this section is whether the order statistics of the unit premiums $(\Pi_i)_{i=1}^n$ and the observed loss costs $(Y_i)_{i=1}^n$ are aligned. This consideration is {\it purely rank based}, and any strictly monotonically increasing transformation of the unit premium provides the same result. Thus, the following considerations do not assess calibration and they also are not based on strictly consistent loss functions for mean estimation; see W\"uthrich \cite{WGini} for more discussion.

\subsection{Cumulative accuracy profile}
We start by solely considering the observations $(Y_i)_{i=1}^n$, and the unit premiums $(\Pi_i)_{i=1}^n$ will be integrated in a second step. This first step constructs the {\it Leimkuhler curve}, which is a mirrored version of the {\it Lorenz curve}. Both curves consider an order statistics of the observations $(Y_i)_{i=1}^n$, the former in decreasing order and the latter in increasing order. 
Consider the decreasing order statistics of the unit loss costs
\begin{equation}\label{suborder 0}
Y_{(1)} \ge Y_{(2)} \ge \ldots \ge Y_{(n)},
\end{equation}
with a deterministic rule if there are ties in the observations $(Y_i)_{i=1}^n$. The lower round brackets $_{(i)}$ in \eqref{suborder 0} indicate that we ordered the responses from biggest to smallest. We map precisely this order to the exposures $(V_{[i]})_{i=1}^n$ and we use square brackets $_{[i]}$ to indicate that this is the implied order from the responses $(Y_{(i)})_{i=1}^n$. For example, $V_{[5]}$ is the exposure of the fifth biggest unit loss costs in \eqref{suborder 0}.
 
We define the increasing sequences (running totals)
\begin{eqnarray}\label{Leimkuhler 1}
m \in \{1,\ldots, n\} & \quad \mapsto \quad& u_m=\frac{1}{\sum_{i=1}^n V_i}\,
\sum_{i=1}^m V_{[i]},
\\\label{Leimkuhler 2}
m \in \{1,\ldots, n\} &  \quad \mapsto \quad & L_m= \frac{1}{\sum_{i=1}^n V_i Y_i}\,
\sum_{i=1}^m V_{[i]} Y_{(i)},
\end{eqnarray}
and we initialize $u_0=L_0=0$ for $m=0$. Note that $(u_m)_{m=0}^n$ 
is strictly increasing and $(L_m)_{m=0}^n$ is non-decreasing, both live in the unit interval.

The first line \eqref{Leimkuhler 1} considers the exposure-weighted sample distribution of the unit loss costs because the exposures are ordered w.r.t.~$(Y_i)_{i=1}^n$; see \eqref{empirical distribution 2} for the premium counterpart. There is one difference though in \eqref{Leimkuhler 1} compared to \eqref{empirical distribution 2}, namely, we disaggregate the ties of 
$(Y_i)_{i=1}^n$ to retain the original cardinality $n$ of the training sample ${\cal T}$. Since below we are going to linearly interpolate, the selected suborder in the ties \eqref{suborder 0} will not impact the results. The second line 
\eqref{Leimkuhler 2} is the exposure-weighted Leimkuhler curve that measures the weighted contributions of the decreasing unit loss costs $(Y_{(i)})_{i=1}^n$ to the total loss costs $\sum_{i=1}^n Z_i=\sum_{i=1}^n V_i Y_i$.

\medskip
\begin{graybox}
The {\it Leimkuhler curve} is obtained by linearly interpolating between the points
\begin{equation}\label{def Leimkuhler curve}
\left(u_m, L_m\right) \qquad \text{ for $m=0,\ldots, n$.}
\end{equation}
\end{graybox}
\medskip

The Leimkuhler curve is a concave curve in the unit square $[0,1]^2$ that connects the two corners $(0,0)$ and $(1,1)$. This Leimkuhler curve is illustrated in cyan color in Figure \ref{figure Gini plots} (lhs). This is the upper benchmark (upper bound) of the Gini score because it considers the perfect ordering of the unit loss costs $(Y_{(i)})_{i=1}^n$, and our goal is to see whether the unit premiums $(\Pi_i)_{i=1}^n$ and $(m_{\q}(\Pi_i))_{i=1}^n$ align with this order.

\medskip

To compute the Gini score, we construct a second curve called the {\it cumulative accuracy profile} (CAP); its mirrored version is also called {\it concentration curve}, see Denuit et al.~\cite{DenuitSznajderTrufin}.  The construction of the CAP slightly differs from the Leimkuhler curve, because now the suborder in the ties of the unit premiums $(\Pi_i)_{i=1}^n$ matters. 

The decreasing order statistics of the unit premiums $(\Pi_i)_{i=1}^n$ is constructed in two steps. In the first step, we order the unit premiums in decreasing order $\Pi_{(1)} \ge \Pi_{(2)} \ge \ldots \ge \Pi_{(n)}$. This order statistics may have ties, say, we may have $\Pi_{(k)}=
\Pi_{(k+1)}= \ldots = \Pi_{(k+l)}$. In such ties we consider two suborders implied by the corresponding responses. The first suborder
\begin{equation}\label{suborder a}
\Pi_{(1 \downarrow)} \ge \Pi_{(2\downarrow)} \ge \ldots \ge \Pi_{(n\downarrow)},
\end{equation}
is implied by ordering the ties of the unit premiums in a decreasing order w.r.t.~the responses $(Y_i)_{i=1}^n$, and equivalently in increasing order w.r.t.~the responses $(Y_i)_{i=1}^n$ denoted by
\begin{equation}\label{suborder b}
\Pi_{(1 \uparrow)} \ge \Pi_{(2\uparrow)} \ge \ldots \ge \Pi_{(n\uparrow)}.
\end{equation}
The first suborder \eqref{suborder a} in the ties considers the most favorable suborder to align the ordering of the unit premiums with the responses, and the second suborder
\eqref{suborder b} is the least favorable one. In absence of ties in $(\Pi_i)_{i=1}^n$, the order statistics \eqref{suborder a} and \eqref{suborder b} are identical.

We then map these two orderings to the exposures $(V_{[i\downarrow]})_{i=1}^n$ and $(V_{[i\uparrow]})_{i=1}^n$, and to the unit loss costs $(Y_{[i\downarrow]})_{i=1}^n$ and $(Y_{[i\uparrow]})_{i=1}^n$.
This gives us the running total sequences for $m \in \{1,\ldots, n\}$
\begin{eqnarray}\label{Gini 1b}
 u^{\downarrow}_m=\frac{1}{\sum_{i=1}^n V_i}\,
\sum_{i=1}^m V_{[i\downarrow]} & \text{ and }&
u^{\uparrow}_m=\frac{1}{\sum_{i=1}^n V_i}\,
\sum_{i=1}^m V_{[i\uparrow]},
\\\label{Gini 2b}
 C^{\downarrow}_m= \frac{1}{\sum_{i=1}^n V_i Y_i}\,
\sum_{i=1}^m V_{[i\downarrow]} Y_{[i\downarrow]}
& \text{ and }&
C^{\uparrow}_m= \frac{1}{\sum_{i=1}^n V_i Y_i}\,
\sum_{i=1}^m V_{[i\uparrow]} Y_{[i\uparrow]},
\end{eqnarray}
and we initialize $u^{\downarrow}_0=u^{\uparrow}_0=C^{\downarrow}_0=C^{\uparrow}_0=0$ for $m=0$. In absence of ties in the unit premiums
$(\Pi_i)_{i=1}^n$, the two constructions in \eqref{Gini 1b} and in \eqref{Gini 2b} coincide because no subordering is necessary.

The main difference between the Leimkuhler curve \eqref{Leimkuhler 2} and the CAPs
\eqref{Gini 1b}-\eqref{Gini 2b} is that we compute the running totals in different orders. The former is ordered w.r.t.~the responses $(Y_i)_{i=1}^n$ and the latter w.r.t.~the unit premiums $(\Pi_i)_{i=1}^n$.
We interpret \eqref{Gini 2b} as a {\it concordance measure} that assesses how well the order (ranking) of $(\Pi_i)_{i=1}^n$ is aligned with the one of $(Y_i)_{i=1}^n$. If they have the same order, we obtain the Leimkuhler curve \eqref{Leimkuhler 2}, and otherwise \eqref{Gini 2b} is dominated by the Leimkuhler curve \eqref{Leimkuhler 2}. The motivation behind the Gini score precisely is to measure this discrepancy. This consideration is fully rank based -- by \eqref{suborder a} and \eqref{suborder b} -- and it is asymmetric in the treatment of $(\Pi_i)_{i=1}^n$ and $(Y_i)_{i=1}^n$.

\medskip
\begin{graybox}
The {\it cumulative accuracy profiles} (CAPs) are obtained by linear interpolation between the points
\begin{equation}\label{def CAPs 1}
\left(u^{\downarrow}_m, C^{\downarrow}_m\right) \qquad \text{ for $m=0,\ldots, n$,}
\end{equation}
respectively, 
\begin{equation}\label{def CAPs 2}
\left(u^{\uparrow}_m, C^{\uparrow}_m\right) \qquad \text{ for $m=0,\ldots, n$.}
\end{equation}
\end{graybox}
\medskip

Again, these two curves are identical in absence of ties in the unit premiums.

\medskip

\begin{figure}[htb!]
\begin{center}
\begin{minipage}[t]{0.45\textwidth}
\begin{center}
\includegraphics[width=\textwidth]{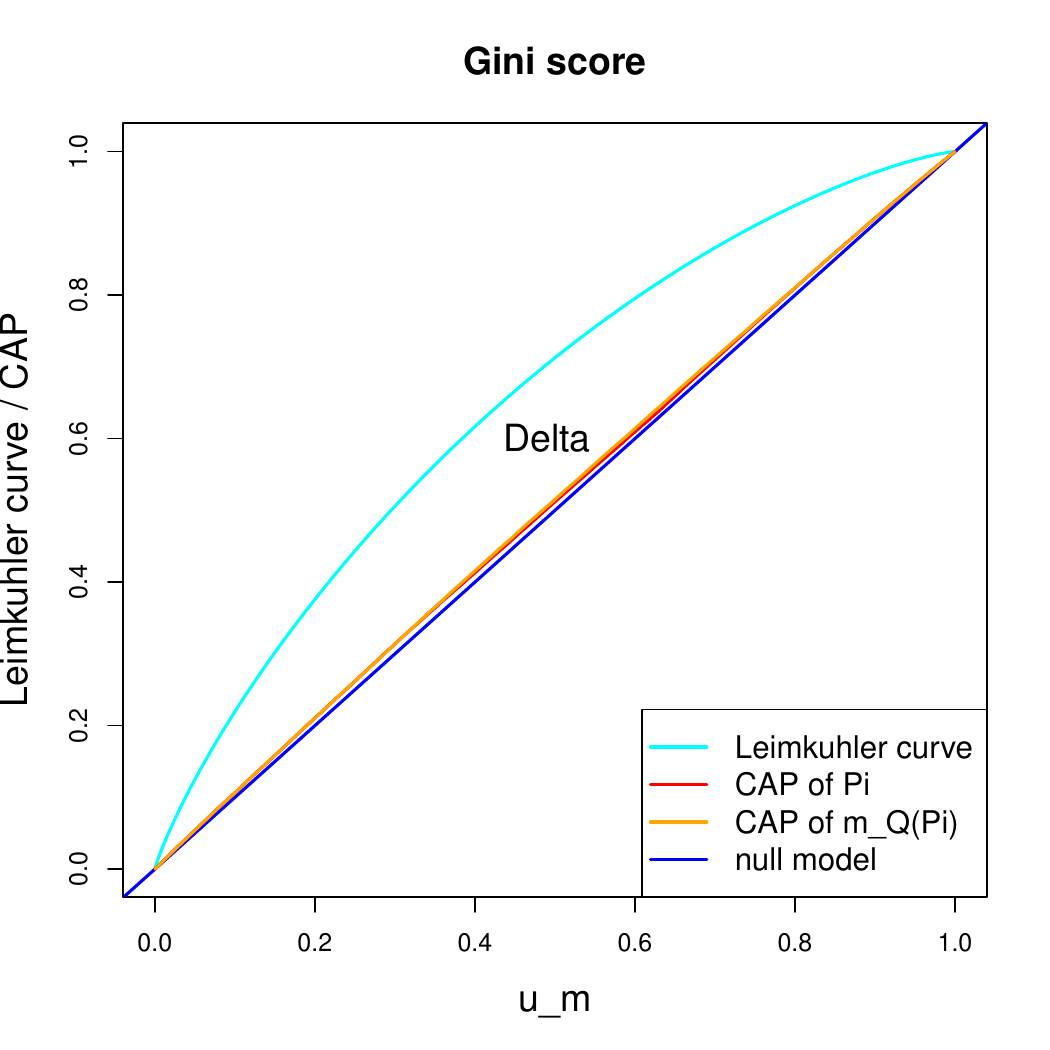}
\end{center}
\end{minipage}
\begin{minipage}[t]{0.45\textwidth}
\begin{center}
\includegraphics[width=\textwidth]{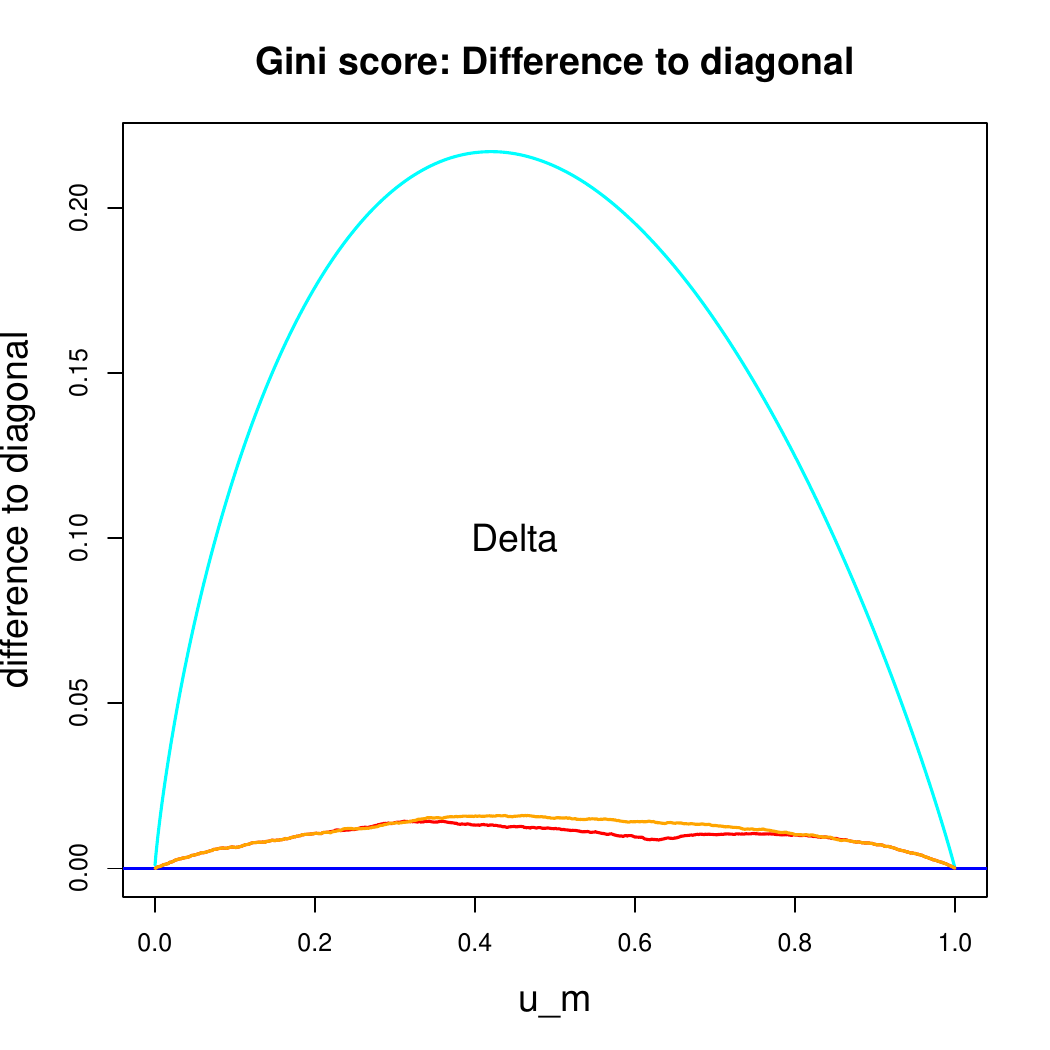}
\end{center}
\end{minipage}
\end{center}
\vspace{-.7cm}
\caption{Gini score plots: (lhs) CAPs of $(\Pi_i)_{i=1}^n$ and $(m_{\q}(\Pi_i))_{i=1}^n$ and Leimkuhler curve of $(Y_i)_{i=1}^n$; and (rhs) with subtracted diagonal for better visualization.}
\label{figure Gini plots}
\end{figure}

Figure \ref{figure Gini plots} (lhs) shows the CAPs \eqref{def CAPs 1}-\eqref{def CAPs 2} of the two premium rules $(\Pi_i)_{i=1}^n$ and $(m_{\q}(\Pi_i))_{i=1}^n$ in red and orange color (the two premium rules do not have ties). We see that the red and orange CAPs are almost indistinguishable in Figure \ref{figure Gini plots} (lhs)
and they are dominated by the Leimkuhler curve that considers the perfect ordering. The diagonal line in blue reflects the null model predictor $\mu_{\q}$ not considering any covariates.  To better visualize the results, we subtract these diagonal values from 
the Leimkuhler and CAP curves, this yields the plot on the right-hand side of Figure \ref{figure Gini plots}. We can now see that the CAP of $(m_{\q}(\Pi_i))_{i=1}^n$ in orange dominates the one of $(\Pi_i)_{i=1}^n$ in red, reflecting that there is an issue in the risk ranking of $(\Pi_i)_{i=1}^n$. That is, we correctly conclude from this plot that the recalibrated unit premiums provide the better risk ranking than the original unit premiums. From these plots, it seems that the differences are comparably small, however, the magnitudes of the differences in this analysis heavily suffer from the low signal-to-noise ratio in the data, which was already reported in Figure \ref{Murphy Pi 2}.

\subsection{Gini score}
The final step is to map the graphs of Figure \ref{figure Gini plots} to a score.
The Gini score is obtained by computing the ratio of the area of the CAP curve enclosed with the diagonal, and the area of the Leimkuhler curve enclosed with the diagonal -- the difference is highlighted by Delta in Figure \ref{figure Gini plots}. If Delta is zero we have a perfect ordering. This step also requires taking care of the suborders in the ties
\eqref{Gini 1b}-\eqref{Gini 2b}, and we simply average over the areas of the most favorable and least favorable suborders. We define from the Leimkuhler curve \eqref{def Leimkuhler curve} the area
\begin{equation*}
B = \sum_{m=1}^n \left(u_{m}- u_{m-1}\right) \frac{L_{m-1}+L_{m}}{2} - 1/2,
\end{equation*}
and for the CAPs \eqref{def CAPs 1}-\eqref{def CAPs 2} the areas
\begin{eqnarray*}
A^{\downarrow} &=& \sum_{m=1}^n \left(u^{\downarrow}_{m}- u^{\downarrow}_{m-1}\right) \frac{C^{\downarrow}_{m-1}+C^{\downarrow}_{m}}{2} - 1/2,
\\
A^{\uparrow} &=& \sum_{m=1}^n \left(u^{\uparrow}_{m}- u^{\uparrow}_{m-1}\right) \frac{C^{\uparrow}_{m-1}+C^{\uparrow}_{m}}{2} - 1/2
\quad \le \quad A^{\downarrow}.
\end{eqnarray*}
These formulas reflect a linear interpolation between the points in
\eqref{def Leimkuhler curve}, \eqref{def CAPs 1} and \eqref{def CAPs 2}, respectively.
The area Delta in Figure \ref{figure Gini plots} is the difference of $B$ and $A$, i.e., 
${\rm Delta}=B-A$. Remark that $B>0$ as soon as we have at least two different observations.

\bigskip
\begin{graybox}
The {\it Gini score} for the risk ranking of $\Pi$ for $Y$ on the test data ${\cal T}$ is defined by
\begin{equation}\label{def Gini 1}
\operatorname{Gini}({\cal T}) = \frac{(A^{\downarrow}+A^{\uparrow})/2}{B} ~\le ~ 1.
\end{equation}
\end{graybox}
\medskip

The implementation of the computation of the Gini score \eqref{def Gini 1} is straightforward and it can be found in the appendix of Brauer--W\"uthrich \cite{Brauer1}.

\medskip

\begin{table}[htb!]
\begin{center}
{\small
\begin{tabular}{|l|r|}
\hline
premium rules & Gini score \\
\hline\hline
global mean model $\mu_{\q}$ & 0.0000 \\
premium rule $\Pi$ & 0.0627  \\
premium rule $m_{\q}(\Pi)$ & 0.0719 \\
\hline
\end{tabular}}
\end{center}
\caption{Gini score assessing the risk rankings.}
\label{Gini score table}
\end{table}

Table \ref{Gini score table} shows the resulting Gini scores (bigger is better), and we give preference to the recalibrated unit premium $m_{\q}(\Pi)$ for providing the better risk ranking. As already mentioned, these numbers are comparably small (as can also be seen from Figure \ref{figure Gini plots}) because we have a low signal-to-noise ratio, but we obtain the correct preference order for the risk ranking.

\section{French motor third party liability example}

The running example above was a rather stylized one. In this section, we consider the popular French motor third party liability (MTPL) claims frequency dataset of Dutang--Charpentier \cite{Dutang} as our second example. We first apply the data cleaning procedure as described in W\"uthrich--Merz \cite{WM2023}.\footnote{The cleaned data is available from \url{https://aitools4actuaries.com/}.} The dataset is then partitioned into a learning dataset ${\cal L}$ consisting of 610,206 insurance policies and an independent training dataset ${\cal T}$ that contains $n=67,801$ insurance policies. We fit two unit premium rules $\Pi^{\rm PIN}$ and $\Pi^{\rm GBM}$ on the same learning dataset ${\cal L}$, and our goal is to validate these two premium rules on the independent test sample ${\cal T}$.

The dataset contains nine covariates, a claim counts response $Z \in \N_0$ and a time exposure $V>0$ in yearly units. For the subsequent considerations, we compute the unit losses $Y=Z/V$ -- which are {\it claims frequencies} in the case of claim counts $Z$ -- and our aim is to find accurate {\it expected claim frequency estimates} $\Pi=\Pi(\bX)$ which are measurable functions of the covariates $\bX$, we also refer to Remarks \ref{Remark covariate}. Thus, in this example we consider claim frequencies, but to maintain linguistic consistency with the previous sections, we call $\Pi$ a {\it unit premium rule}.

Our numerical analysis considers two unit premium rules (expected claim frequency regression estimates):
\begin{enumerate}
\item Pair-wise interaction network (PIN) forecast model $\Pi^{\rm PIN}$. The PIN model was developed in Richman et al.~\cite{PIN}, it is a neural network that specifically models pair-wise interactions of covariates. This model was trained on the learning dataset ${\cal L}$ mentioned above. 
 We use precisely the PIN parametrization that was obtained in Richman et al.~\cite[last line of Table 2]{PIN}.\footnote{The average Poisson deviance losses in Richman et al.~\cite[Table 2]{PIN} are scaled with the sample size $n$ and not with the aggregated exposure $\sum_{i=1}^n V_i$. The conversion factor on the test sample to compare the numbers is $n/\sum_{i=1}^n V_i=1.88511$.}
 \item As a second competing forecast model, we fit a gradient boosting machine (GBM) on the same learning dataset ${\cal L}$. We use the LightGBM version of {\sf R} with the hyper-parameter specification as given in the appendix, Listing \ref{LightGBMRCode}. This provides us with a second unit premium rule $\Pi^{\rm GBM}$.
\end{enumerate}
Our goal is to validate these two unit premium rules 
$\Pi^{\rm PIN}$ and $\Pi^{\rm GBM}$
on the independent test sample ${\cal T}$. We highlight that  both of the two unit premium rules are strong claim frequency forecast models on the French MTPL dataset. 

\subsection{Poisson deviance loss}
\label{sec Poisson deviance loss}
We compare and validate the two unit premium rules $\Pi^{\rm PIN}$ and $\Pi^{\rm GBM}$ in terms of calibration, discrimination and risk ranking. This is done on the independent test sample ${\cal T}$ that consists of $n=67,801$ instances. Since we deal with claim frequencies, it is natural to use the Poisson deviance loss for scoring. The Poisson deviance loss has a strictly convex generator with  $\varphi''(\theta)=2/\theta>0$ on the positive real line. That is, we have an example from the Patton family \eqref{Tweedie choices} with parameter $p=1$. We select for $x>0$
\begin{equation*}
\varphi(x) = 2\left( x \log(x)-x\right),
\end{equation*}
and we extend it to $x=0$ by setting $\varphi(0)=0$.
This yields first and second derivatives on $\R_+$
\begin{equation*}
\varphi'(x) = 2 \log(x)
\qquad \text{ and }\qquad
\varphi''(x) = 2/x>0.
\end{equation*}
This gives the Poisson deviance loss
\begin{equation}\label{Poisson loss}
L_\varphi(y,x) = 2\left(x - y + y \log(y/x)\right),
\end{equation}
with $L_\varphi(0,x)=2x$ in $y=0$ (i.e., for insurance policies without claims).

\begin{remark}[Exposures in Poisson deviance losses]\normalfont
The Poisson deviance loss is special in terms of the positive homogeneity property \eqref{homogeneous order 1}, namely, it is homogeneous of order 1.
The consequence of this property is that we obtain the {\it identical scoring} by either considering the total loss $Z$ (claim counts in our example) under the $\p$-measure or the unit loss costs $Y$ (claim frequency) under the $\q$-measure because of the identity
\begin{eqnarray*}
  L_\varphi(Z,P) ~=~ L_\varphi(VY,V\Pi )
  &=&
      2\left(V\Pi - VY + VY \log(Y/\Pi)\right)
  \\&=&
2\,V\left(\Pi - Y + Y \log(Y/\Pi)\right) =V\, L_\varphi(Y,\Pi).
\end{eqnarray*}
Thus, under the Poisson deviance loss, we can either use claim counts $Z$ to validate $P$ under $\p$ (left-hand side of the previous identity) or we can use the claim frequency $Y=Z/V$ to validate $\Pi=P/V$ under the $\q$-measure (on the right-hand side of the previous identity). This sometimes leads to confusion in Poisson model fitting, e.g., in {\sf R} there is the family {\tt poisson} which requires claim counts $Z$, and there is the family {\tt quasipoisson} which allows one to use claim frequencies $Y$ resulting in the same forecast model. This equivalence uses the homogeneity of order 1, and it does {\it not} carry over to other Bregman losses.

The quasi-Poisson estimation is also a popular method to ensure the (in-sample) balance property under a log-link generalized linear model (GLM); see Lindholm--W\"uthrich \cite{LW, LW0}. It can be used for any non-negative loss costs $Y$, not necessarily being claims frequencies, because the estimation procedure only relies on strictly consistent scoring with the Poisson deviance loss under the log-link choice, but not on any distributional assumptions.
\end{remark}

\medskip

\begin{table}[htb!]
\begin{center}
{\small
\begin{tabular}{|l|rr||r|}
\hline
& Bregman & Bregman & \qquad global \\
premium rules & divergence & score & mean \\
\hline\hline
global mean model $\widehat{\mu}_{\q}=\widehat{\mu}_{\q, {\cal L}}$ & 47.967 & -- & $\overline{Y}_{\q, {\cal T}}=7.35\%$ \\
  GLM unit premium& 45.435 &  2.532 & 7.40\%\\
  PIN unit premium $\Pi^{\rm PIN}$ & 44.615 &  3.352 & 7.31\%\\
  LightGBM unit premium $\Pi^{\rm GBM}$ & 44.372 &  3.595 & 7.38\%\\
\hline
\end{tabular}}
\end{center}
\caption{MTPL Bregman divergences $\widehat{\E}_{\q}[L_\varphi(Y,\cdot)]$ and Bregman scores $\widehat{S}_\varphi(Y, \cdot)$ of the considered premium rules under the Poisson deviance loss choice on the test sample ${\cal T}$; units are shown in $10^{-2}$.}
\label{MTPL deviance losses}
\end{table}

Table \ref{MTPL deviance losses} reports the Poisson deviance losses on the test sample ${\cal T}$ of the (constant) global mean model $\widehat{\mu}_{\q}=\widehat{\mu}_{\q, {\cal L}}$ (estimated by the observed frequency on the learning sample ${\cal L}$), a strong generalized linear model (GLM) taken from W\"uthrich--Merz \cite{WM2023}, and the PIN and LightGBM models described above. This allows us to compute the sample versions of the Poisson Bregman scores \eqref{Bregman score decomposition}. From these sample Bregman scores we conclude that the PIN and the LightGBM are clearly stronger than the selected GLM (2.532), and we give preference to the LightGBM (3.595) over the PIN (3.352) unit premium rule in the terms of the sample Bregman scores of Table \ref{MTPL deviance losses}. Performing a paired non-parametric bootstrap (drawing with replacement from the test sample ${\cal T}$), we obtain a bootstrap standard deviation of 0.057 that quantifies the uncertainty in the difference $3.595-3.352=0.243$, thus, the Bregman score improvement is significant.

The main question that we study below is whether we can find more evidence for this preference and whether it can be understood on a more granular level. Remark that the Bregman score preference of Table \ref{MTPL deviance losses} is mainly based on discrimination.

The last column of Table \ref{MTPL deviance losses} shows that exposure-weighted average unit premiums over the entire portfolio (test sample ${\cal T}$), the first line giving the observed empirical frequency $\overline{Y}_{\q, {\cal T}}=7.35\%$ on the test sample ${\cal T}$. Thus, the last column of the table shows the sample version of \eqref{global unbiasedness definition}. The numbers indicate that the PIN slightly underestimates the observed frequency of 7.35\% and the LightGBM slightly overestimates this observed frequency. Under a Poisson assumption the magnitude of irreducible risk in this frequency estimate is 0.14\%, thus, the deviations from the global observed frequency do not seem to be of a systematic nature.

\subsection{Actual-vs-expected plots and lift charts}
We begin by studying the exposure-weighted distributions of the unit premiums 
$\Pi_i^{\rm PIN}$ and $\Pi_i^{\rm GBM}$ over the entire test sample $i=1,\ldots, n$.

\medskip

\begin{center}
\noindent\fbox{%
  \begin{minipage}{0.9\textwidth}
    \centering
    ~\\
    To receive expressive graphs, most of the figures are plotted on the log-scale.
\\~    
  \end{minipage}%
}
\end{center}

\medskip

\begin{figure}[htb!]
\begin{center}
\begin{minipage}[t]{0.45\textwidth}
\begin{center}
\includegraphics[width=\textwidth]{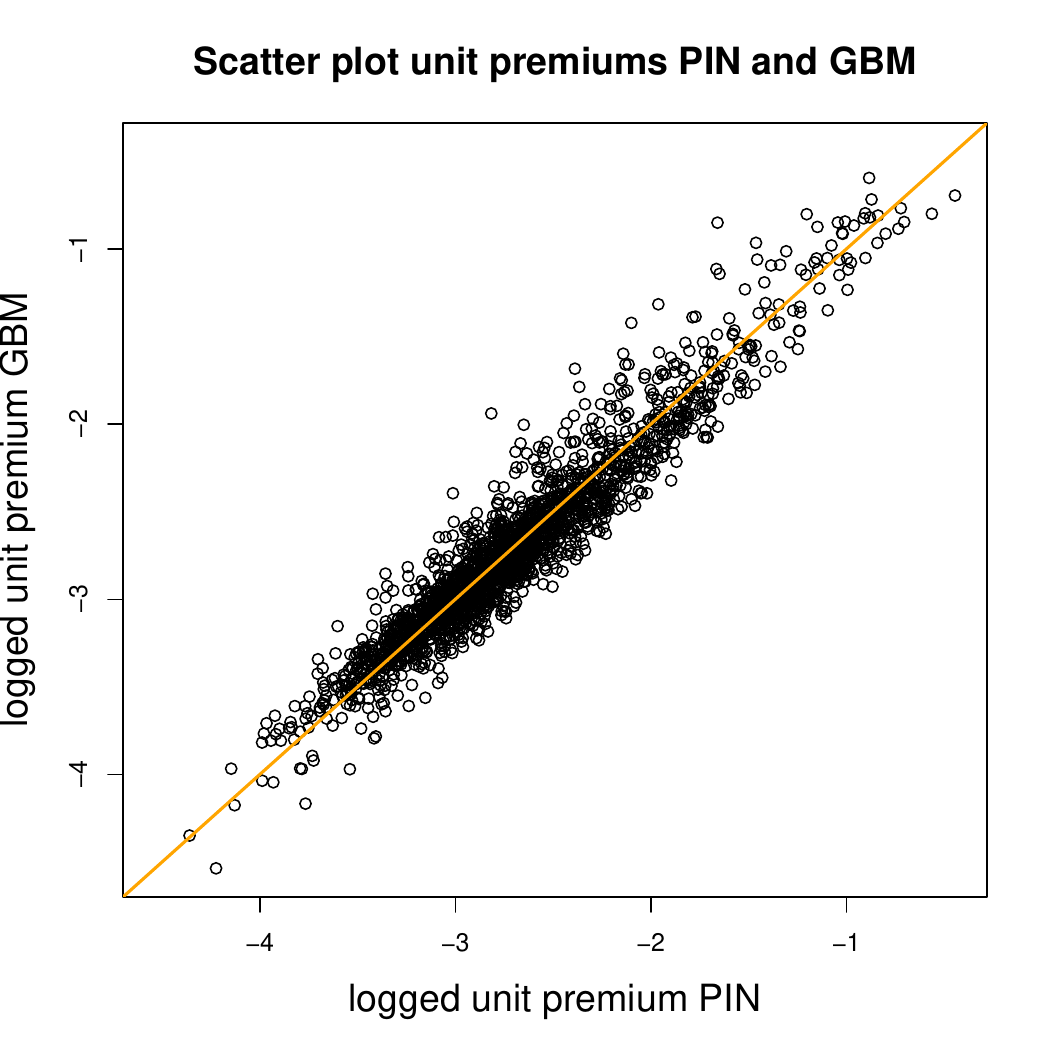}
\end{center}
\end{minipage}
\begin{minipage}[t]{0.45\textwidth}
\begin{center}
\includegraphics[width=\textwidth]{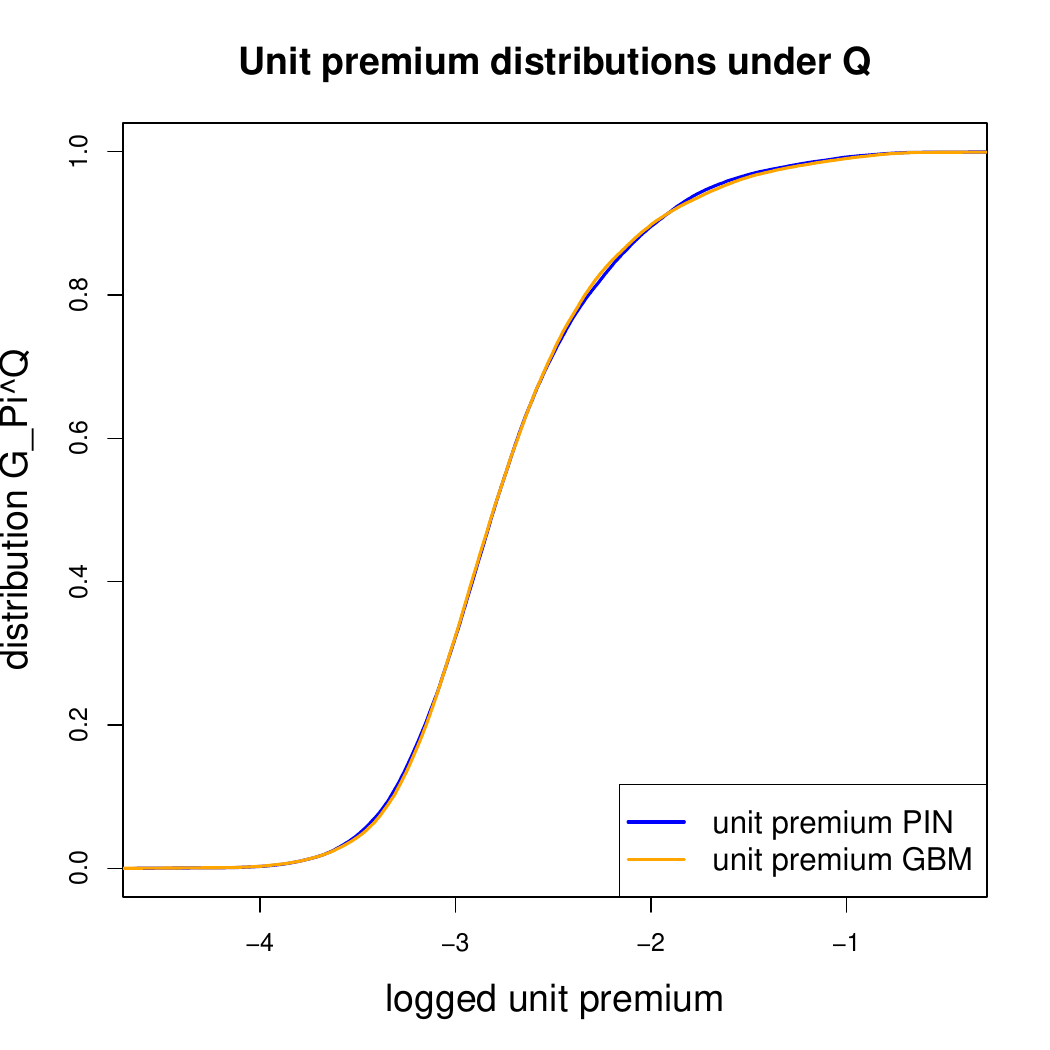}
\end{center}
\end{minipage}
\end{center}
\vspace{-.7cm}
\caption{MTPL unit premiums: (lhs) scatter plot of $\log(\Pi^{\rm GBM})$ against $\log(\Pi^{\rm PIN})$, and (rhs) exposure-weighted  sample distributions $\widehat{G}_{\Pi^{\rm PIN}}^{\q}$ and  $\widehat{G}_{\Pi^{\rm GBM}}^{\q}$.}
\label{MTPL G distributions}
\end{figure}

Figure \ref{MTPL G distributions} (lhs) shows the scatter plot of the LightGBM unit premiums $\Pi_i^{\rm GBM}$ against the PIN unit premiums $\Pi_i^{\rm PIN}$ over the test instances $i=1,\ldots, n$. This scatter plot fluctuates around the orange diagonal line which indicates a large similarity between the two unit premium rules. This is also verified by the
exposure-weighted  sample distributions $\widehat{G}_{\Pi^{\rm PIN}}^{\q}$ and  $\widehat{G}_{\Pi^{\rm GBM}}^{\q}$ on the right-hand side of that figure. The biggest differences are observed in the range of bigger predictions, however, from Figure \ref{MTPL G distributions} it seems that these differences are small.

\medskip

\begin{figure}[htb!]
\begin{center}
\begin{minipage}[t]{0.45\textwidth}
\begin{center}
\includegraphics[width=\textwidth]{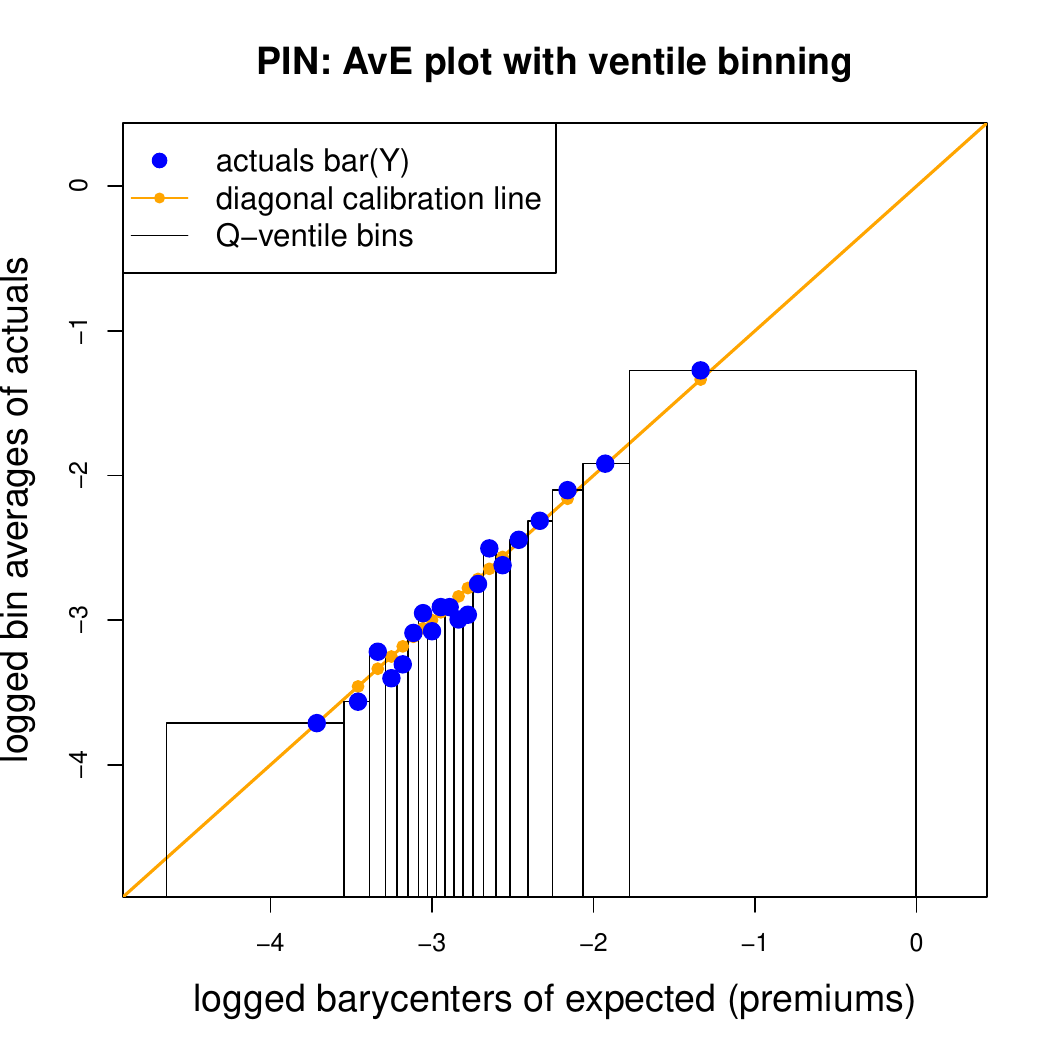}
\end{center}
\end{minipage}
\begin{minipage}[t]{0.45\textwidth}
\begin{center}
\includegraphics[width=\textwidth]{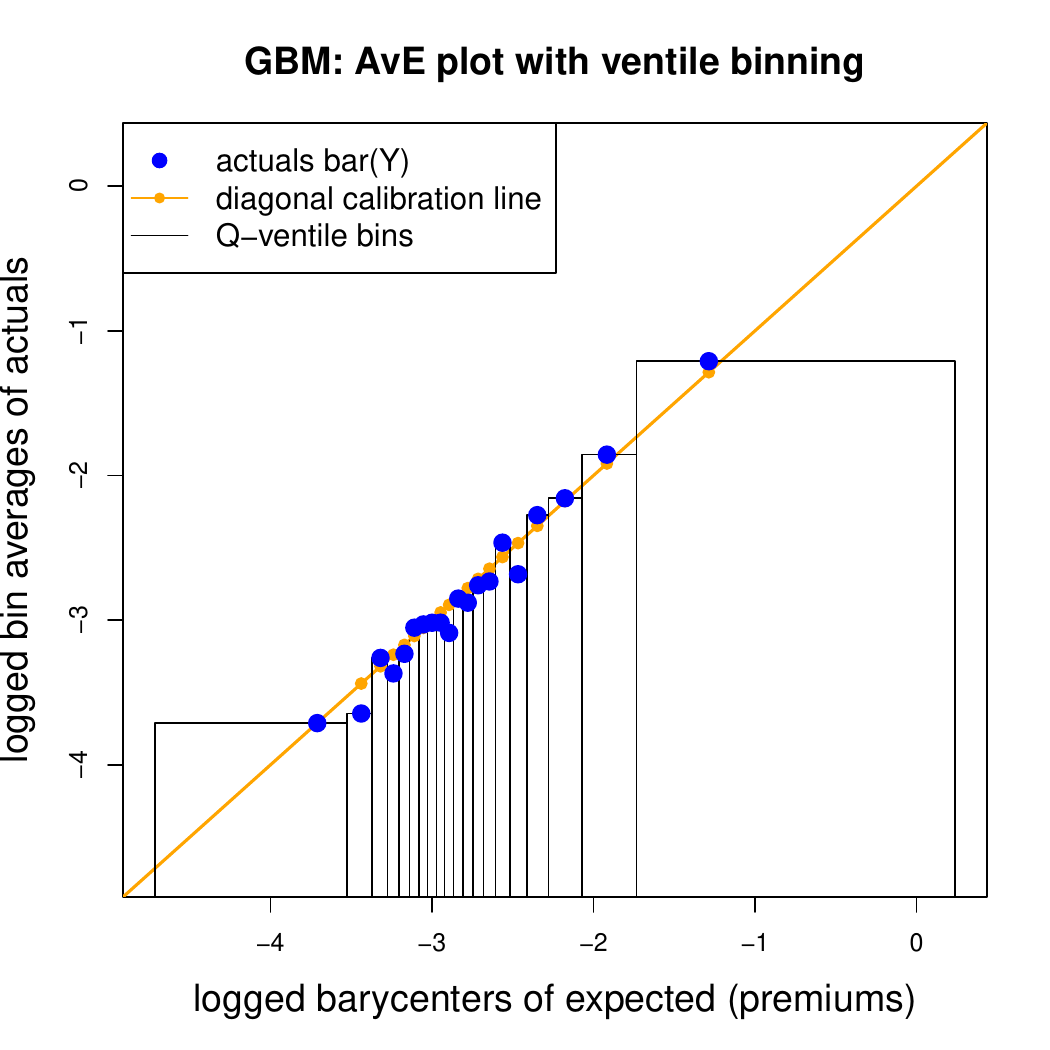}
\end{center}
\end{minipage}
\begin{minipage}[t]{0.45\textwidth}
\begin{center}
\includegraphics[width=\textwidth]{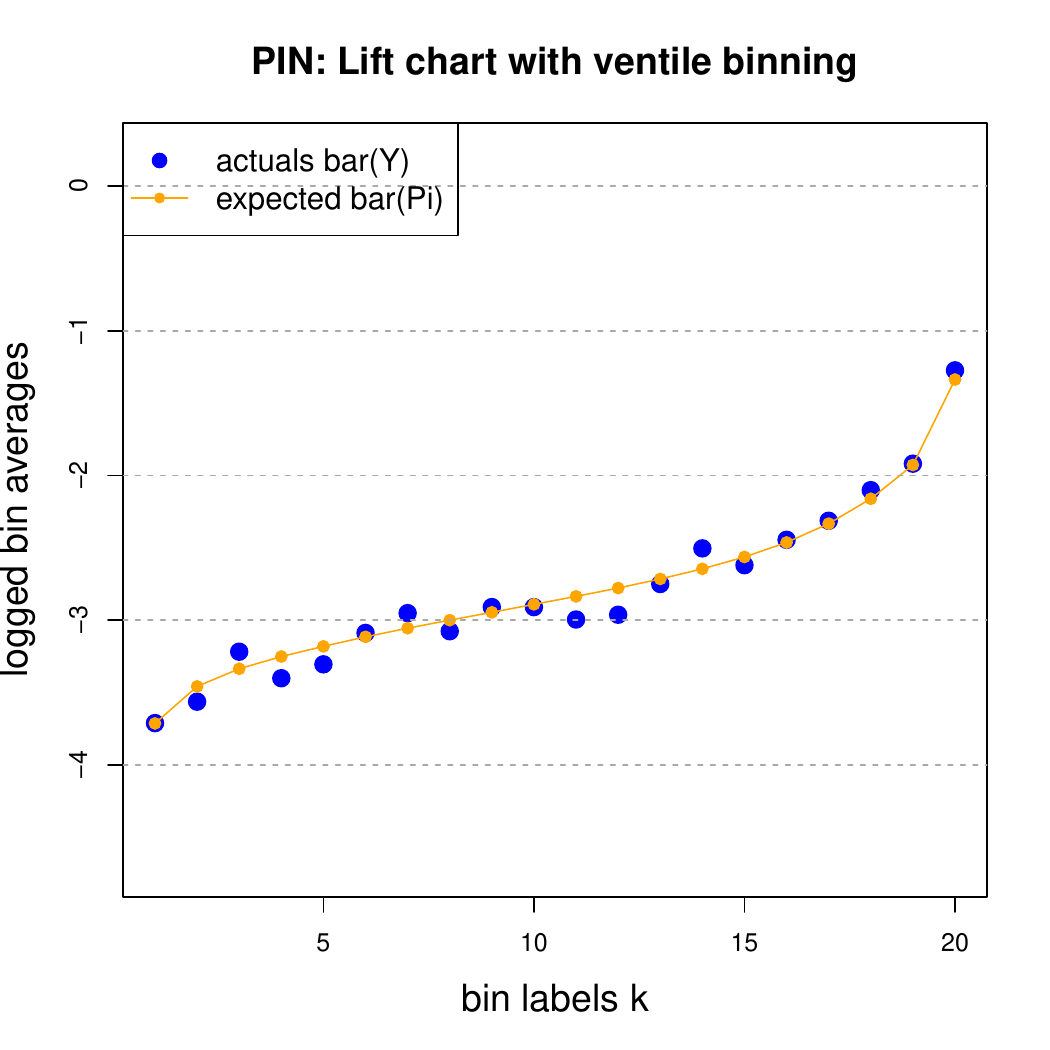}
\end{center}
\end{minipage}
\begin{minipage}[t]{0.45\textwidth}
\begin{center}
\includegraphics[width=\textwidth]{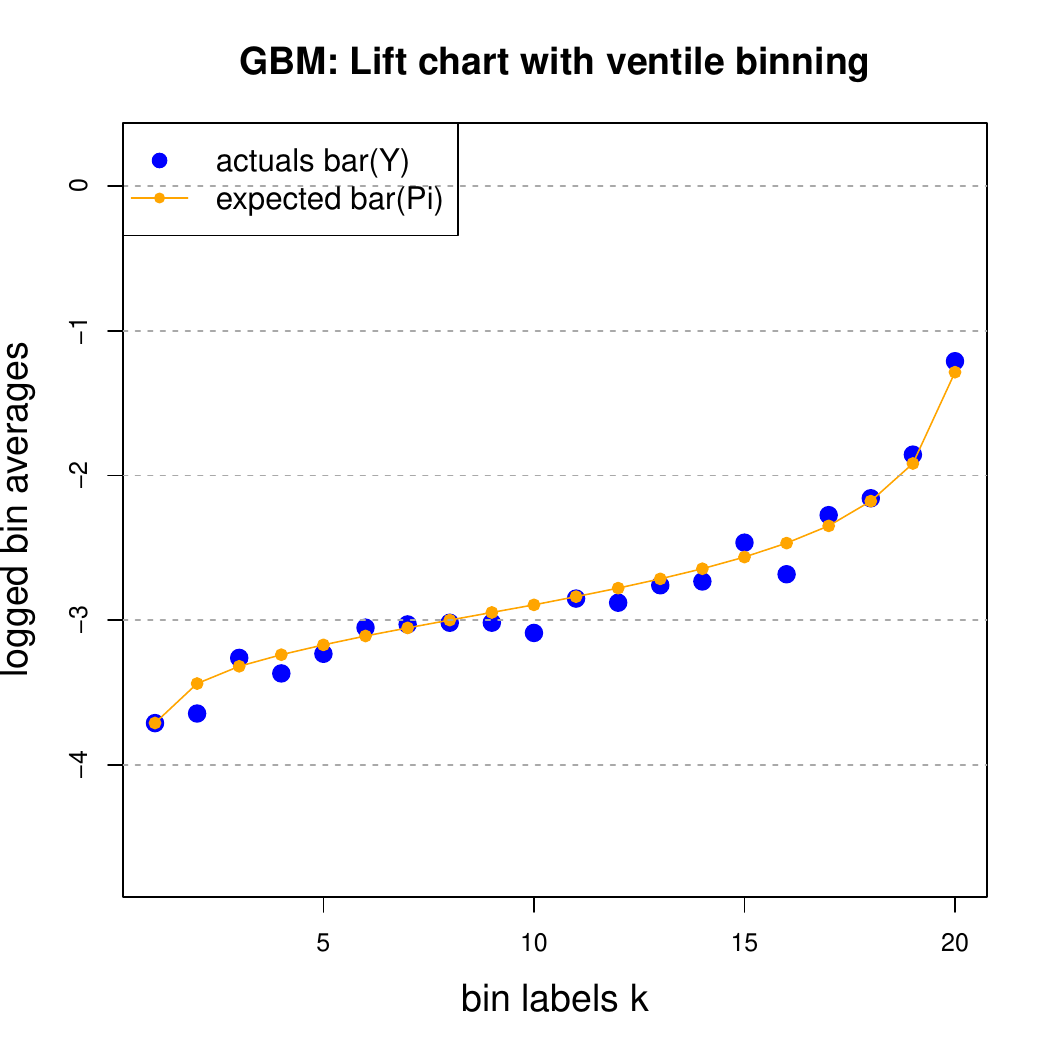}
\end{center}
\end{minipage}
\end{center}
\vspace{-.7cm}
\caption{(top) MTPL AvE plots with ventile binning $K=20$: (lhs) PIN unit premium $\Pi^{\rm PIN}$ and (rhs) LightGBM unit premium $\Pi^{\rm GBM}$;  (bottom)
MTPL lift charts with ventile binning $K=20$: (lhs) PIN unit premium and (rhs) GBM unit premium; the axes are on the log-scale.}
\label{MTPL lift chart plot}
\end{figure}

Figure \ref{MTPL lift chart plot} presents the AvE plots and the lift charts of the two unit premium rules $\Pi^{\rm PIN}$ and $\Pi^{\rm GBM}$. For the binning, we select ventile binning $K=20$. The lift charts are indicating the following properties:
\begin{itemize}
\item Discrimination: Both unit premium rules show a similar lift on the ventile binning scale, see lift charts in the lower panels; for a colored version see Figure \ref{MTPL resolution and miscalibration plots} in the appendix. 
\item Calibration: It seems that PIN shows a slightly better calibration picture because the GBM looks more systematically biased over the bins 9 to 14; see also Figure \ref{MTPL resolution and miscalibration plots} in the appendix.
\item Risk ranking: In both cases the actuals are non-monotone, may be a slight preference is given to the GBM version.
\end{itemize}
In Figure \ref{MTPL resolution and miscalibration plots} in the appendix, we present the same lift charts, but we add the coloring for the (estimated) miscalibration and resolution on this ventile binning granularity.

\medskip

\begin{figure}[htb!]
\begin{center}
\begin{minipage}[t]{0.45\textwidth}
\begin{center}
\includegraphics[width=\textwidth]{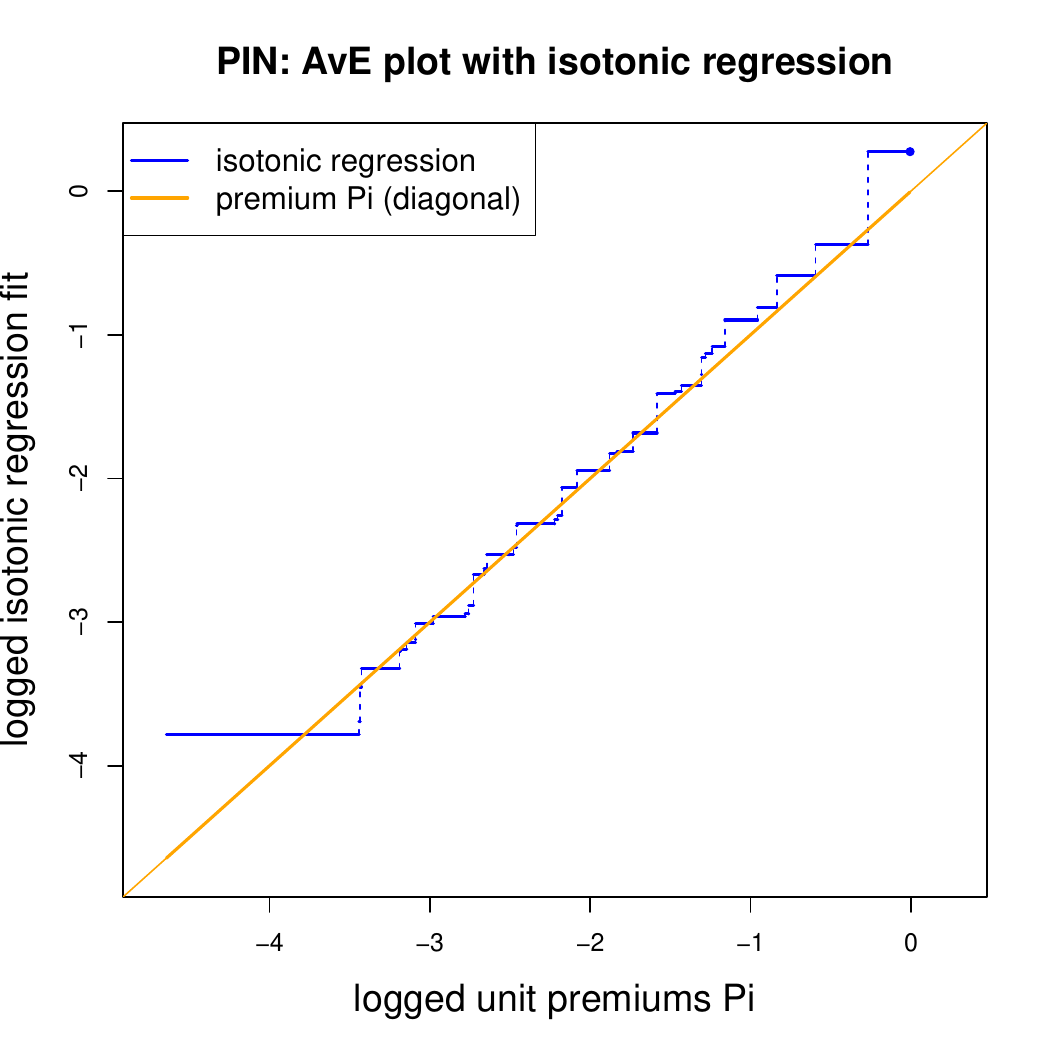}
\end{center}
\end{minipage}
\begin{minipage}[t]{0.45\textwidth}
\begin{center}
\includegraphics[width=\textwidth]{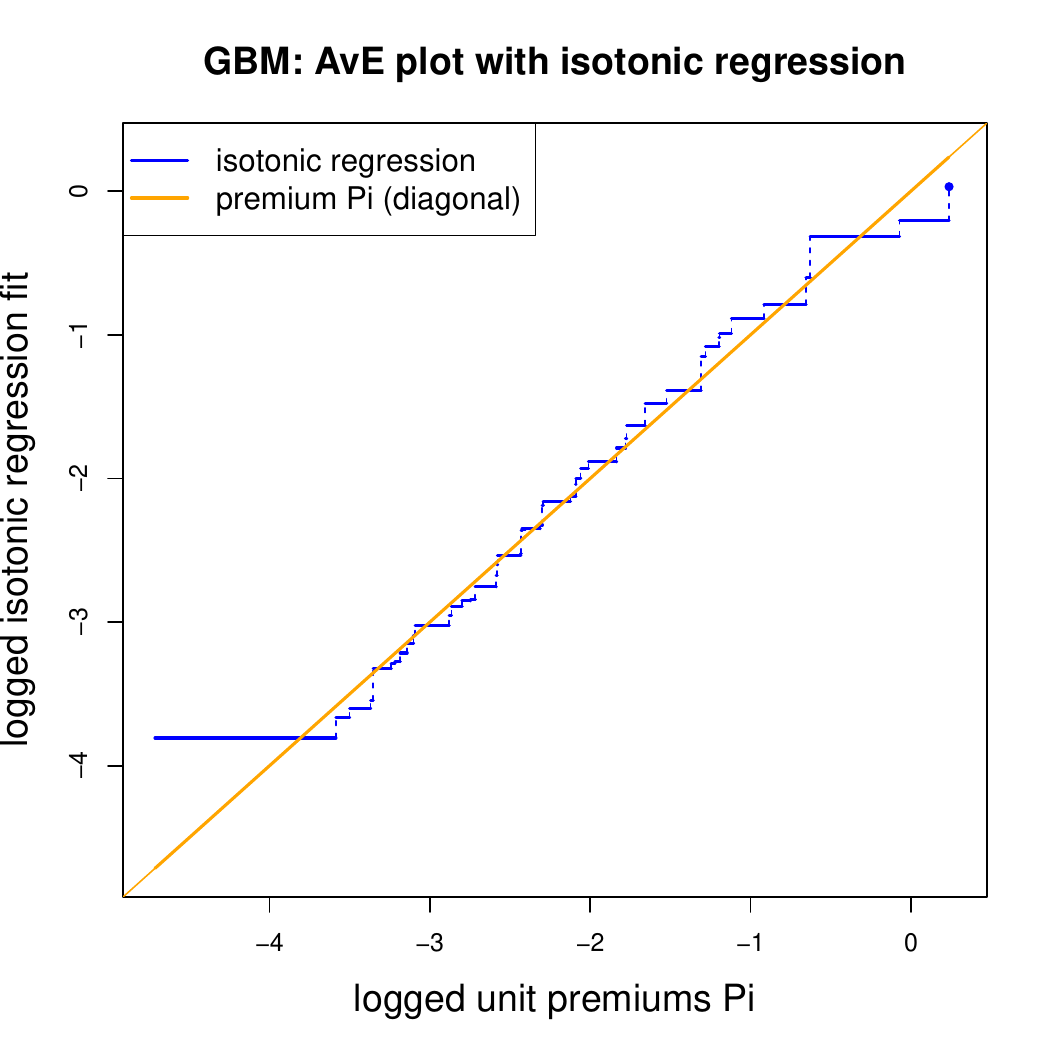}
\end{center}
\end{minipage}
\begin{minipage}[t]{0.45\textwidth}
\begin{center}
\includegraphics[width=\textwidth]{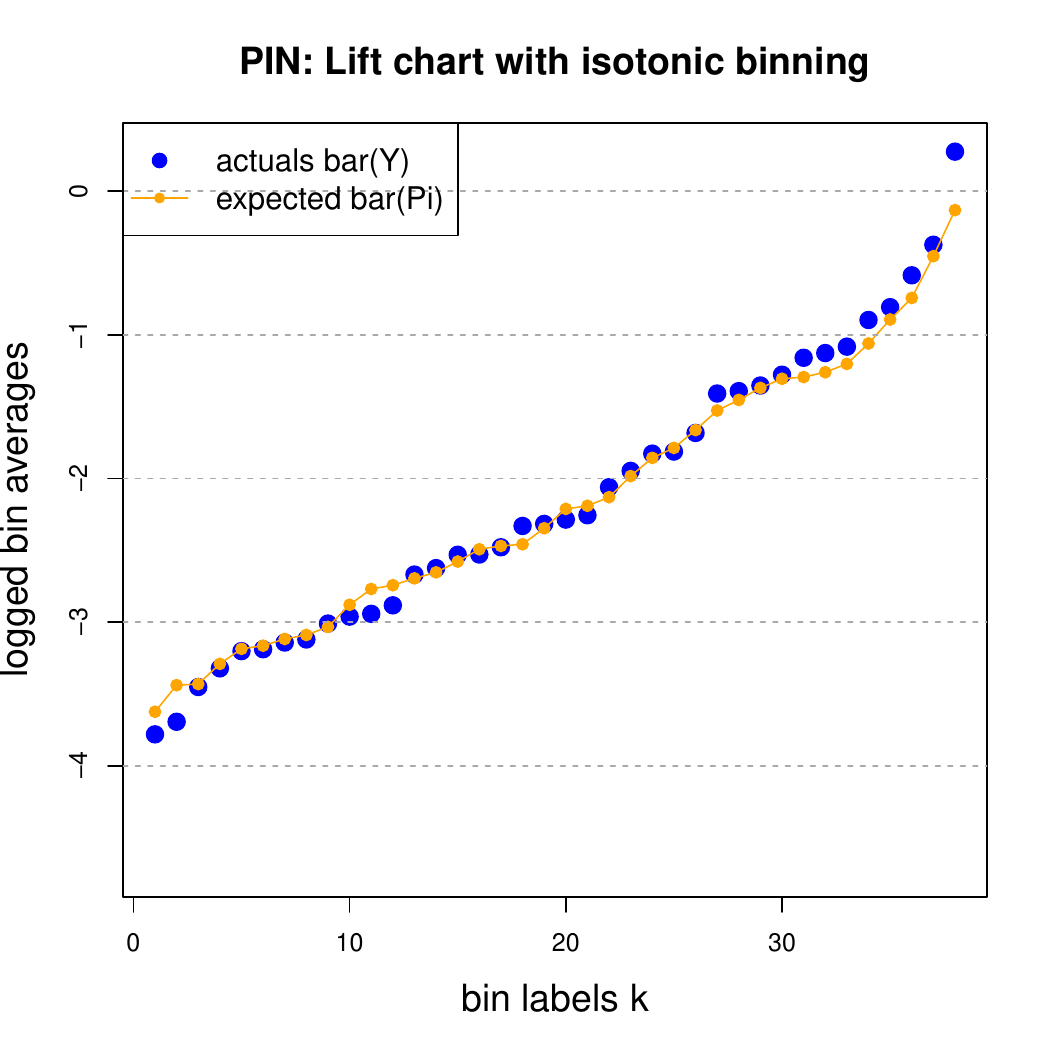}
\end{center}
\end{minipage}
\begin{minipage}[t]{0.45\textwidth}
\begin{center}
\includegraphics[width=\textwidth]{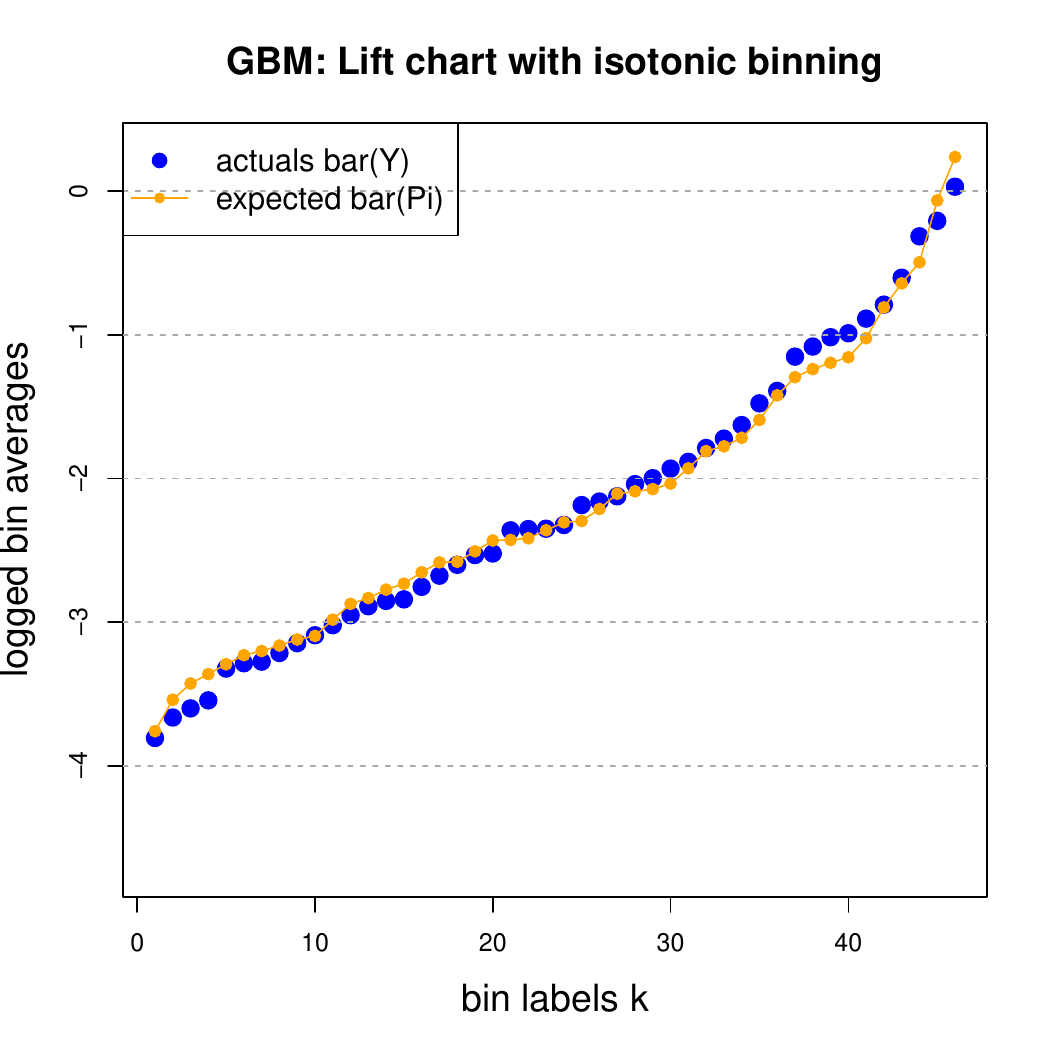}
\end{center}
\end{minipage}
\end{center}
\vspace{-.7cm}
\caption{(top) MTPL AvE plots with isotonic regression: (lhs) PIN unit premium with 38 bins and (rhs) GBM unit premium with 46 bins; (bottom) MTPL lift chart using isotonic regression binning: (lhs) PIN unit premium with 38 bins and (rhs) GBM unit premium with 46 bins.}
\label{MTPL isotonic regression lift chart}
\end{figure}

Figure \ref{MTPL isotonic regression lift chart} gives the AvE plots and the lift charts under isotonic regression binning \eqref{isotonic regression 0}. Recall that this uses the unit premiums as risk rankings (risk ordering). A higher number of bins may indicate a better risk ranking because wrongly ordered premiums typically lead to bigger bin sizes by correspondingly averaging over larger premium ranges; for the explanation of the functioning of the PAV algorithm to perform the isotonic regression, we refer to the appendix of W\"uthrich--Ziegel \cite{WZiegel}. PIN receives 38 isotonic regression bins and GBM 46 bins; these are not reliable statistics, but they give some indication about risk rankings. 

The lift of the two unit premium rules is still similar.
We merged the two smallest bins to ensure strict positivity of the isotonic recalibration solution. We did not merge the two largest bins, and it seems that the isotonic recalibration step slightly overfits for the largest value in the PIN version of Figure \ref{MTPL isotonic regression lift chart}. Calibration seems in both plots similar, maybe on the larger unit premiums the losses are slightly under estimated, and there might be a small preference for PIN in terms of calibration because the alignment of actuals and expected in the lift chart in the lower panel seems slightly better for PIN.

\medskip

\begin{figure}[htb!]
\begin{center}
\begin{minipage}[t]{0.45\textwidth}
\begin{center}
\includegraphics[width=\textwidth]{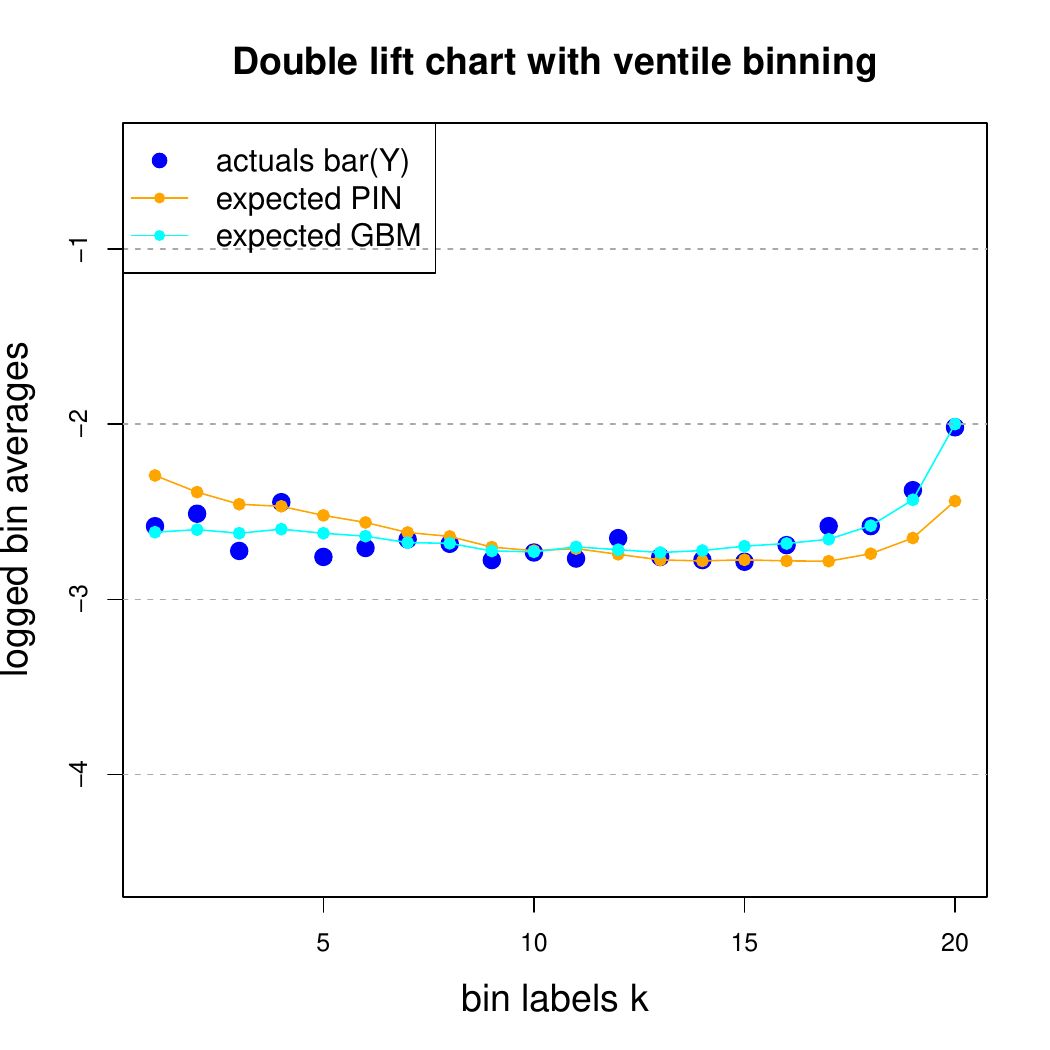}
\end{center}
\end{minipage}
\end{center}
\vspace{-.7cm}
\caption{MTPL double lift chart with ventile binning $K=20$ considering GBM unit premiums over PIN unit premiums for $\kappa$; the $y$-axis is on the log-scale.}
\label{MTPL double lift chart plot}
\end{figure}

Figure \ref{MTPL double lift chart plot} provides the double lift chart using ventile binning $K=20$. It considers the ratio of the LightGBM unit premium divided by the PIN unit premium. From the double lift chart we conclude that the LightGBM is clearly more accurate than the PIN on this granularity, this holds for both tails.  This suggests better discrimination (and/or risk ranking) by the GBM forecast, and this also supports the numerical results found in Table \ref{MTPL deviance losses}.

\subsection{Murphy diagram}
Next, we aim at understanding how the Poisson deviance losses of Table \ref{MTPL deviance losses} are composed across the unit premium ranges. We therefore start by studying the Murphy diagram \eqref{empirical Murphy graph} showing the sample elementary losses as a function of the threshold $\theta$. In particular, this does not apply the Poisson deviance loss weighting $\varphi''(\theta)=2/\theta$, see \eqref{Bregman loss vs elementary loss}.

\medskip

\begin{figure}[htb!]
\begin{center}
\begin{minipage}[t]{0.45\textwidth}
\begin{center}
\includegraphics[width=\textwidth]{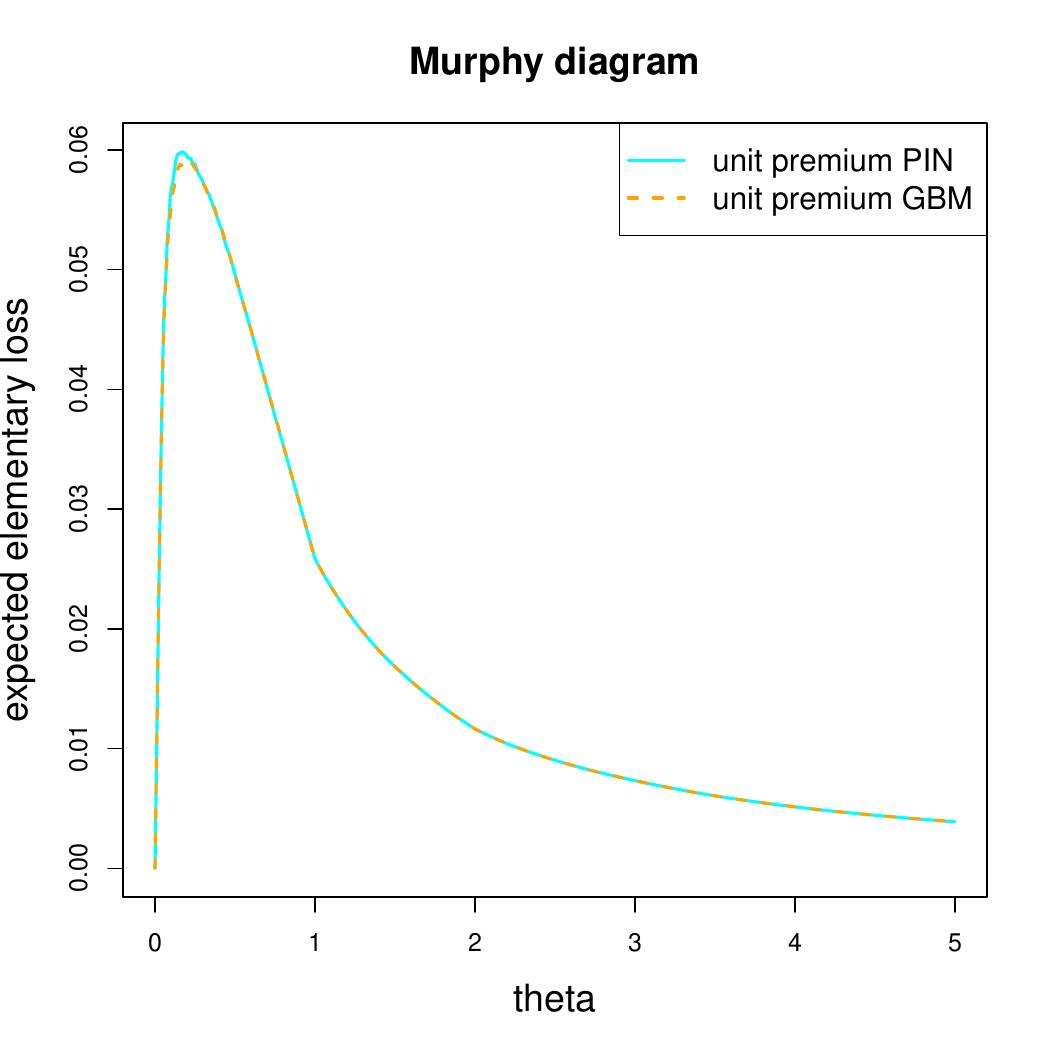}
\end{center}
\end{minipage}
\begin{minipage}[t]{0.45\textwidth}
\begin{center}
\includegraphics[width=\textwidth]{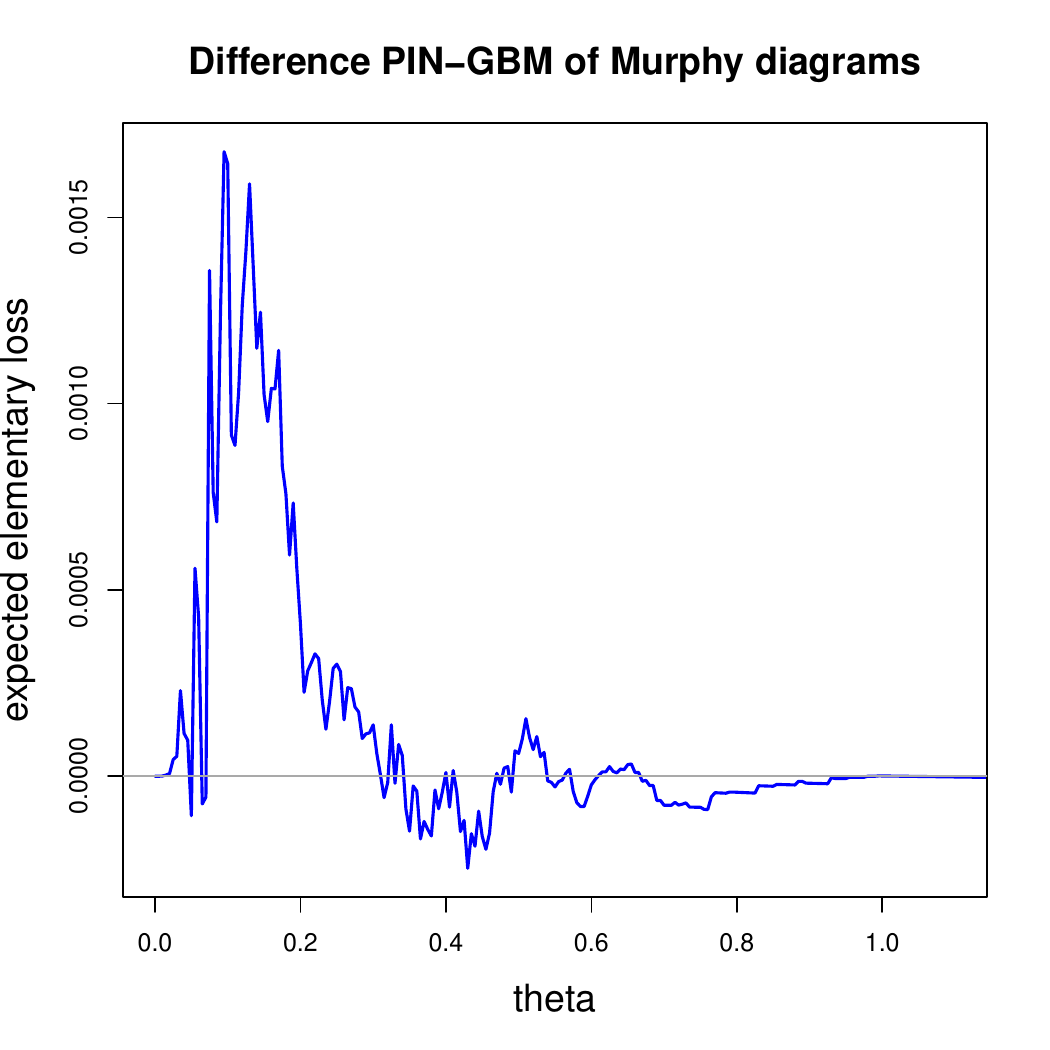}
\end{center}
\end{minipage}
\end{center}
\vspace{-.7cm}
\caption{MTPL sample elementary losses of the PIN and LightGBM unit premiums and their differences $\Delta_{\q}^{\rm Murphy}(\theta; \Pi^{\rm PIN}, \Pi^{\rm GBM})$ as a function of the threshold $\theta$.}
\label{MTPL elementary scores}
\end{figure}

Figure \ref{MTPL elementary scores} shows the Murphy diagram, the left-hand side gives the sample elementary losses \eqref{empirical Murphy graph} of the two premium rules as a function of $\theta$, and the right-hand side considers their differences $\Delta_{\q}^{\rm Murphy}(\theta; \Pi^{\rm PIN}, \Pi^{\rm GBM})$, subtracting the GBM version from the PIN version, see \eqref{Murphy Delta difference}. From the right-hand side we conclude that the LightGBM performs better than the PIN for the majority of $\theta$-values, confirming the better Bregman score of Table \ref{MTPL deviance losses} probably over almost the entire unit premium range. Remark that the Poisson deviance loss integrates over these sample elementary losses using the decreasing weight function $\varphi''(\theta)=2/\theta$, that is, for the Poisson Bregman score difference we consider
\begin{equation*}
\operatorname{PoissonScoreDifference}
= \int_0^\infty \Delta_{\q}^{\rm Murphy}(\theta; \Pi^{\rm PIN}, \Pi^{\rm GBM}) \,\frac{2}{\theta^p}\, \dd \theta
\qquad \qquad \text{ with $p=1$.}
\end{equation*}
In view of Figure \ref{MTPL elementary scores} (rhs), we expect the same preference order within the Patton family \eqref{Tweedie choices} for any $p\ge 0$, since these weightings are all monotonically decreasing.

\medskip

\begin{figure}[htb!]
\begin{center}
\begin{minipage}[t]{0.45\textwidth}
\begin{center}
\includegraphics[width=\textwidth]{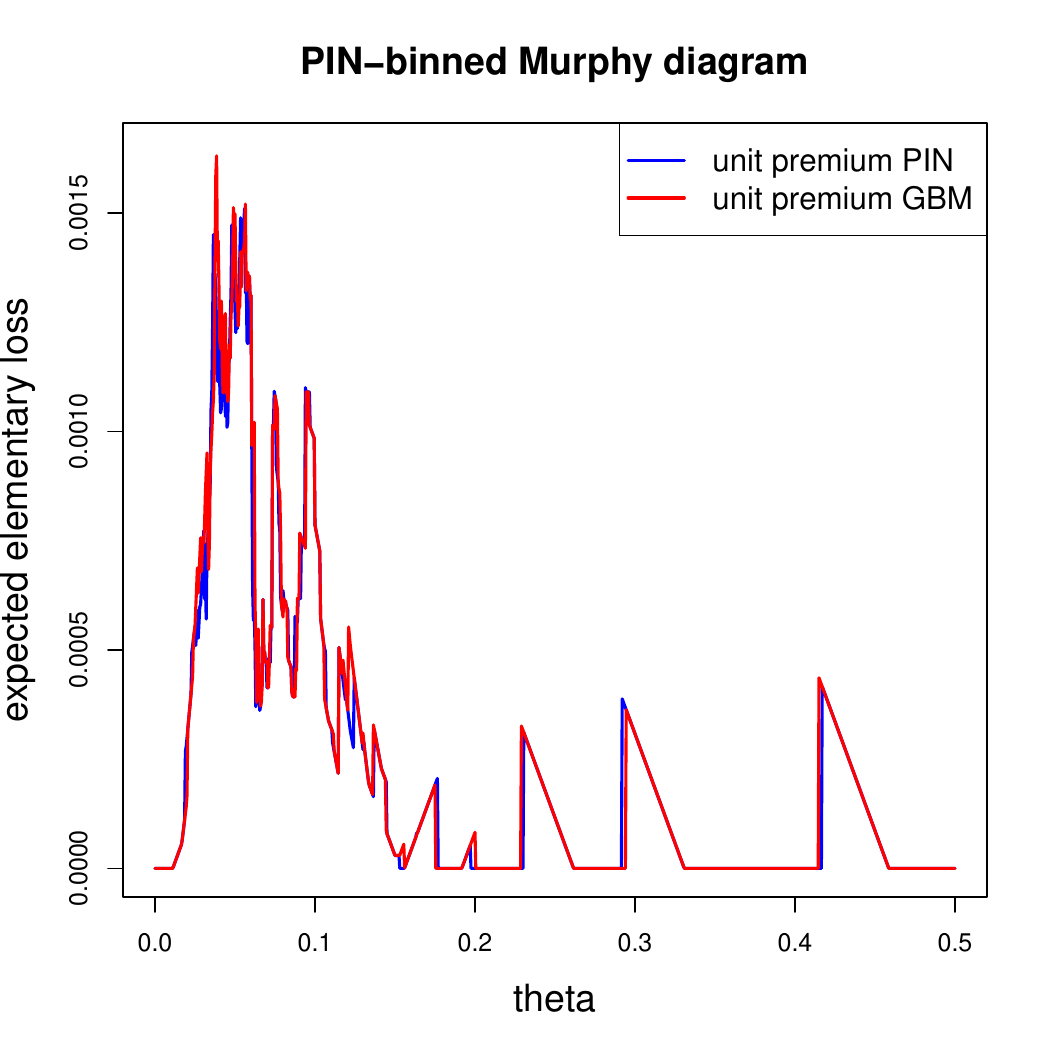}
\end{center}
\end{minipage}
\begin{minipage}[t]{0.45\textwidth}
\begin{center}
\includegraphics[width=\textwidth]{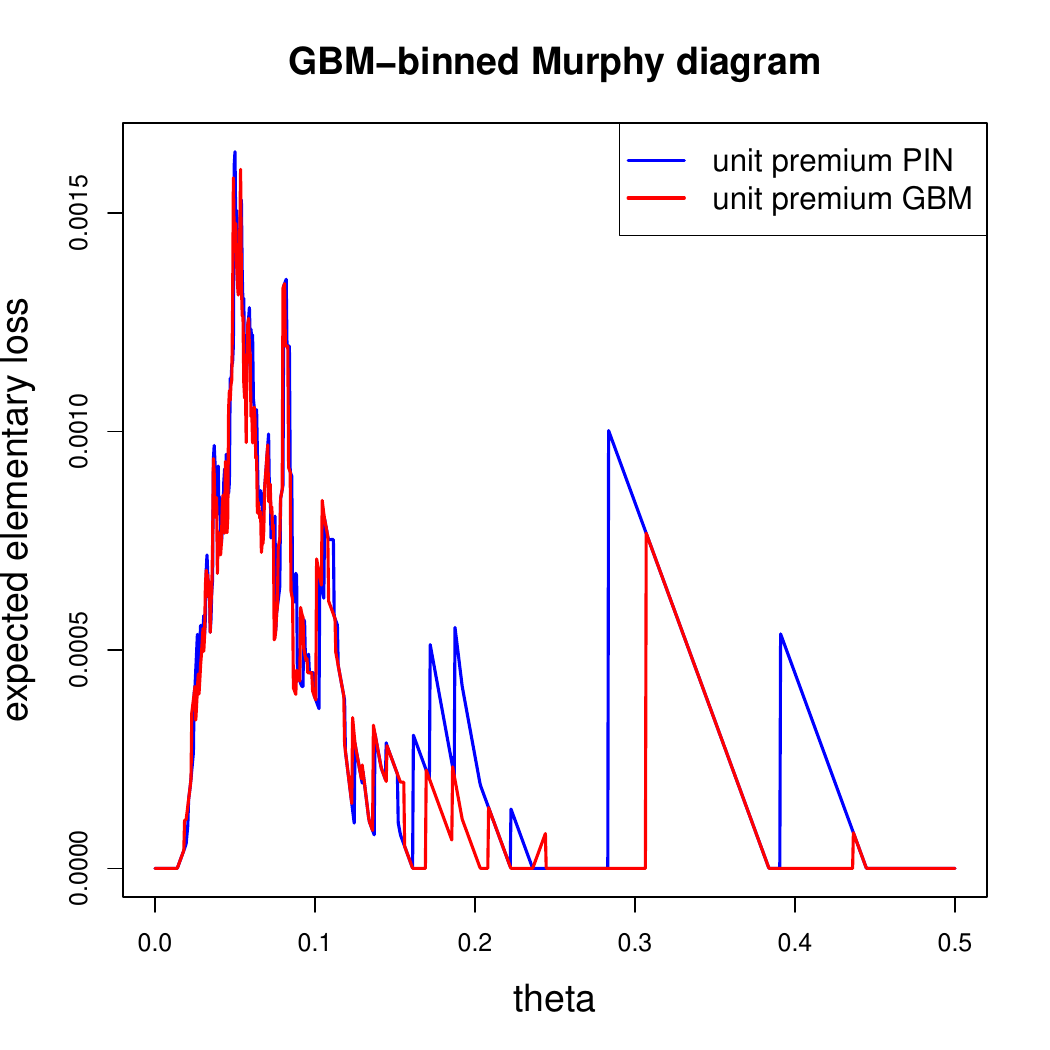}
\end{center}
\end{minipage}
\begin{minipage}[t]{0.45\textwidth}
\begin{center}
\includegraphics[width=\textwidth]{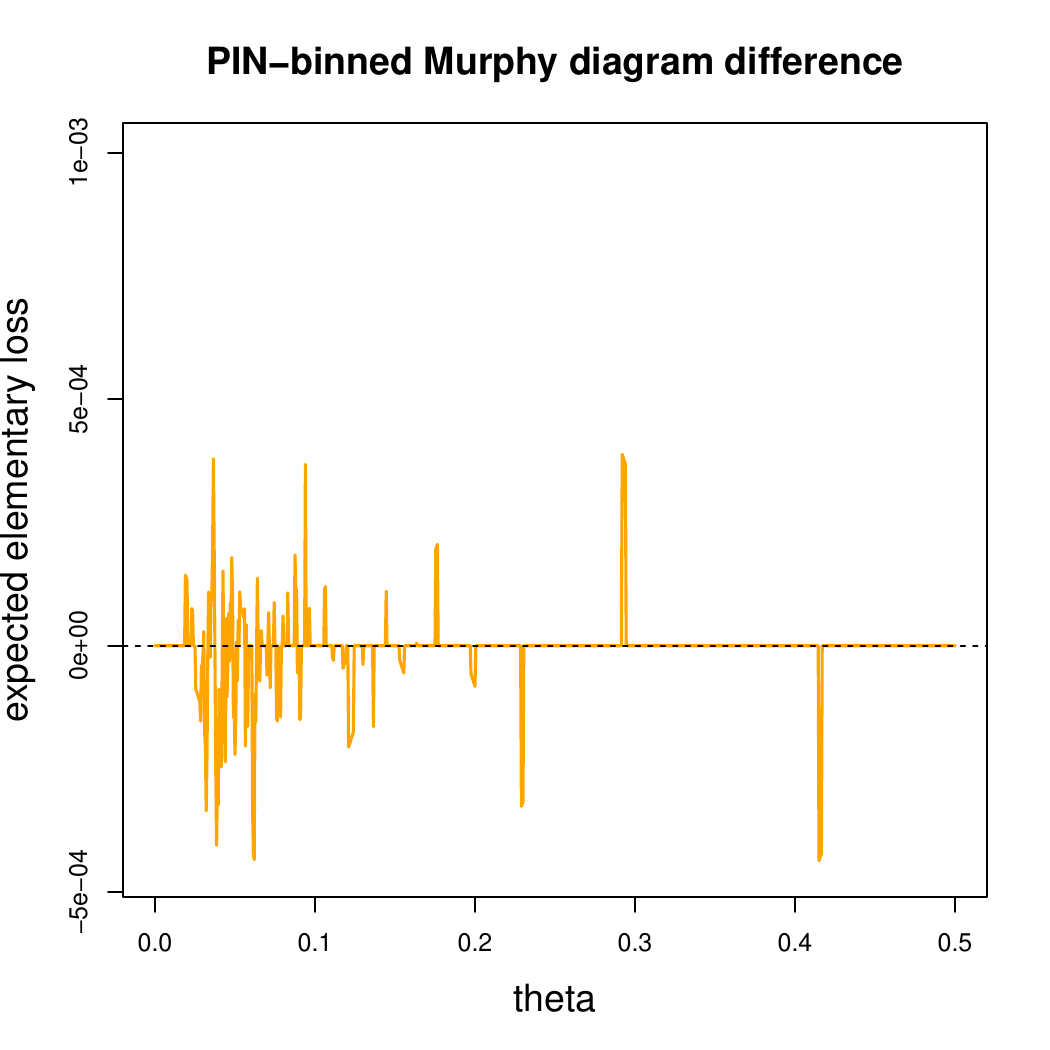}
\end{center}
\end{minipage}
\begin{minipage}[t]{0.45\textwidth}
\begin{center}
\includegraphics[width=\textwidth]{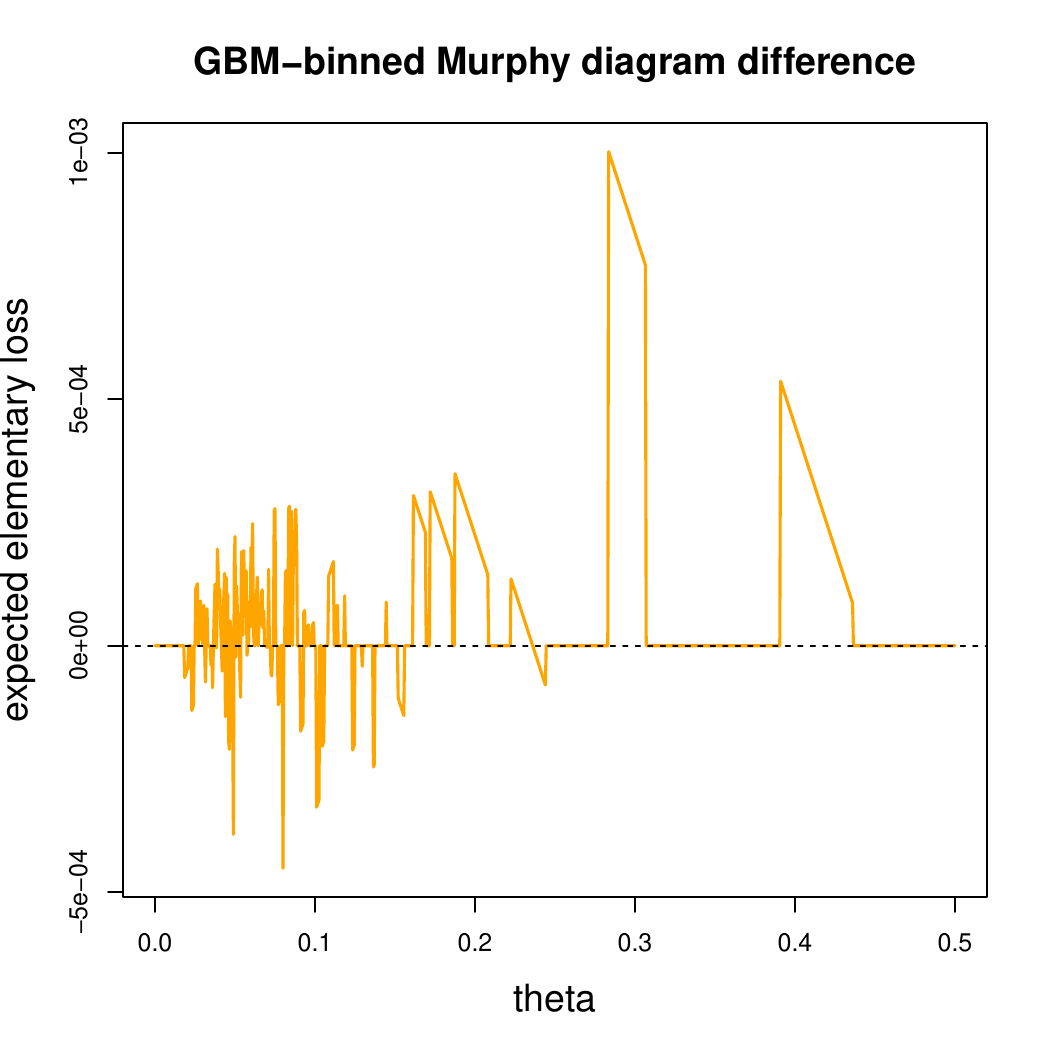}
\end{center}
\end{minipage}
\end{center}
\vspace{-.7cm}
\caption{MTPL asymmetrically binned Murphy diagrams of the PIN and GBM unit premium rules: (lhs) PIN percentile binning and (rhs) GBM percentile binning.}
\label{MTPL aggregated Murphy}
\end{figure}

Figure \ref{MTPL aggregated Murphy} shows the asymmetrically binned Murphy diagrams (top) and the resulting differences (bottom). We perform percentile binning $K=100$ w.r.t.~the PIN premium rule on the left-hand side and the GBM premium rule on the right-hand side. From the PIN binning plots we do not learn very much. But from the GBM binning plots on the right-hand side, it seems that the GBM unit premium has a superior performance especially in the larger range. Moreover, the differences in the two binning versions may indicate that one of the two premium rules provides a better risk ranking, because a different exposure-weighting binning is only impacted by different risk rankings.

\medskip

\begin{figure}[htb!]
\begin{center}
\begin{minipage}[t]{0.9\textwidth}
\begin{center}
\includegraphics[width=\textwidth]{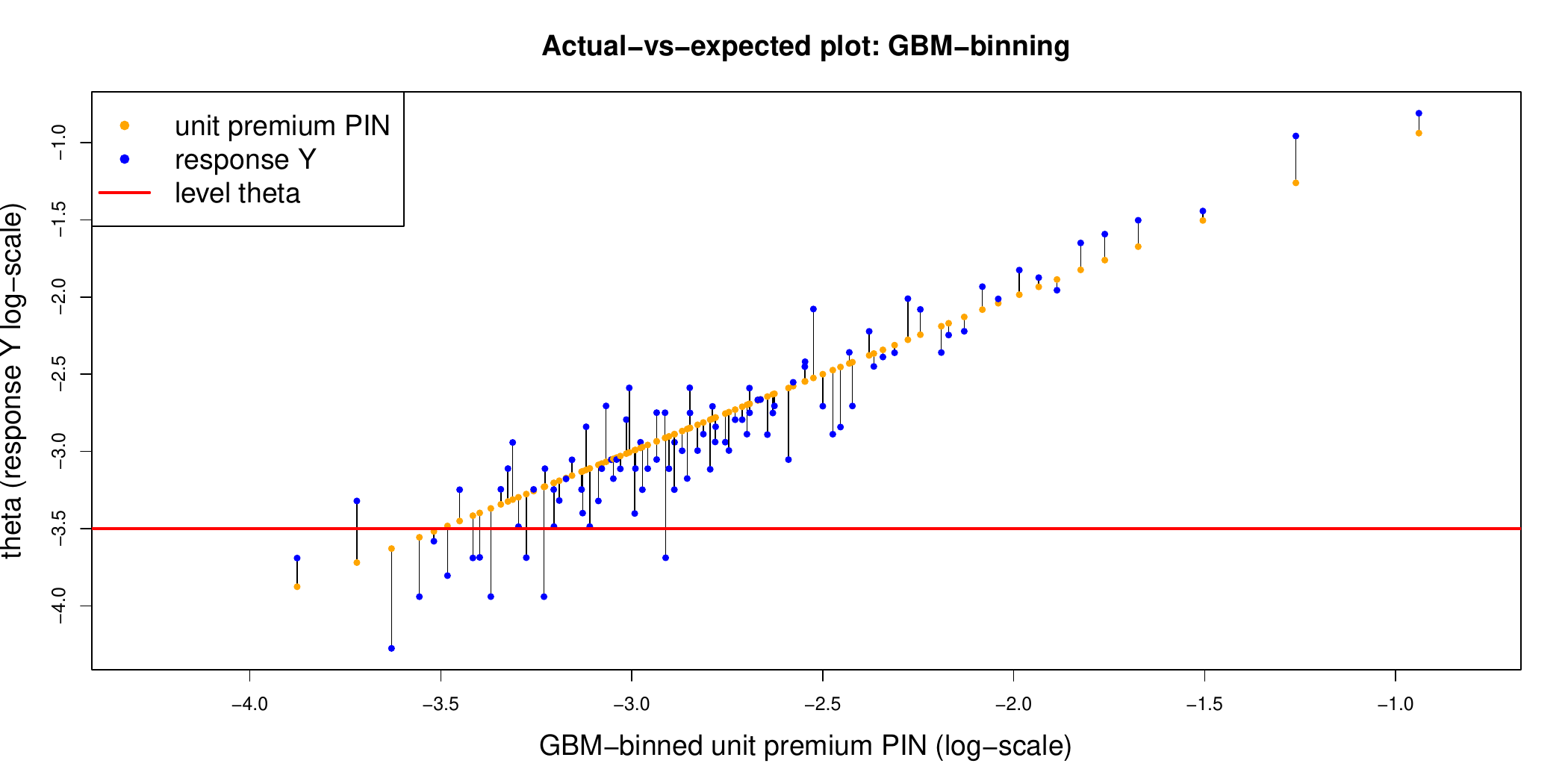}
\end{center}
\end{minipage}
\begin{minipage}[t]{0.9\textwidth}
\begin{center}
\includegraphics[width=\textwidth]{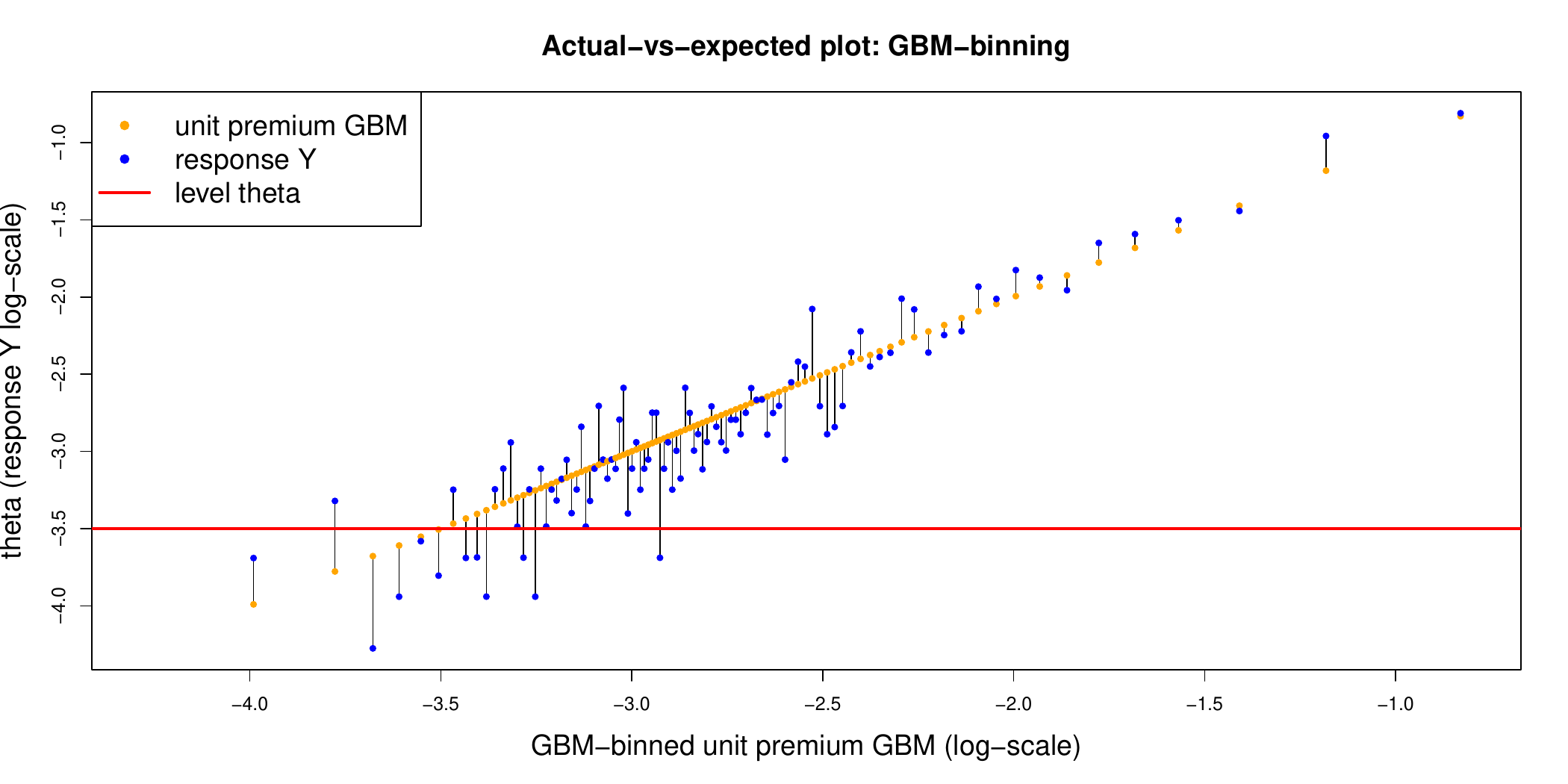}
\end{center}
\end{minipage}
\end{center}
\vspace{-.7cm}
\caption{MTPL AvE plots with GBM percentile binning: upper panel $\overline{Y}^{\rm GBM}_k$ vs.~$\overline{\Pi}^{\textcolor{red}{{\rm PIN}/{\rm GBM}}}_k$ and lower panel
$\overline{Y}^{\rm GBM}_k$ vs.~$\overline{\Pi}^{\textcolor{red}{{\rm GBM}}}_k$; since we work on the log-scale, this graph shows $|\log Y -\log \theta|$ instead of $|Y-\theta|$, but this does not qualitatively change the statements.}
\label{AvE Murphy 1 GBM Binning}
\end{figure}

For the next figure, we only consider the GBM binning version, because from the PIN binning version we cannot learn much.
Figure \ref{AvE Murphy 1 GBM Binning} shows that AvE plots with asymmetric GBM percentile binning $K=100$ on the log-scale for both axes. The figure shows that the vertical segments seem to be generally larger for the PIN version than the GBM one, giving preference to the GBM. The figure may also indicate that the GBM gives a more accurate risk ranking, this is especially true for the upper tail. However, since the binning is asymmetric, these conclusions may not be fully valid.

\subsection{Bregman score}
In the next step, we lift the elementary loss versions to the Bregman scores. For the Poisson deviance loss, we integrate over the elementary losses with weighting function $\varphi''(\theta)=2/\theta$, see \eqref{Bregman loss vs elementary loss}. In view of Figure \ref{MTPL elementary scores}, it is clear that preference is given to the LightGBM unit premiums, because the biggest differences in that figure stem from small values of $\theta$. This is confirmed by Table \ref{MTPL deviance losses}.

\medskip

\begin{figure}[htb!]
\begin{center}
\begin{minipage}[t]{0.45\textwidth}
\begin{center}
\includegraphics[width=\textwidth]{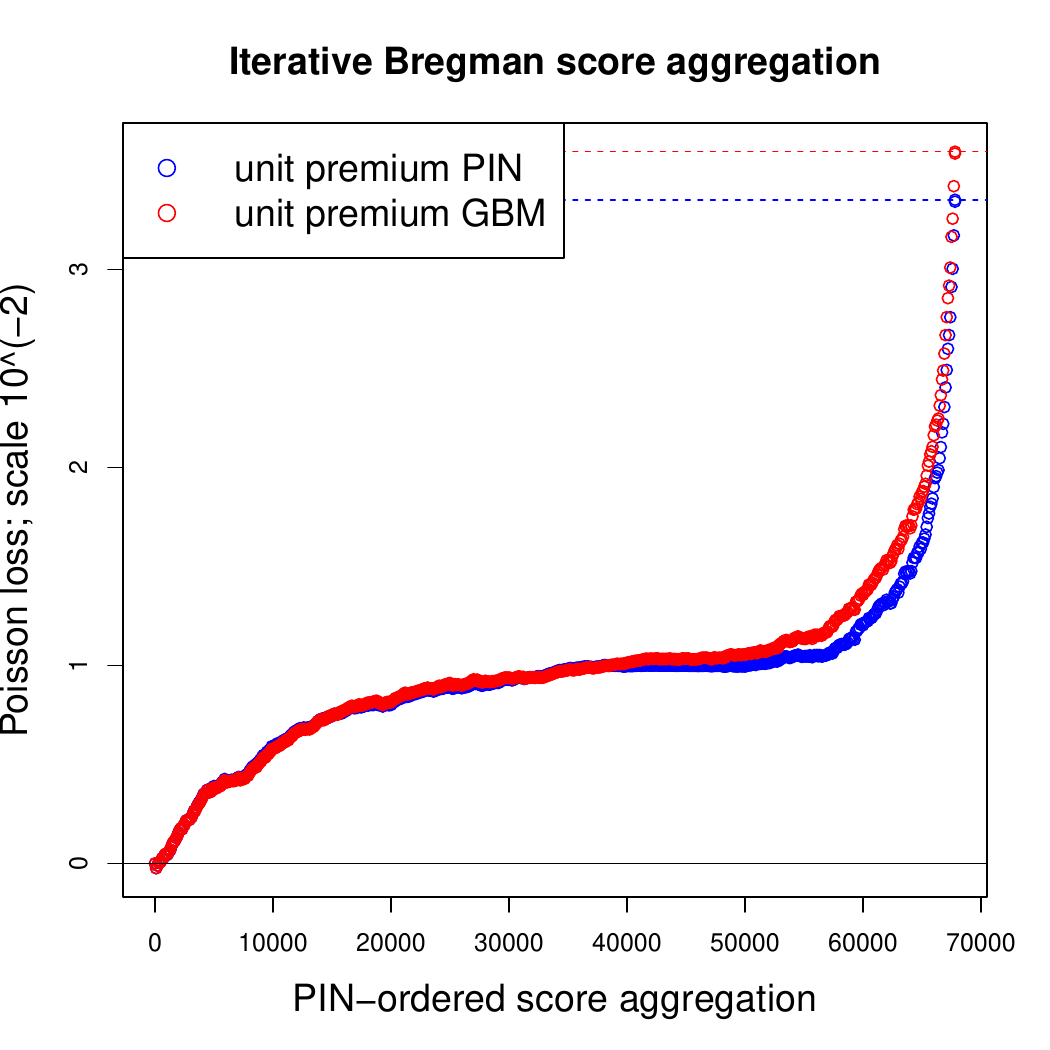}
\end{center}
\end{minipage}
\begin{minipage}[t]{0.45\textwidth}
\begin{center}
\includegraphics[width=\textwidth]{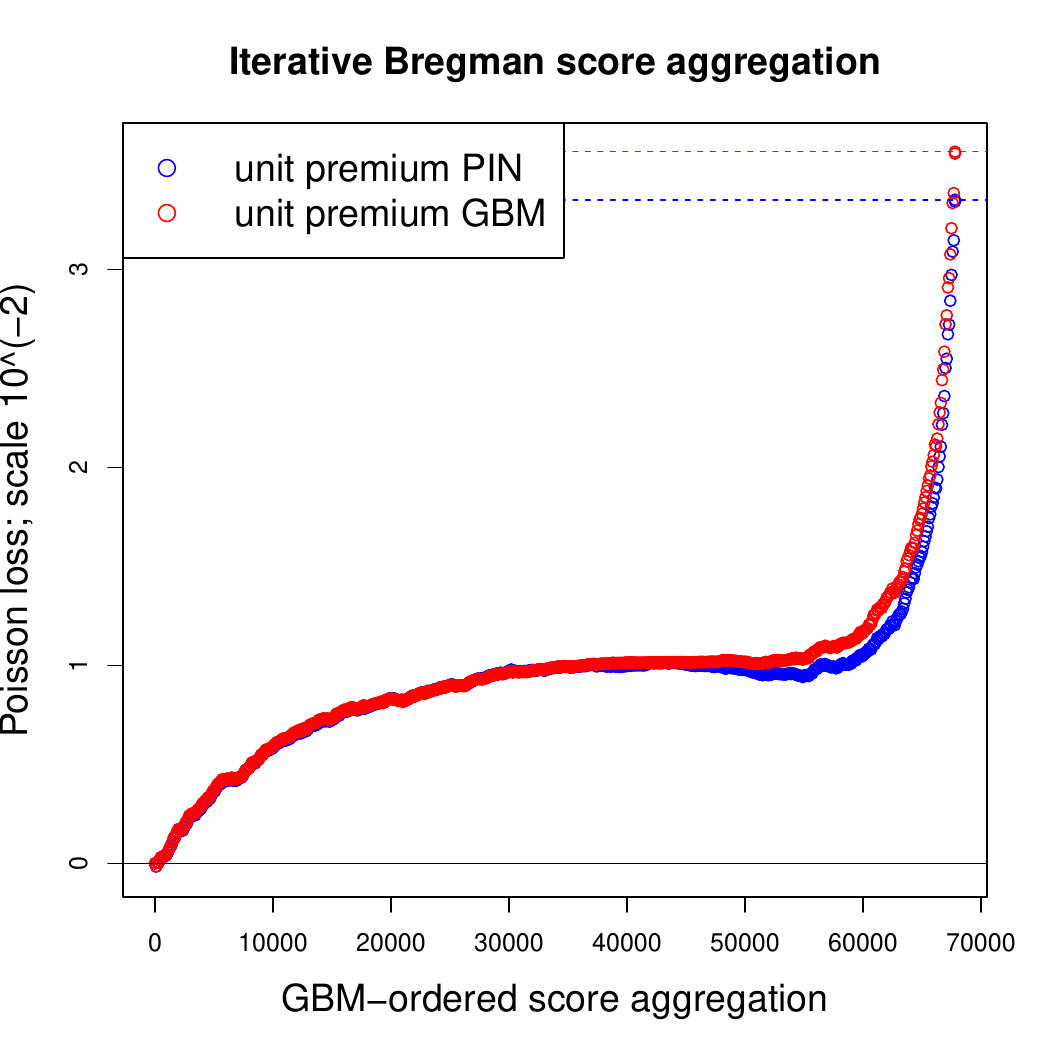}
\end{center}
\end{minipage}
\begin{minipage}[t]{0.45\textwidth}
\begin{center}
\includegraphics[width=\textwidth]{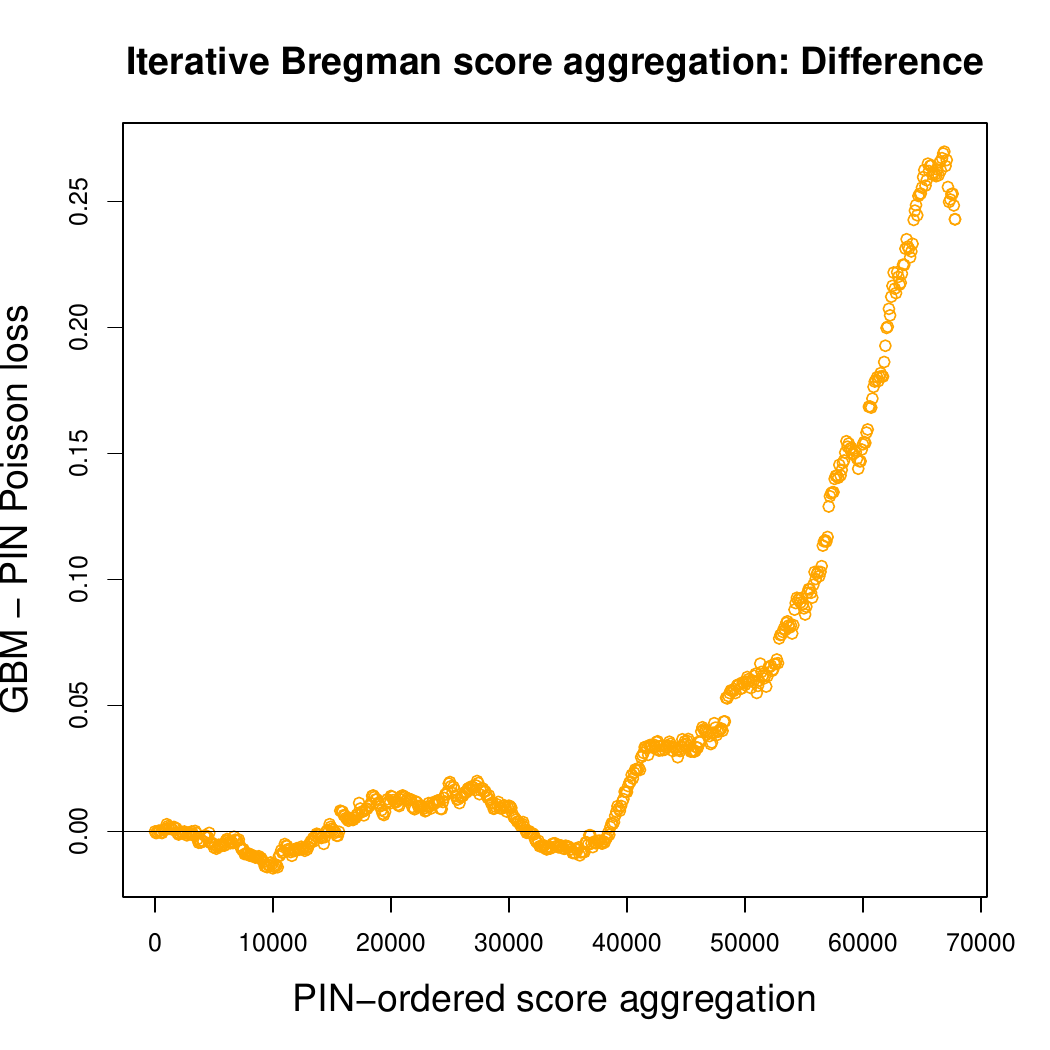}
\end{center}
\end{minipage}
\begin{minipage}[t]{0.45\textwidth}
\begin{center}
\includegraphics[width=\textwidth]{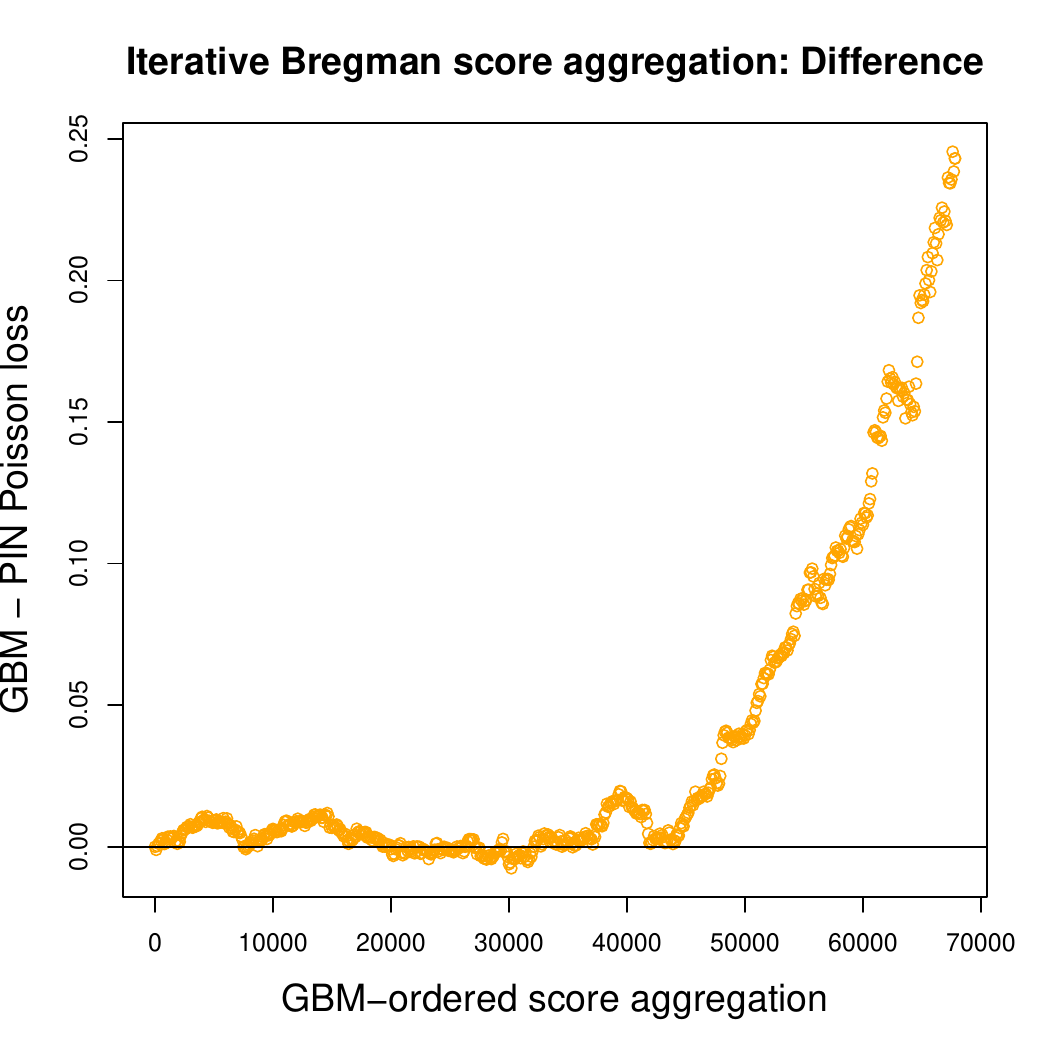}
\end{center}
\end{minipage}
\end{center}
\vspace{-.7cm}
\caption{MTPL paired iterative Bregman score aggregation: (lhs) PIN-ordered aggregation and (rhs) GBM-ordered aggregation. The upper panel shows the score aggregation and the lower panel the differences. The Poisson Bregman scores are 3.352 for PIN (in blue) and 3.595 for GBM (in red), the difference is 0.243 (0.057). }
\label{MTPL ordered Bregman score aggregation}
\end{figure}

To better understand the Poisson Bregman scores, we study the paired iterative Bregman score aggregation from small to large unit premiums \eqref{iterative Bregman 1}-\eqref{iterative Bregman 2}. The left panel of Figure 
\ref{MTPL ordered Bregman score aggregation} considers the PIN ordering and the right panel the GBM ordering, i.e., the two panels aggregate to the same totals, but they differ in the order of aggregation. From these graphs we conclude that the Poisson Bregman scores are very similar for the half of the insurance policies that have lower risks (smaller unit premiums), but the GBM outperforms the PIN especially on the third of the insurance policies with the highest unit premiums. That is, the difference of the Bregman scores of 0.243 in Table \ref{MTPL deviance losses} (3.352 vs.~3.595) can be explained by these biggest unit premium policies.

\subsection{Murphy's decomposition}
In the next step of our analysis we compute Murphy's decomposition that separates the Bregman score into the resolution term and the miscalibration term, see \eqref{Bregman score decomposition}. As mentioned above, the main difficulty in this analysis concerns the recalibration step to receive the calibrated versions of $\Pi^{\rm PIN}$ and 
$\Pi^{\rm GBM}$. We perform this step by isotonic regression, which is precisely the step illustrated in the upper panel of Figure \ref{MTPL isotonic regression lift chart}. Using these isotonically recalibrated unit premiums, we compute the two empirical versions \eqref{MCB1}-\eqref{MCB2} of Murphy's decomposition.

\begin{table}[htb!]
\begin{center}
{\small
\begin{tabular}{|l|rrr|}
\hline
 & Bregman & MCB  & RES\\
premium rules & score & 1 \& 2 & 1 \& 2 \\
\hline\hline
  PIN premium & 3.352 & 0.063 & 3.415\\
  & 3.352 & 0.180 & 3.532\\ \hline
  LightGBM premium &  3.595 & 0.059&3.654\\
   &  3.595 & 0.168&3.763\\
\hline
\end{tabular}}
\end{center}
\caption{Murphy's decomposition \eqref{Bregman score decomposition} using the two empirical versions
\eqref{MCB1}-\eqref{MCB2}; units are shown in $10^{-2}$.}
\label{MTPL deviance losses RES}
\end{table}

Table \ref{MTPL deviance losses RES} shows the results of Murphy's decomposition using the two versions 
\eqref{MCB1}-\eqref{MCB2} under isotonic recalibration steps. Similar to the stylized example, version 1 seems to underestimate miscalibration and version 2 seems to overestimate miscalibration. The first is attributed to missing accuracy in the isotonic regression step and the latter to in-sample overfitting (this is our interpretation but there is no proof in absence of the ground truth). Overall the miscalibration terms seem rather similar between PIN and GBM, with very slight preference for GBM, but this interpretation may not be valid because the isotonic regression uses the premium itself for the recalibration step (i.e., contamination resulting from a wrong risk ranking cannot be controlled).

These similar magnitudes in miscalibration are then mapped to the resolution terms, and the absolute difference between the GBM and the PIN unit premiums is preserved for the resolution validation giving preference to the LightGBM forecast model.

\subsection{Risk ranking}
The final step is to analyze the risk rankings provided by the unit premiums $\Pi^{\rm PIN}$ and 
$\Pi^{\rm GBM}$. For this we show the CAP and the Leimkuhler curves and we compute the Gini scores
\eqref{def Gini 1}.

\medskip

\begin{figure}[htb!]
\begin{center}
\begin{minipage}[t]{0.45\textwidth}
\begin{center}
\includegraphics[width=\textwidth]{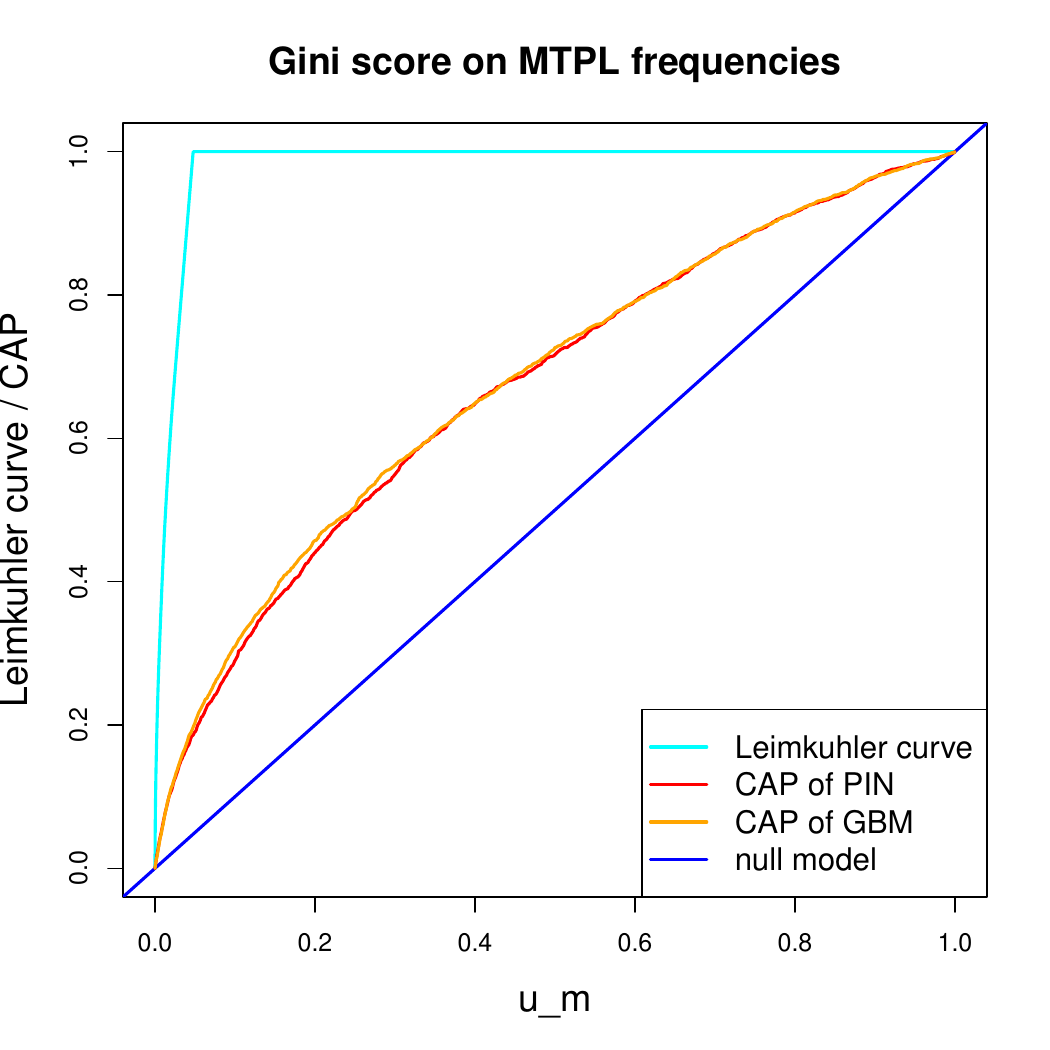}
\end{center}
\end{minipage}
\begin{minipage}[t]{0.45\textwidth}
\begin{center}
\includegraphics[width=\textwidth]{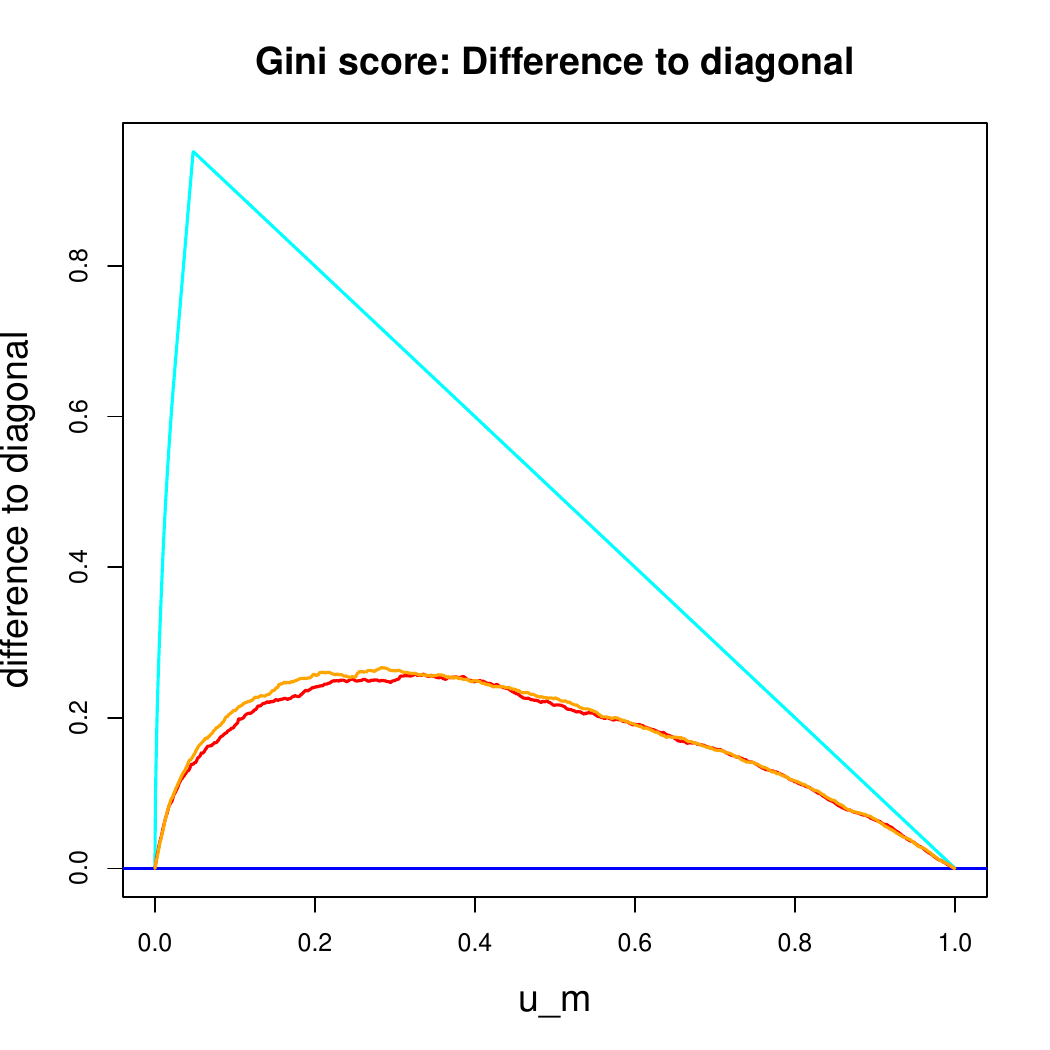}
\end{center}
\end{minipage}
\end{center}
\vspace{-.7cm}
\caption{MTPL Gini plots: (lhs) CAPs of PIN unit premiums $(\Pi^{\rm PIN}_i)_{i=1}^n$ and GBM unit premiums $((\Pi^{\rm GBM}_i)_{i=1}^n$ and Leimkuhler curve of $(Y_i)_{i=1}^n$; and (rhs) with subtracted diagonal for better visualization.}
\label{figure Gini plots MTPL}
\end{figure}

Figure \ref{figure Gini plots MTPL} shows the CAPs of the two unit premium rules $(\Pi^{\rm PIN}_i)_{i=1}^n$ and $(\Pi^{\rm GBM}_i)_{i=1}^n$. From the plot on the right-hand side, it can be seen that the GBM risk ordering dominates the PIN one, and this is also confirmed by the Gini scores in Table \ref{Gini score table MTPL}. That is, the GBM provides the more accurate risk ranking relative to the observed responses $(Y_i)_{i=1}^n$. To verify statistical significance of the Gini score difference of $0.3668-0.3568=0.0100$ we performed a paired non-parametric bootstrap (drawing with replacement on the test sample), and the bootstrap standard deviation is 0.0033.

\begin{table}[htb!]
\begin{center}
{\small
\begin{tabular}{|l|r|}
\hline
premium rules & Gini score \\
\hline\hline
global mean model $\mu_{\q}$ & 0.0000 \\
premium rule PIN & 0.3568  \\
premium rule GBM & 0.3668 \\
\hline
\end{tabular}}
\end{center}
\caption{MTPL Gini scores assessing the risk rankings of the PIN and the GBM unit premium rankings.}
\label{Gini score table MTPL}
\end{table}

\section{Summary}

The main goal of this manuscript was to present graphical and quantitative model validation tools. We presented these in a proper out-of-sample model validation analysis. We emphasize the following points:
\begin{itemize}
\item Actuarial statistical modeling considers exposure-weighted quantities to make losses and premiums comparable across insurance policies with different exposures. Exposure-scaled quantities generally still depend on that exposure, i.e., by an exposure scaling one does not get rid of the exposure-dependence in the scaled responses (unit losses). For example, a mean-independence of the scaled response requires further assumptions. Such assumptions are made, for instance, when working within the exponential dispersion family. However, such assumptions may not generally be satisfied in real-world applications.
\item To properly account for the dependence between the exposure and the exposure-scaled losses, it is important to work under the exposure-weighted distribution. Otherwise one may obtain systematically biased forecasts. This requires a change of measure from the classical policy-weighted population distribution to the exposure-weighted distribution.
\item There are different notions of calibration. Generally, the exposure-weighted calibration, given the unit premium, is the correct calibration view in actuarial pricing. It is based on the available information at contract inception. The available information often only includes the unit premium, but not the exposure, because early termination and suspension of contracts makes the exposure only an ex-post available variable. Moreover, this calibration view accounts for the dependence between the unit losses and the exposure, and it controls (mitigates) systematic cross-subsidy between different unit premium classes.
\item Good forecast models perform well in {\it calibration}, {\it discrimination} ({\it resolution}) and {\it risk ranking}:
\begin{itemize}
\item Calibration can be assessed by studying calibration plots, actual-vs-predicted plots and lift charts. The sample version of Murphy's decomposition (based on isotonic regression) gives a quantitative measure of calibration. A critical point of this sample version is that the isotonic regression step uses the test data, and the subsequent analysis is not truly out-of-sample leading to biases. This critical point requires future research.
  \item Discrimination can be assessed by the (double) lift chart, the Murphy diagram and the paired iterative Bregman score aggregation.
The Bregman score gives a quantitative tool to assess the overall predictive accuracy, and the sample version of Murphy's decomposition gives a discrimination and resolution statistics.
\item Risk ranking can be assessed by the cumulative accuracy profile yielding the rank-based  Gini score.
\end{itemize}
\item Forecast dominance is usually a too strong condition for model selection. The Murphy diagram based on elementary losses may help us to do an informed model selection because it illustrates how the Bregman score is composed for different convex generators. The selection of the specific Bregman score is often done within the Patton family for actuarial pricing problems.

\end{itemize}  

Our outline has been based on graphical tools and point estimates. Naturally, the next step is to derive confidence bounds for these point estimates to perform proper statistical testing. We presented two paired bootstrap analyses for the Bregman score difference and the Gini score difference in the applied motor insurance example. Naturally, much more remains to be done. Many questions and tools are still active research problems, we mention the calibration tests of Denuit et al.~\cite{DenuitEtAl2024}, Gatti \cite{Gatti} or Delong--W\"uthrich \cite{DW2}. A critical issue in many of these developments is the missing accuracy in the isotonic regression step, methods that attempt to address this limitation consider, e.g., a boosting step; see Gatti \cite{Gatti}. Similar things can be said for discrimination testing, and for the Gini score we mentioned the asymptotic results of Frees et al.~\cite{Frees1} and Brauer et al.~\cite{BrauerMenzel}.

\bigskip

{\small 
\renewcommand{\baselinestretch}{.51}
}

\newpage

\appendix

\section{Mathematical proofs}

\bigskip

{\Beweis 
{\bf Proof of Lemma \ref{Pythagoras}.}
Using ${\cal A}$-measurability and the functional form of the Bregman loss
\begin{equation*}
\E_{\q}\left[ L_{\varphi}(Y,X)\mid{\cal A}\right]
=
\E_{\q}\left[\varphi(Y)\mid{\cal A}\right]-\varphi(X)-\varphi'(X)(m_{\q}({\cal A})-X).
\end{equation*}
The last term vanishes for $X=m_{\q}({\cal A})$ which gives us
\begin{equation*}
\E_{\q}\left[ L_{\varphi}(Y,m_{\q}({\cal A}))\mid{\cal A}\right]
=
\E_{\q}\left[\varphi(Y)\mid{\cal A}\right]-\varphi(m_{\q}({\cal A})).
\end{equation*}
Subtracting the two identities proves the claim.
\EndProof}

\bigskip

{\Beweis
{\bf Proof of Corollary \ref{cor Murphy}.} We take the expectation over \eqref{conditional Bregman} providing
\begin{eqnarray*}
 \E_{\q}\left[ L_{\varphi}(Y,\Pi)\right]
  &=&
  \E_{\q}\left[ L_{\varphi}(Y,m_{\q}(\Pi))\right]
  + \E_{\q}\left[L_{\varphi}(m_{\q}(\Pi),\Pi)\right]
\\  &=&\E_{\q}[L_\varphi(Y, \mu_{\q})]
+\left( \E_{\q}\left[ L_{\varphi}(Y,m_{\q}(\Pi))\right]- \E_{\q}[L_\varphi(Y, \mu_{\q})]\right)
  + \E_{\q}\left[L_{\varphi}(m_{\q}(\Pi),\Pi)\right].
\end{eqnarray*}
The round bracket can again be evaluated with \eqref{eq:conditional-pythagorean} by selecting $X=\mu_{\q}$. This proves the claim.
\EndProof}

\bigskip

\section{LightGBM code}

\begin{lstlisting}[float=h!,frame=tb,caption={LightGBM hyper-parameter specification.},label=LightGBMRCode]
library(lightgbm)

# Data interface of LightGBM
dtrain <- lgb.Dataset(
  X_train,
  label = y_train,
  weight = learn$Exposure,
  params = list(feature_pre_filter = FALSE)
)

# Parameters
params <- list(
  learning_rate = 0.05,
  objective = "poisson",
  metric = "poisson",
  num_leaves = 7,
  min_data_in_leaf = 50,
  min_sum_hessian_in_leaf = 0.001,
  colsample_bynode = 0.8,
  bagging_fraction = 0.8,
  lambda_l1 = 3,
  lambda_l2 = 5,
  num_threads = 7
)

fit_lgb <- lgb.train(params = params, data = dtrain, nrounds = 3000)
\end{lstlisting}

\newpage

\section{Miscalibration and resolution plots}

\begin{figure}[htb!]
\begin{center}
\begin{minipage}[t]{0.45\textwidth}
\begin{center}
\includegraphics[width=\textwidth]{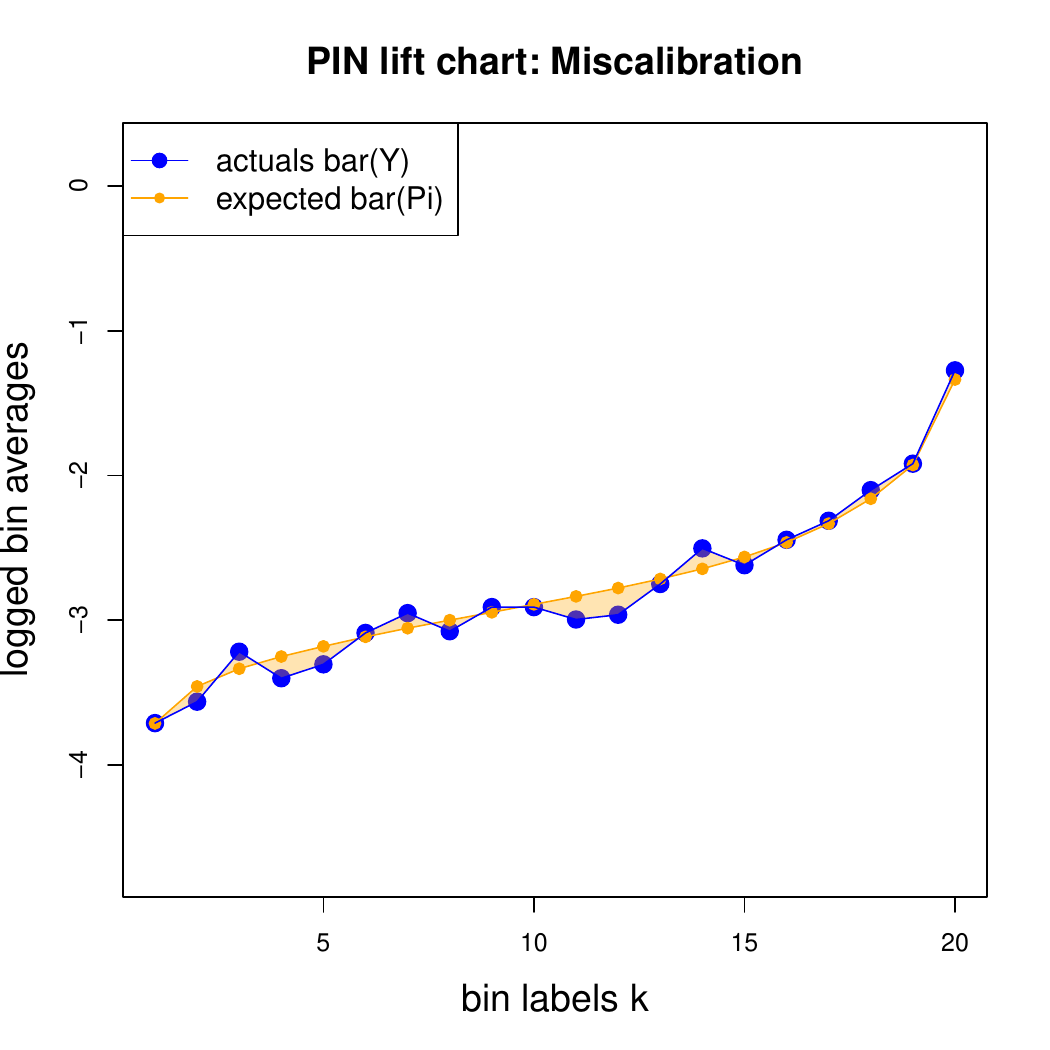}
\end{center}
\end{minipage}
\begin{minipage}[t]{0.45\textwidth}
\begin{center}
\includegraphics[width=\textwidth]{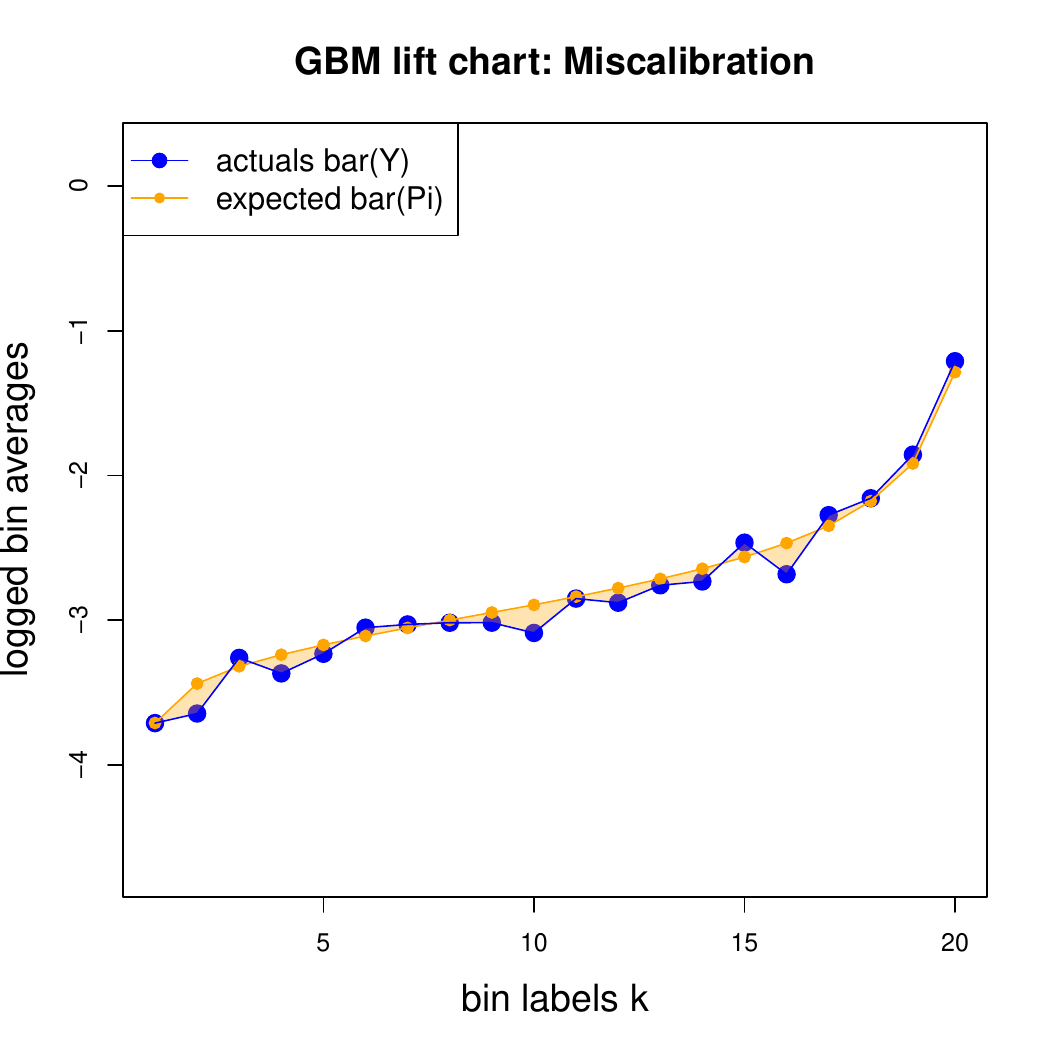}
\end{center}
\end{minipage}
\begin{minipage}[t]{0.45\textwidth}
\begin{center}
\includegraphics[width=\textwidth]{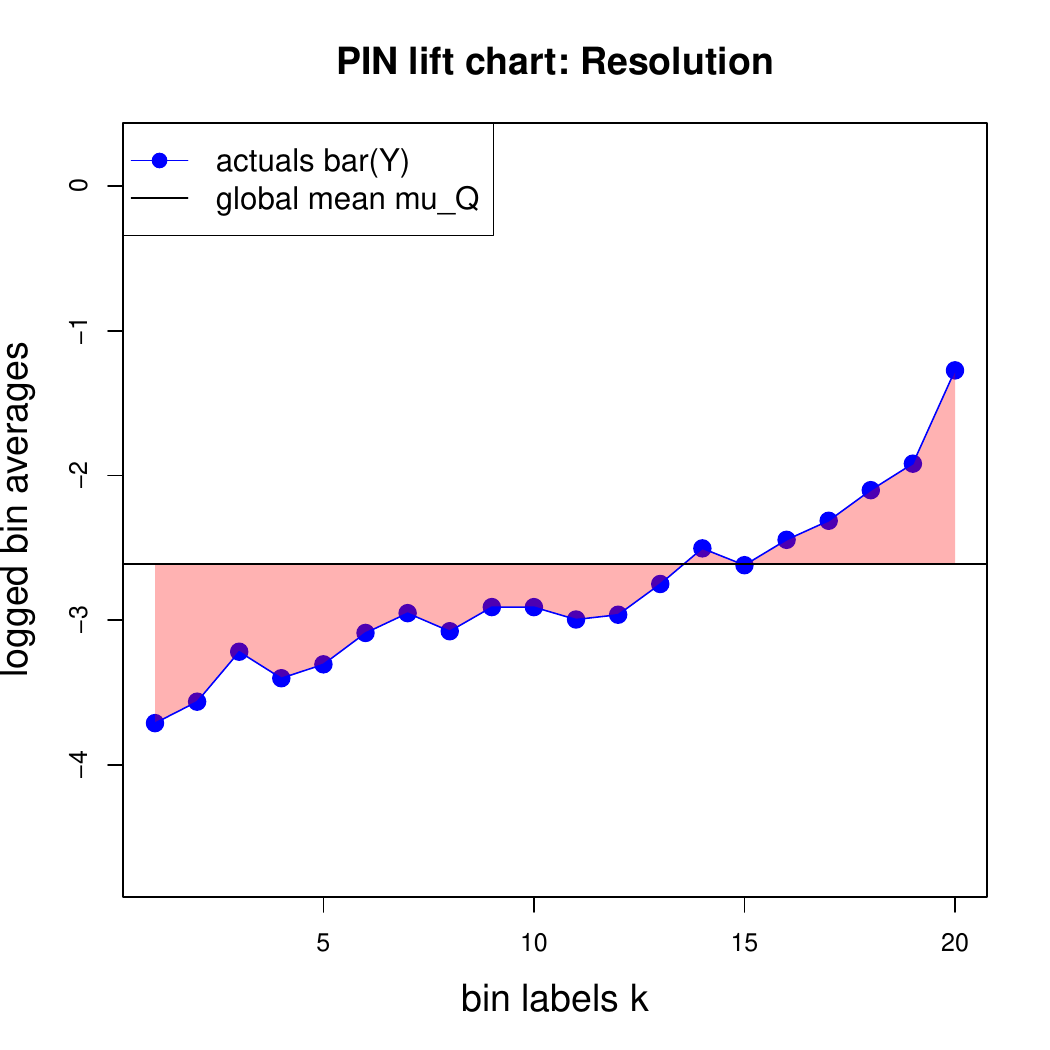}
\end{center}
\end{minipage}
\begin{minipage}[t]{0.45\textwidth}
\begin{center}
\includegraphics[width=\textwidth]{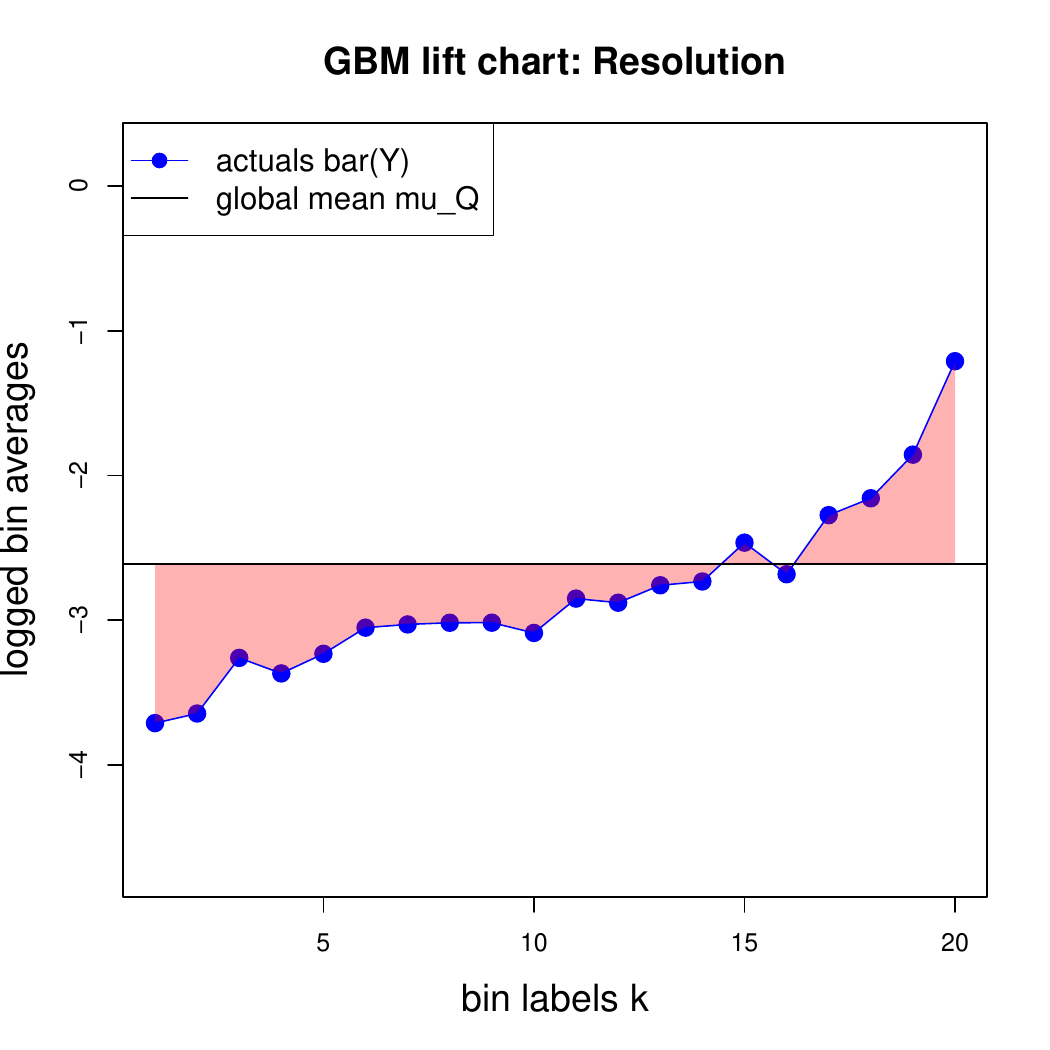}
\end{center}
\end{minipage}
\end{center}
\vspace{-.7cm}
\caption{MTPL lift charts of Figure \ref{MTPL lift chart plot}, revisited: visualizations of (top) estimated miscalibration of PIN and GBM, (bottom) estimated resolution of PIN and GBM using ventile binning $K=20$ and the bin averages $\overline{Y}_k$ for the recalibration estimate $\widehat{m}_{\q}(B)$, $B=k$, we also refer to Figure \ref{Resolution plot}; these graphs consider the raw differences before entering the Bregman score.}
\label{MTPL resolution and miscalibration plots}
\end{figure}

\end{document}